\def\csname 
\newcommand{\our}{HyRA}
\newcommand*\rot{\rotatebox{90}}
\renewcommand*{\backrefalt}[4]{%
    \ifcase #1 \footnotesize{(Not cited.)}%
    \or        \footnotesize{(Cited on page~#2.)}%
    \else      \footnotesize{(Cited on pages~#2.)}%
    \fi}
\definecolor{customyellow}{HTML}{fedf8a}
\newtcolorbox{contribbox}{
  colback=gray!5,
  colframe=gray!40,
  coltitle=black,
  fonttitle=\bfseries,
  boxrule=0.5pt,
  arc=2pt,
  left=6pt,
  right=6pt,
  top=6pt,
  bottom=6pt
}
\newtheorem{assumption}{Assumption}
\newtheorem{lemma}{Lemma}
\newtheorem{theorem}{Theorem}
\newtheorem{proposition}{Proposition}
\theoremstyle{plain}
\newcommand{\Sbm}{{\bm S}}
\newcommand{\Xbm}{{\bm X}}
\newcommand{\Ybm}{\boldsymbol{Y}}
\newcommand{\Pbm}{{\bm P}}
\newcommand{\Wbm}{\bm{W}}
\newcommand{\Abm}{\boldsymbol{A}}
\newcommand{\hbm}{{\bm h}}
\newcommand{\bbE}{\mathbb{E}}
\newcommand{\var}{\mathrm{Var}}
\newcommand{\Mbm}{\boldsymbol{M}}
\newcommand{\Rbm}{\boldsymbol{R}}
\newcommand{\Bbm}{\boldsymbol{B}}
\newcommand{\Gbm}{\boldsymbol{G}}
\definecolor{indigo}{RGB}{75, 0, 130}          
\def\eqref#1{equation~\ref{#1}}
\def\1{\bm{1}}
\def\vx{{\bm{x}}}
\def\mA{{\bm{A}}}
\def\mB{{\bm{B}}}
\def\mE{{\bm{E}}}
\def\mP{{\bm{P}}}
\def\mW{{\bm{W}}}
\def\mX{{\bm{X}}}
\DeclareMathAlphabet{\mathsfit}{\encodingdefault}{\sfdefault}{m}{sl}
\SetMathAlphabet{\mathsfit}{bold}{\encodingdefault}{\sfdefault}{bx}{n}
\newcommand{\softmax}{\mathrm{softmax}}
\begin{document}

\begin{center}

{\bf{\LARGE{Hypernetwork-Driven Low-Rank Adaptation Across Attention Heads}}}
  
\vspace*{.2in}
{\large{
\begin{tabular}{cccccc}
Nghiem T. Diep$^{\diamond, \diamondsuit, \star}$ & Dung Le$^{\dagger,\star}$ & Tuan Truong$^{\ddagger,\star}$ \\
Tan Dinh$^{\circ}$ & Huy Nguyen$^{\dagger}$ & Nhat Ho$^{\dagger}$
\end{tabular}
}}

\vspace*{.2in}

\begin{tabular}{cc}
$^{\dagger}$The University of Texas at Austin\\
$^{\diamond}$University of Science, VNU-HCM, Ho Chi Minh City, Vietnam\\
$^{\diamondsuit}$ Vietnam National University, Ho Chi Minh City, Vietnam \\
$^{\ddagger}$ Independent Researcher\\
$^{\circ}$Trivita AI\\
\end{tabular}

\vspace*{.2in}
\today


\begin{abstract}
Parameter-efficient fine-tuning (PEFT) has emerged as a powerful paradigm for adapting large-scale pre-trained models to downstream tasks with minimal additional parameters. Among PEFT methods, Low-Rank Adaptation (LoRA) stands out for its effectiveness by inserting trainable low-rank matrices into weight updates to enable efficient adaptation. However, when applied to multi-head self-attention, existing LoRA-based methods typically fine-tune each attention head independently, overlooking potential interactions and shared structure among heads. To address this limitation, we propose \emph{Hypernetwork-Driven Low-rank Adaptation (\our{})} that employs a hypernetwork to generate joint low-rank matrices for all attention heads within a layer. The shared generator promotes cross-head information sharing, helping low-rank modules avoid the redundant feature learning seen in traditional LoRA methods. Theoretically, our method achieves significantly better sample efficiency compared to standard LoRA. Empirically, we evaluate \our{} on a comprehensive suite of language and vision benchmarks. Our approach consistently outperforms existing parameter-efficient fine-tuning (PEFT) baselines across a wide range of tasks. Notably, in low-data regimes, HyRA achieves substantial improvements over LoRA, underscoring its practical sample efficiency and effectiveness in data-scarce scenarios.
\end{abstract}
\end{center}
\let\thefootnote\relax\footnotetext{$\star$ Equal contribution.}

\section{Introduction}

Fine-tuning large pre-trained models has become the de facto approach for adapting foundation models to downstream tasks. However, the sheer size of modern models makes full fine-tuning computationally expensive and storage-intensive, as it requires updating and storing billions of parameters for each task. To address this challenge, parameter-efficient fine-tuning (PEFT) methods have emerged as a compelling alternative. Instead of updating all parameters, PEFT techniques introduce a small number of additional task-specific parameters while keeping most of the pre-trained weights frozen. This drastically reduces the computational and memory cost of fine-tuning while retaining high task performance. Representative PEFT methods include adapter-tuning \cite{houlsby2019adapter}, prefix-tuning \cite{li2021prefix}, and prompt-based approaches such as P-Tuning v2 \cite{liu2021ptuningv2} and Compacter \cite{mahabadi2021compacter}. Together, these methods have been widely adopted across domains such as natural language processing \cite{houlsby2019adapter} and computer vision \cite{jia2022vpt}, enabling practical adaptation of large-scale models in resource-constrained settings.

\begin{figure}
    \centering
    \includegraphics[width=\linewidth]{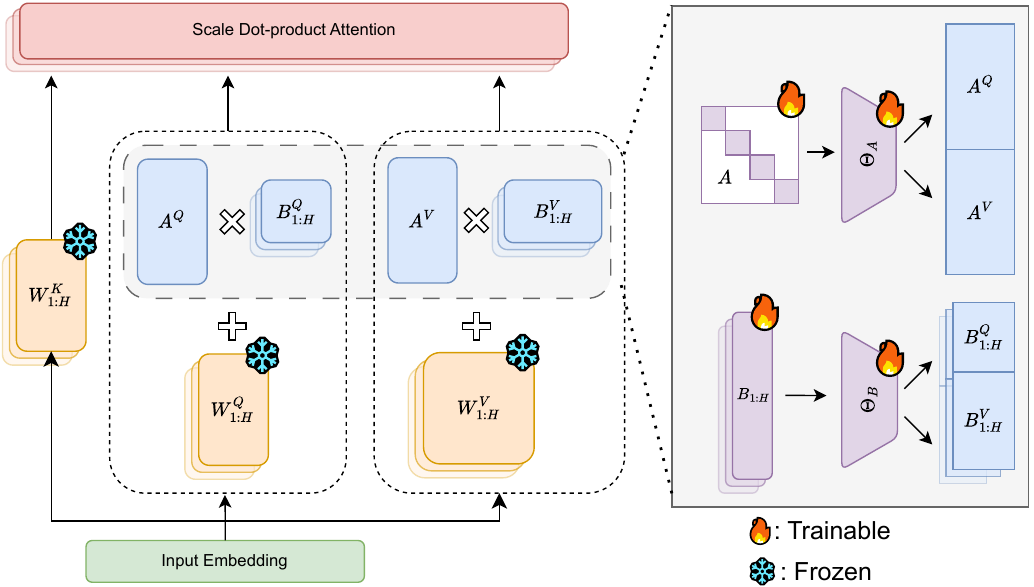}
    \caption{Illustration of \our{} in Multi-head Self-attention.}
\vspace*{-\baselineskip}
    \label{fig: method}
\end{figure}

Among PEFT approaches, Low-rank Adaptation (LoRA) \cite{hu2022lora} has gained particular prominence. LoRA assumes that weight updates lie in a low-rank subspace during fine-tuning, and thus approximates these updates by decomposing them into the product of two low-rank matrices. By injecting these low-rank modules into pre-trained layers (e.g., attention or feedforward layers), LoRA enables models to efficiently capture task-specific information while introducing only a negligible fraction of new parameters. Its efficiency and strong empirical performance have established LoRA as a standard baseline for PEFT in both research and practical deployment. In natural language processing, it is used for domain adaptation, instruction tuning, summarization, and generation tasks \cite{Mao2025LoRASurvey}. Recent extensions, such as MTLoRA, have enhanced its applications in multi-task learning and adaptive rank allocation for more efficient transfer learning in foundation models \cite{agiza2024mtlora}. Furthermore, LoRA-based frameworks are being applied in federated learning, speech synthesis, and reinforcement learning scenarios to enable scalable model customization in distributed or resource-constrained environments \cite{yang2025lowrankadaptationfoundationmodels}.

While LoRA has emerged as one of the most widely adopted PEFT methods due to its simplicity and efficiency, it has several limitations in the multi-head self-attention setting. First, LoRA learns independent low-rank adapters for each attention head and projection, without any mechanism for coordination or parameter sharing. This can lead to redundancy across heads, as prior work has shown that many attention heads capture overlapping or similar functions \cite{voita-etal-2019-analyzing, NEURIPS2019_2c601ad9}. Second, the lack of shared structure implies that each head must rely solely on its own gradient signals, which can reduce sample efficiency in low-data fine-tuning scenarios. In this work, we answer the following research question: 

\begin{tcolorbox}[before skip=0.2cm, after skip=0.2cm, boxsep=0.0cm, middle=0.1cm, top=0.1cm, bottom=0.1cm]
\textit{\textbf{(Q)}} \textit{Can we move beyond fully independent adapters and achieve a method that is both parameter-efficient and capable of meaningful information sharing across heads?}
\end{tcolorbox}

To address this question, we first generalize prior works that investigate the theoretical connections between Mixture of Experts and single-head attention \cite{le2025revisiting, truong2025replora} to the setting of multi-head self-attention. We show that it can naturally be reinterpreted as a Hierarchical Mixture-of-Experts (HMoE). Within this framework, applying LoRA to multi-head self-attention corresponds to refining both the experts and their scoring functions via low-rank matrices. Building on this insight of the connections between MHA and HMoE, we propose Hyper-shared Low-Rank Adaptation (\our{}), a novel method that explicitly promotes information sharing across attention heads. Instead of directly and independently learning separate low-rank adapters for each head, \our{} employs a joint hypernetwork to generate these adapters. By implementing shared information among the attention heads, \our{} encourages the experts in the aforementioned HMoE-MHA framework to complement each other by exchanging information, either on the same branch, or across different branches. Moreover, the shared hypernetwork introduces structured coupling: heads are no longer fully independent but instead benefit from common parameterization, while still retaining flexibility through specialized transformations. This design acts as a form of regularization, mitigating redundancy across heads and enabling more coherent and data-efficient adaptation. Our theoretical analysis demonstrates that eliminating such redundancy \textit{improves sample efficiency}, and our empirical results corroborate this finding across diverse domains in vision and language tasks where \our{} consistently outperforms several PEFT baselines, including LoRA.

To evaluate \our{}, we conduct both theoretical and experimental studies. Theoretically, \our{} enhances the sample efficiency for the low-rank matrices from an \textit{exponential} rate to a \textit{polynomial} rate. Empirically, we benchmark across vision and language tasks, where \our{} consistently outperforms strong PEFT baselines, including LoRA.

\begin{contribbox}
\paragraph{Contributions.} Our main contributions are summarized as follows:
\begin{itemize}
    \item We establish a theoretical link between applying LoRA to multi-head self-attention and HMoE. Building on this insight, we propose \our{}, a hypernetwork-based PEFT method that encourages information sharing across attention heads.
    \item We theoretically demonstrate that \our{}’s parameter-sharing mechanism improves sample efficiency from an \textit{exponential} to a \textit{polynomial} rate.
    \item We empirically show that \our{} substantially \textit{improves sample efficiency} and \textit{achieves superior performance }across diverse tasks, while remaining \textit{parameter-efficient} compared to prior PEFT methods.
\end{itemize}
\end{contribbox}

\section{Background}
\label{sec:background}
\textbf{Notation.} 
For two positive sequences $(a_n)_{n \geq 1}$ and $(b_n)_{n \geq 1}$, if there exists a constant $C > 0$ such that $a_n \leq C b_n$ for all $n$, we denote $a_n = \mathcal{O}(b_n)$ 
or $a_n \lesssim b_n$. 
We say that $a_n = \mathcal{O}_P(b_n)$ if their quotient $a_n / b_n$ is bounded in probability,
while the notation $a_n = \widetilde{\mathcal{O}}_P(b_n)$ stands for 
$a_n = \mathcal{O}_P(b_n \log^c(b_n))$, for some $c > 0$.
We denote Euclidean norm of $u$ by $\|u\|$, here $|S|$ represents the cardinality of a set $S$. 
For any positive integer  $n \in \mathbb{N}$, we denote $[n] = \{1,2,\ldots,n\}$. 
We write 
$u^\alpha = u_1^{\alpha_1} u_2^{\alpha_2} \cdots u_d^{\alpha_d}$, 
$|\alpha| = \alpha_1 + \alpha_2 + \cdots + \alpha_d$, 
and $\alpha! = \alpha_1! \alpha_2! \cdots \alpha_d!$ for a multi-index $\alpha = (\alpha_1, \alpha_2, \ldots, \alpha_d) \in \mathbb{N}^d$. 
Finally, given a vector $u \in \mathbb{R}^{d}$, both notations 
$u = (u^{(1)}, u^{(2)}, \ldots, u^{(d)})$ and $u = (u_1, u_2, \ldots, u_d)$ are employed interchangeably. 

\textbf{Multi-head Self-attention (MHA).}
The Transformer architecture \cite{vaswani2017attention, vit} is built upon the MHA mechanism, which enables the model to capture dependencies across different positions of a sequence in parallel. In particular, let $\mX = [\boldsymbol{x}_1, \cdots, \boldsymbol{x}_N]^T \in \mathbb{R}^{N \times d}$  denote an input sequence of embeddings, where $N$ is the sequence length and $d$ is the embedding dimension. Then, an MHA layer computes its output as 
\begin{align}
\label{eqn:MHA_definition}
    \mathrm{MHA}(\mX_Q, \mX_K, \mX_V) = \mathrm{Concat}(\boldsymbol{h}_1, \cdots, \boldsymbol{h}_H)\mW_O
\end{align} 
where $H$ is the number of attention heads, and each head $\boldsymbol{h}_i$, for $i\in\{1,2,\ldots,H\}$, is defined as
\begin{align}
\label{eqn:attention_definition}
    \boldsymbol{h}_i = \mathrm{Attention}(\mX\mW_{Q,i}, \mX\mW_{K,i}, \mX\mW_{V,i})= \softmax\left(\frac{\mX\mW_{Q,i}\mW_{K,i}^\top\mX^{\top}}{\sqrt{d_k}}\right)\mX\mW_{V,i}.
\end{align}
Above, the MHA layer projects the input sequence $\mX$ into queries $\mX_Q = (\mX\mW_{Q,1}, \cdots, \mX\mW_{Q,H})$, keys $\mX_K = (\mX\mW_{K,1}, \cdots, \mX\mW_{K,H})$, and values $\mX_V = (\mX\mW_{V,1}, \cdots, \mX\mW_{V,H})$ where $\mW_{Q,i}, \mW_{K,i} \in \mathbb{R}^{d \times d_k}$, $\mW_{V,i} \in \mathbb{R}^{d \times d_v}$ are projection matrices of the $i^{\text{th}}$ head and $\mW_O \in \mathbb{R}^{Hd_v\times d}$ is the output projection. The dimensions are typically chosen such that $d_k = d_v = \frac{d}{H}$. 

\textbf{Multi-head Low-rank Adaptation (MH-LoRA)}
Fine-tuning a large pre-trained transformer model for a downstream task is computationally expensive as it requires updating a massive number of parameters. LoRA \cite{hu2022lora} addresses this issue by updating the pre-trained weights with the product of two low-rank matrices. These ideas come from the observation that weight updates typically lie in a low-dimensional subspace, that is, they have low "intrinsic rank" during fine-tuning.
Mathematically, given a pre-trained weight matrix $\mW_0 \in \mathbb{R}^{m\times n}$, LoRA parameterizes the weight update $\Delta \mW$ as the product of two low-rank matrices, that is, $\Delta \mW = \mB\mA$ where $\mB \in \mathbb{R}^{m\times r}$,  $\mA \in \mathbb{R}^{r\times n}$ and $r \ll \min(m,n)$. For an input sequence $\mX$, the corresponding output is \begin{align*}
    \widehat{\boldsymbol{y}} =  \mW_0\mX + \mB\mA \mX.
\end{align*}  
During training, only matrices $\mA$ and $\mB$ are updated while the pre-trained weights $\mW_0$ remain frozen. In practice, LoRA is applied to fine-tune projection matrices in the attention layers and we refer to it as \emph{Multi-head LoRA}. Denote $\mA = [\mA_Q, \mA_V]$ and $\mB = [\mB_Q, \mB_V]$, then the output of multi-head LoRA is given by
\begin{align*}
    f_{\mathrm{MH-LoRA}}(\mX; \mA, \mB) = \mathrm{Concat}(\tilde{\boldsymbol{h}}_1, \cdots, \tilde{\boldsymbol{h}}_H)\mW_O,
\end{align*}
where for each head $i\in\{1,2,\ldots,H\}$, \begin{align*}
    \tilde{\boldsymbol{h}}_i = \mathrm{Attention}(\mX\mW_{Q,i} + \mX\mB_{Q,i}\mA_{Q,i}, \mX\mW_{K,i}, \mX\mW_{V,i} + \mX\mB_{V,i}\mA_{V,i}).
\end{align*}
Here, $\mA_{Q,i} \in \mathbb{R}^{r\times d_k}, \mB_{Q,i} \in \mathbb{R}^{d\times r}, \mA_{V,i} \in \mathbb{R}^{r\times d_v}$, and $\mB_{V,i} \in \mathbb{R}^{d\times r}$ are the trainable low-rank matrices for the $i^{th}$ head, for $i\in\{1,2,\ldots,H\}$. 

\textbf{(Hierarchical) Mixture of Experts (HMoE)}
MoE framework \cite{Jacob_Jordan-1991} is a model that decomposes a learning task into several sub-models, each specializing in a particular input region or representation pattern. Formally, an MoE model consists of $N$ expert functions $f_i: \mathbb{R}^d \rightarrow \mathbb{R}^{d_v}$ for $i \in [N]$ and a gating function $G: \mathbb{R}^d \rightarrow \mathbb{R}^N$ that dynamically assigns input-dependent weights to the experts. The model output is given by $\widehat{\boldsymbol{y}} = \sum_{i=1}^N G(\mX)_i\cdot f_i(\mX),$
where $G(\mX) = \softmax(\{s_i(\mX)\}_{i=1}^{N})$, and each $s_i(\mX)$ is a similarity score between the input and the $i^{th}$ expert.

HMoE \cite{jordan1994hierarchical} extends the standard MoE by organizing experts in a tree-structured hierarchy. Instead of a single gating function, the HMoE model employs multiple gating nodes arranged in levels. 
For example, let us consider a $2$-layer HMoE model. The first level employs gating functions $G_i(\mX)$, while the second level employs conditional gating function $G_{j|i}(\mX)$ associated with experts $f_{j|i}(\mX)$. The overall prediction of the HMoE model is given by
\begin{align*}
    \widehat{\boldsymbol{y}} = \sum_i G_1(\mX)_i \sum_j G_2(\mX)_{j|i}f_{j|i}(\mX).
\end{align*}
\section{Theoretical Developments}
\label{section: theory}

In this section, we first establish a relation between multi-head LoRA and HMoE models in Section~\ref{section: lora meets hmoe}. From the HMoE perspective, we proceed to analyze the sample complexity of estimating low-rank matrices in the multi-head LoRA without and with the shared structure across attention heads in Sections~\ref{sec:without_shared} and \ref{sec:with_shared}, respectively. Our goal is to show that employing the shared structure yields a significant gain in the sample efficiency of estimating low-rank matrices. 



\subsection{Multi-head LoRA meets Hierarchical Mixture of Experts}
\label{section: lora meets hmoe}

In the sequel, we aim to show that multi-head LoRA can be interpreted as an HMoE model. Now, let $\tilde{\boldsymbol{x}} = \mathrm{Vec}(\mX) = (\boldsymbol{x}_1^\top,\ldots,\boldsymbol{x}_N^\top)^\top \in \mathbb{R}^{Nd}$ denote the vectorization of $\mX$. Denote $\mW_O = ((\mW_{O,1})^\top,(\mW_{O,2})^\top,\ldots,(\mW_{O,H})^\top)^\top$, then from the definition of multi-head self-attention matrix in Eq.~(\ref{eqn:attention_definition}), we have  
\begin{align*}
    \mathrm{MHA}(\mX_Q, \mX_K, \mX_V) = \sum_{h=1}^H \hbm_h \mW_{O,h} = \sum_{h=1}^H \mathrm{softmax}\left(\dfrac{\mX\boldsymbol{W}_{Q,h}(\boldsymbol{W}_{K,h})^{\top} \mX^\top}{\sqrt{d_v}}\right)\mX\mW_{V,h}\mW_{O,h}.
\end{align*}
Let $\boldsymbol{M}_h:= \frac{\boldsymbol{W}_{Q,h}(\boldsymbol{W}_{V,h})^\top}{\sqrt{d_v}}$, and $\boldsymbol{J}_i := \boldsymbol{e}_i^\top \otimes \boldsymbol{I}_d$, here $\otimes$ stands for Kronecker product
\[
\boldsymbol{J}_i = e_i^\top \otimes \boldsymbol{I}_d
= \begin{bmatrix}
\mathbf{0}_{d\times d} & \cdots & \mathbf{0}_{d\times d} & \boldsymbol{I}_d & \mathbf{0}_{d\times d} & \cdots & \mathbf{0}_{d\times d}
\end{bmatrix}
\in \mathbb{R}^{d \times Nd},
\]

then, $\boldsymbol{J}_i$ can extract the $i^{th}$ row of a matrix $\boldsymbol{J}_i
\tilde{\boldsymbol{x}} = \boldsymbol{x}_i$.  Let $\boldsymbol{B}_{ij}^h = \boldsymbol{J}_i^\top \boldsymbol{M}_h\boldsymbol{J}_j$, and $\boldsymbol{E}_j^h = \boldsymbol{J}_j^\top \boldsymbol{W}_{V,h}$, then the $(i,j)$-entry can be expressed as 
\begin{equation*}
    [\Xbm \Mbm_h\Xbm^\top]_{i,j} = \tilde{\vx}_i \Mbm_h \tilde{\vx}_j = \tilde{\vx}_i \Mbm_h \tilde{\vx}_j =  \tilde{\vx}^\top\boldsymbol{J}_i^\top \boldsymbol{M}_h\boldsymbol{J}_j\tilde{\vx}= \tilde{\vx}^\top\mB^h_{ij}\tilde{\vx}.
\end{equation*}

As a result, the value at the $i^{th}$ row is given by 
\begin{equation*}
    [\mathrm{MHA}(\mX_Q, \mX_K, \mX_V)]_{i} = \sum_{h=1}^H\sum_{j=1}^N\dfrac{\exp(\tilde{\vx}^\top\mB_{ij}^h\tilde{\vx})}{\sum_{l=1}^N\exp(\tilde{\vx}^\top\mB_{il}^h\tilde{\vx})}\cdot \tilde{\vx}^\top \mE_j^h\boldsymbol{W}_{O,h}.
\end{equation*}

Denote $s^h_{i,j}:\mathbb{R}^{Nd} \to \mathbb{R}$ be the score functions and $f_j^h: \mathbb{R}^{Nd} \to \mathbb{R}^{d_v}$ be the expert functions
\begin{align*}
    s_{i,j}^h(\tilde{\boldsymbol{x}}) = \tilde{\boldsymbol{x}}^\top \boldsymbol{B}_{ij}^h\tilde{\boldsymbol{x}} = \dfrac{\boldsymbol{x}_i^\top \boldsymbol{W}_{Q,k}(\boldsymbol{W}_{K,k})^\top \boldsymbol{x}_i}{\sqrt{d_v}}, \ f_j^h(\tilde{\boldsymbol{x}}) = \tilde{\vx}^\top \mE_j^h.
\end{align*}
Then, the output of the $i^{th}$ row in the MHA can be formulated as a HMoE: 
\begin{equation*}
    [\mathrm{MHA}(\mX_Q, \mX_K, \mX_V)]_{i} = \sum_{h=1}^H\sum_{j=1}^N \dfrac{\exp(s_{ij}(\tilde{\vx}))}{\sum_{k=1}^N \exp(s_{ik}(\tilde{\vx}))}f_j^h(\tilde{\boldsymbol{x}})\boldsymbol{W}_{O,h}.
\end{equation*}

Applying LoRA allows the experts and the score function to be refined by the low-rank updates: 
\begin{align}
\label{eqn:routing_function}
    \tilde{f}^h_j(\tilde{\vx}) &= \tilde{\vx}^\top \boldsymbol{J}_j^\top (\mW_{V,h} + \underline{\mB_{V,h}}\underline{\mA_{V,h}}) ,\\
\label{eqn:gating_function}
    \tilde{s}^h_{i,j}(\vx) &= \dfrac{\tilde{\boldsymbol{x}}^\top \boldsymbol{P}_i^\top(\boldsymbol{W}_{Q,h} + \underline{\mB_{Q,h}}\underline{\mA_{Q,h}})(\boldsymbol{W}_{K, h})^\top\boldsymbol{P}_i\tilde{\boldsymbol{x}}}{\sqrt{d_v}} = \dfrac{\boldsymbol{x}_i^\top (\boldsymbol{W}_{Q,h} + \underline{\mB_{Q,h}}\underline{\mA_{Q,h}})(\boldsymbol{W}_{K,h})^\top \boldsymbol{x}_i}{\sqrt{d_v}},
\end{align}
where $h \in [H]$ and $j \in [N]$. In this case, the $i^{th}$ row of multi-head LoRA can be written as 
\begin{equation*}
    [f_{\mathrm{MH-LoRA}}(\mX, \mA, \mB)]_{i} = \sum_{h=1}^H\sum_{j=1}^N \dfrac{\exp(\tilde{s}_{ij}(\vx))}{\sum_{k=1}^N \exp(\tilde{s}_{ik}(\vx))}\tilde{f}_j^h(\tilde{\boldsymbol{x}})\boldsymbol{W}_{O,h}. 
\end{equation*}
This equation formalizes the relationship between the multi-head LoRA framework and the HMoE model, a connection that plays a central role in our subsequent theoretical analysis.

\subsection{Without Shared Structure}
\label{sec:without_shared}
From the HMoE perspective, we will determine the sample complexity of estimating low-rank matrices in multi-head LoRA without the shared structure across attention heads in this section. For that purpose, let us present a regression framework that has been adopted by several MoE-based PEFT works for studying the asymptotic properties of their models, including prefix tuning \cite{le2025revisiting} and LLaMA-adapter \cite{diep2025on}.

\textbf{Problem setup.} Let $(\mX_1,\Ybm_1), (\mX_2,\Ybm_2),\ldots,(\mX_n,\Ybm_n)\in\mathbb{R}^{\bar{d}} \times\mathbb{R}^{\bar{d}}$ be i.i.d. samples of size $n$ generated from the following regression model:
\begin{align}
    \Ybm_i=g_{G_*}(\mX_i)+\varepsilon_i, \quad i=1,2,\ldots,n, 
    \label{eq:regression_model}
\end{align}
where we assume that the inputs $\mX_{1}, \mX_{2}, \ldots, \mX_{n}$ are i.i.d. samples from some probability distribution $\mu$ with bounded support $\mathcal{X}$. Meanwhile, $\varepsilon_1,\varepsilon_2,\ldots,\varepsilon_n$ are independent Gaussian noise variables such that $\bbE[{\varepsilon_{i}}|\mX_i] = 0$ and $\var[\varepsilon_{i}|\mX_i] = \sigma^2I_{\bar{d}}$, for all $i \in [n]$. Meanwhile, the HMoE-based regression function $g_{G_*}$ consists of $H$ expert groups, each of which has $L$ experts:
\begin{align}
\label{eqn:non_share_regression_function}
 \hspace{-0.25 em}   &g_{G_{*}}(\mX)  := \sum_{h=1}^{H}\pi^*_h\sum_{j=1}^{L} \frac{\exp(\mX^{\top} (\Pbm^0_{Q,h}+\Bbm_{Q,h,j}^*\Abm_{Q,h,j}^*)\Pbm^{0}_{K,h}\mX+c^*_j)}{D^h_{g,*}(\mX)}\cdot(\Pbm^0_{V,h}+\Bbm_{V,h,j}^*
\Abm_{V,h,j}^*)\mX, 
\end{align}
where we denote $D^h_{g,*}(\mX) := \sum_{j'=1}^{L}\exp(\mX^{\top}(\Pbm^0_{Q,h}+\Bbm_{Q,h,j'}^*\Abm_{Q,h,j'}^*)\Pbm^{0}_{K,h}\mX+c^*_{j'})$, while $G_{*} := \sum_{h=1}^H\pi^*_h\sum_{j'=1}^{L} \exp(c^*_{j'}) \delta_{(\Bbm_{Q,h,j'}^*,\Abm_{Q,h,j'}^*,\Bbm_{V,h,j'}^*,\Abm_{V,h,j'}^*)}$ represents a \emph{mixing measure}, that is, a combination of Dirac measures $\delta$, associated with unknown parameters 

\begin{align*}
(\pi^*_h,c_{j'}^*,\Bbm_{Q,h,j'}^*,\Abm_{Q,h,j'}^*,\Bbm_{V,h,j'}^*,\Abm_{V,h,j'}^*)_{j'\in[L],h \in [H]}    
\end{align*}
in the compact parameter space $\widehat{\Theta}\subset[0,1] \times\mathbb{R} \times\mathbb{R}^{\bar{d}\times r}\times\mathbb{R}^{r\times \bar{d}}\times\mathbb{R}^{\bar{d}\times r}\times\mathbb{R}^{r\times \bar{d}}$. In addition, the matrices $\Pbm^0_{Q,h} \in \mathbb{R}^{\bar{d} \times \bar{d}}$, $\Pbm_{K,h}^{0} \in \mathbb{R}^{\bar{d} \times \bar{d}}$, and $\Pbm_{V,h}^{0} \in \mathbb{R}^{\overline{d} \times \bar{d}}$ are frozen so as to align with the formulations in Eq.~(\ref{eqn:routing_function}) and Eq.~(\ref{eqn:gating_function}).


\textbf{Least squares estimator.} We can estimate low-rank matrices $(\Bbm_{Q,h,j'}^*,\Abm_{Q,h,j'}^*,\Bbm_{V,h,j'}^*,\Abm_{V,h,j'}^*)$ through estimating the ground-truth mixing measure $G_*$. To this end, we employ the least-squares method \cite{vandeGeer-00}, which yields the following estimator: 
\begin{equation*}
    \widehat{G}_n := \arg\min_{G \in \mathcal{G}_{H,L'}(\widehat{\Theta})}\sum_{i=1}^n (\boldsymbol{Y}_i-g_{G}(\mX))^2, 
\end{equation*}
where we have
\begin{align*}
\mathcal{G}_{H,L'}(\widehat{\Theta}) = \{G = \sum_{h=1}^H\pi_h\sum_{j'=1}^{\ell} \exp(c_{j'}) \delta_{(\Bbm_{Q,h,j'},\Abm_{Q,h,j'},\Bbm_{V,h,j'},\Abm_{V,h,j'})}, \\ 1\leq \ell \leq L', (\pi_h, c_{j'},\Bbm_{Q,h,j'},\Abm_{Q,h,j'},\Bbm_{V,h,j'},\Abm_{V,h,j'}) \in \widehat{\Theta}\}   
\end{align*}
denotes the set of all mixing measures whose expert group has at most $L'$ experts. As the true number of experts $L$ is usually unknown in practice, it is natural to fit each expert group by $L'$ experts, where $L'$ is sufficiently large such that $L'>L$.

\paragraph{Voronoi loss.} Consider a mixing measure $G \in \mathcal{G}_{H,L'}(\widehat{\Theta})$. For $h \in [H]$, denote $\tau(h) \in [H]$ be the value such that $|\pi_{\tau(h)}-\pi^*_h| \leq |\pi_{\tau(h)}-\pi^*_{h'}|$ for each $h'\in [H]$. To quantify the discrepancy between two mixing measures, we consider a loss function built upon the concepts of Voronoi cells $\{\mathcal{W}_{j|h} \equiv \mathcal{W}_{j|h}(G):j\in[L'], h \in [H]\}$ generated by the atoms of $G_{*}$ \cite{manole22refined}: 
$\mathcal{W}_{j|h} = \{i \in [L']: \|\boldsymbol{H}_{\tau(h),i}-\boldsymbol{H}^*_{h,j}\| \leq \|\boldsymbol{H}_{\tau(h),i}-\boldsymbol{H}^*_{h,l}\|,\forall l \neq j \}$, where we denote  $\boldsymbol{H} := (\boldsymbol{B}_Q,\boldsymbol{A}_Q, \boldsymbol{B}_V,\boldsymbol{A}_V)$. Then, the Voronoi loss of interest is defined as 
\begin{align}
\label{eqn:D_1r_wo_parameterisation}
    &\mathcal{D}_{1,r}(G,G_{*}) := \sum_{h=1}^H |\pi_{\tau({h})}-\pi_h^*| + 
    \sum_{h=1}^H \pi_{\tau({h})} \sum_{l=1}^L\left|\sum_{i\in \mathcal{W}_{l|h}}\exp(c_i)-\exp(c_{h}^*)\right|\nonumber \\
    & + \sum_{h=1}^{H} \pi_{\tau({h})} \sum_{l=1}^L \sum_{i \in \mathcal{W}_{l|h}} \exp(c_i)(\|\Delta \boldsymbol{B}_{Qh,il}\|^r +  \|\Delta \boldsymbol{A}_{Qh,il}\|^r + \|\Delta \boldsymbol{B}_{Vh,il}\|^r + \|\Delta \boldsymbol{A}_{Vh,il}\|^r)
\end{align}
where $\Delta \boldsymbol{B}_{Qh,il} := \boldsymbol{B}_{Q,\tau(h),i}-\boldsymbol{B}^*_{Q,h,l}$, and 
$\Delta \boldsymbol{A}_{Qh,il}$, $\Delta \boldsymbol{B}_{Vh,il}$, $\Delta \boldsymbol{A}_{Vh,il}$ are defined similarly. 
With these components in place, we are now prepared to analyze the sample complexity of estimating low-rank matrices in multi-head LoRA under the non-shared setting.
\begin{theorem}
\label{thm:non_share_sub_optimality}
    Under the non-shared structure setting in Eq.(\ref{eqn:non_share_regression_function}), the following minimax lower bound of estimating $G_*$ satisfies for any $r \geq 1$: 
    \begin{equation}
        \label{eqn:minimax} 
        \sup_{G\in \mathcal{G}_{H,L'}(\widehat{\Theta}) \setminus \mathcal{G}_{H,L-1}(\widehat{\Theta})}\mathbb{E}_{g_G}[\mathcal{D}_{1,r}(\widehat{G}_n,G)] \gtrsim n^{-1/2},
    \end{equation}
    where $\mathbb{E}_{g_G}$ stands for the expectation taken with respect to the product measure $g^n_{G}$.
\end{theorem}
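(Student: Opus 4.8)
Since the claim is a parametric ($n^{-1/2}$) minimax lower bound, the natural route is Le Cam's two‑point method. The plan is to exhibit, for each large $n$, two mixing measures $G_0$ and $G_1=G_{1,n}$, both lying in $\mathcal{G}_{H,L'}(\widehat{\Theta})\setminus\mathcal{G}_{H,L-1}(\widehat{\Theta})$, such that (i) $\mathcal{D}_{1,r}(G_0,G_1)\gtrsim n^{-1/2}$ for every $r\ge 1$, and (ii) the Hellinger distance between the $n$‑fold product measures $g^n_{G_0}$ and $g^n_{G_1}$ stays bounded away from its maximal value. A weak triangle inequality for $\mathcal{D}_{1,r}$ — or, concretely, the observation that the first summand of $\mathcal{D}_{1,r}$ is, after the optimal top‑level matching $\tau$, a genuine $\ell_1$ pseudometric on the mixture weights — then forces $\sup_{G\in\{G_0,G_1\}}\mathbb{E}_{g_G}[\mathcal{D}_{1,r}(\widetilde G_n,G)]\gtrsim n^{-1/2}$ for \emph{every} estimator $\widetilde G_n$, in particular the least‑squares estimator $\widehat G_n$; taking the supremum over the larger class $\mathcal{G}_{H,L'}(\widehat{\Theta})\setminus\mathcal{G}_{H,L-1}(\widehat{\Theta})$ only increases the left‑hand side.

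\textbf{The two‑point construction.} I would take $G_0:=G_*$, which lies in $\mathcal{G}_{H,L'}(\widehat{\Theta})\setminus\mathcal{G}_{H,L-1}(\widehat{\Theta})$ since it has exactly $L$ atoms in each expert group (and cannot be written with fewer, by the standing minimality of $G_*$). For $G_1$ I would perturb \emph{only the top‑level weights}: fix $h_1\ne h_2$ with $\pi^*_{h_1},\pi^*_{h_2}\in(0,1)$, set $\pi_{h_1}=\pi^*_{h_1}-\epsilon_n$ and $\pi_{h_2}=\pi^*_{h_2}+\epsilon_n$ with $\epsilon_n:=c_0\,n^{-1/2}$ for a small constant $c_0>0$ (small enough that the perturbed weights stay in $[0,1]$ and, when the $\pi^*_h$ are distinct, below a quarter of $\min_{h\ne h'}|\pi^*_h-\pi^*_{h'}|$), while leaving every $c^*_j$, $\Bbm^*_{Q,h,j},\Abm^*_{Q,h,j},\Bbm^*_{V,h,j},\Abm^*_{V,h,j}$ untouched. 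Then $G_1$ again has exactly $L$ atoms per group, so $G_1\in\mathcal{G}_{H,L'}(\widehat{\Theta})\setminus\mathcal{G}_{H,L-1}(\widehat{\Theta})$; moreover, because the expert parameters are unchanged, the Voronoi cells of $G_1$ relative to $G_*$ are singletons, so all $\|\Delta\cdot\|^r$ contributions vanish and $\sum_{i\in\mathcal{W}_{l|h}}\exp(c_i)=\exp(c^*_h)$ exactly. Hence $\mathcal{D}_{1,r}(G_0,G_1)=|\pi_{h_1}-\pi^*_{h_1}|+|\pi_{h_2}-\pi^*_{h_2}|=2\epsilon_n$, uniformly in $r\ge1$, which gives (i).

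\textbf{Controlling the statistical distance.} Writing $g_G(\mX)=\sum_h\pi_h\,m_h(\mX)$, where $m_h(\mX)$ denotes the (bounded) response of expert group $h$, one gets $g_{G_1}(\mX)-g_{G_0}(\mX)=\epsilon_n\bigl(m_{h_2}(\mX)-m_{h_1}(\mX)\bigr)$. Since $\mu$ has bounded support $\mathcal{X}$ and all parameters live in the compact set $\widehat{\Theta}$, the softmax weights lie in $[0,1]$ and $(\Pbm^0_{V,h}+\Bbm^*_{V,h,j}\Abm^*_{V,h,j})\mX$ is uniformly bounded, so $\sup_{\mX\in\mathcal{X}}\|g_{G_1}(\mX)-g_{G_0}(\mX)\|\le 2M\epsilon_n$ for a constant $M$. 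For the Gaussian regression model this yields $h^2(g_{G_0},g_{G_1})\le \frac{1}{2\sigma^2}\int\|g_{G_1}-g_{G_0}\|^2\,d\mu\le\frac{2M^2}{\sigma^2}\epsilon_n^2$, and by tensorization of the Hellinger affinity, $h^2(g^n_{G_0},g^n_{G_1})\le n\cdot\frac{2M^2}{\sigma^2}\epsilon_n^2=\frac{2M^2c_0^2}{\sigma^2}\le\tfrac12$ once $c_0$ is chosen small, whence $\mathrm{TV}(g^n_{G_0},g^n_{G_1})\le h(g^n_{G_0},g^n_{G_1})\le 2^{-1/2}<1$, giving (ii). Combining (i) and (ii) through Le Cam's two‑point inequality,
\[
\sup_{G\in\{G_0,G_1\}}\mathbb{E}_{g_G}\bigl[\mathcal{D}_{1,r}(\widetilde G_n,G)\bigr]\;\gtrsim\;\bigl(1-\mathrm{TV}(g^n_{G_0},g^n_{G_1})\bigr)\,\mathcal{D}_{1,r}(G_0,G_1)\;\gtrsim\;\epsilon_n\;=\;c_0\,n^{-1/2},
\]
which is the desired bound.

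\textbf{Main obstacle.} The construction and the distance estimates are routine (boundedness of the expert‑group responses, the Gaussian Hellinger bound, tensorization); the genuinely delicate point is the two‑point separation for $\mathcal{D}_{1,r}$, which is not itself a metric — one must rule out any estimator being simultaneously $o(\epsilon_n)$‑close (in $\mathcal{D}_{1,r}$) to both $G_0$ and $G_1$. This reduces to checking that the optimal top‑level matching $\tau$ is stable under an $\epsilon_n$‑perturbation of the weights, so that both $\mathcal{D}_{1,r}(\cdot,G_0)$ and $\mathcal{D}_{1,r}(\cdot,G_1)$ dominate the same matched $\ell_1$ weight distance, which does obey the triangle inequality. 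Choosing $\epsilon_n$ below a quarter of the minimal weight gap handles the case of distinct $\pi^*_h$ cleanly; in the degenerate case one instead pairs indices $h_1,h_2$ whose expert parameter blocks coincide, so that swapping their labels leaves all higher‑order terms of $\mathcal{D}_{1,r}$ unchanged, and the same conclusion follows.
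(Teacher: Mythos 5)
Your two‑point argument is internally coherent and, read literally, it does establish the displayed inequality: perturbing only the top‑level weights by $\epsilon_n=c_0 n^{-1/2}$ gives $\mathcal{D}_{1,r}(G_0,G_1)=2\epsilon_n$ while keeping the product measures at bounded Hellinger distance, and (modulo the weak‑triangle‑inequality care that the paper also only gestures at) Le Cam yields an $n^{-1/2}$ lower bound. The problem is that this proves the inequality for the wrong reason and thereby misses the content of the theorem. Your hard pair differs only in the mixture weights $\pi_h$; the low‑rank matrices and the biases $c_j$ are identical, so every $\|\Delta\mB\|^{r}$ and $\|\Delta\mA\|^{r}$ term in $\mathcal{D}_{1,r}$ vanishes and the dependence on $r$ becomes vacuous. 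What you obtain is the standard parametric lower bound for estimating a mixture weight, and the identical construction applies verbatim to the shared model of Theorem~\ref{theorem:shared} — so your argument cannot produce the separation between the two settings that the theorem is meant to exhibit, and it does not support the paper's ensuing conclusion that the low‑rank matrices themselves cannot be estimated faster than $n^{-1/(2r)}$ for every $r$ (i.e., the exponential sample complexity claim).

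The paper's proof makes the hard pair differ in the low‑rank matrices. Its Step 1 splits one atom of $G_*$ into two atoms with halved $\exp(c)$‑weights and perturbs $\mB_{V,h,1}$ in opposite directions along $e_{11}(\Abm_{V,h,1}^*)^{-1}$ by an amount $m^{-1}$, leaving $\mA_V$ and the gating parameters fixed; because the experts are linear in $\mB_V\mA_V$, the two perturbations of $\mB_V\mA_V$ cancel exactly in $g_G$ (this is the degeneracy $\partial^2 F/\partial\mB_V^{(u_1v_1)}\partial\mB_V^{(u_2v_2)}=0$ discussed after the theorem), so $\|g_{G_m}-g_{G_*}\|_{L^2(\mu)}=\mathcal{O}(m^{-(r+1)})$ while $\mathcal{D}_{1,r}(G_m,G_*)\asymp m^{-r}$, i.e., the ratio of the $L^2$ distance to the Voronoi loss tends to zero. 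Calibrating $m\asymp n^{1/(2r)}$ and then running exactly the Le Cam/Hellinger step you describe produces two hypotheses whose $\mB_V$'s differ by $n^{-1/(2r)}$ yet which are statistically indistinguishable at sample size $n$ — which is the statement with teeth. So the machinery in your write‑up is fine and matches Step 2 of the paper, but you are missing the key construction (Step 1): a perturbation of the low‑rank factors whose first‑order effect on the regression function cancels. Without it, the bound you prove is true but says nothing about the low‑rank matrices.
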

The proof of Theorem~\ref{thm:non_share_sub_optimality} is in Appendix~\ref{appendix:non_shared}. The result of Theorem~\ref{thm:non_share_sub_optimality} implies that the rates for estimating low-rank matrices $\Bbm_{Q,h,j'}^*,\Abm_{Q,h,j'}^*,\Bbm_{V,h,j'}^*,\Abm_{V,h,j'}^*$ are slower than any polynomial rates of order $\mathcal{O}_P(n^{-1/2r})$, for $r\geq 1$. Therefore, these rates may be as slow as $\mathcal{O}_P(\log^{-\tau}(n))$ for some constant $\tau>0$ (according to the inequality $\log(n)<n$). 

\textbf{Sample complexity of estimating low-rank matrices.} Consequently, we may need exponentially data points of the order $\mathcal{O}(\exp(\epsilon^{-1/\tau}))$ to achieve estimators of the low-rank matrices $\Bbm_{Q,h,j'}^*,\Abm_{Q,h,j'}^*,\Bbm_{V,h,j'}^*,\Abm_{V,h,j'}^*$ with a given error $\epsilon>0$. Thus, the sample complexity of estimating low-rank matrices in multi-head LoRA without the shared structure across attention heads is suboptimal. This issue occurs due to the separate structures of the low-rank matrices, which yields a negative interaction among low-rank matrices expressed through the partial differential equation (PDE) $\frac{\partial^2F}{\partial \mB_V^{(u_1v_1)}\partial\mB_V^{(u_2v_2)}}=\frac{\partial^2F}{\partial\mA_V^{(u_1v_1)}\partial\mA_V^{(u_2v_2)}}=0$, where we define $F(\mX,\mA,\mB):=\exp(\mX^{\top}(\mP_Q+\mB_Q\mA_Q)\mP_K\mX)(\mP_V+\mB_V\mA_V)\mX$. As shown in a previous work on MoE theories \cite{nguyen2024squares}, this PDE-based interaction decelerates the convergence rate of parameter estimation. The simple linear form of experts in Eq.~(\ref{eqn:non_share_regression_function}) also accounts for the slow parameter estimation rates, which has been justified in \cite{nguyen2024squares}.

\subsection{With Shared Structure}
\label{sec:with_shared}
\textbf{Shared structure across attention heads.} To address the issue of suboptimal sample complexity of estimating low-rank matrices in the non-shared setting, we impose a shared structure across attention heads in multi-head LoRA. In particular, we reformulate the low-rank matrices as 
\begin{align*}
    \mA_{Q,h,j}=\sigma_1(\mW_{Q,\mA,j}\mA_{j})&, \quad \mA_{V,h,j}=\sigma_1(\mW_{V,\mA,j}\mA_{j})\\
    \mB_{Q,h,j}=\sigma_2(\mW_{Q,\mB,j}\mB_{h,j})&, \quad \mB_{V,h,j}=\sigma_2(\mW_{V,\mB,j}\mB_{h,j}),
\end{align*}
for all $j\in[N]$ and $h\in[H]$, where $\sigma_1$ and $\sigma_2$ are some activation functions, while $\mW_{Q,\mA,j}$, $\mW_{V,\mA,j}$, $\mW_{Q,\mB,j}$, and $\mW_{V,\mB,j}$ are weight matrices. Above, $\mA_{Q,h,j}$ and $\mA_{V,h,j}$ share the matrix $\mA_j$, while $\mB_{Q,h,j}$ and $\mB_{V,h,j}$ share the matrix $\mB_{h,j}$.  Given this shared structure, it can be checked that the PDE-based interaction among low-rank matrices at the end of Section~\ref{sec:without_shared} no longer occurs. 
For simplicity, we will set $\mW_{Q,\mA,j}=\mW_{V,\mA,j}=\mW_{1,j}$ and $\mW_{Q,\mB,j}=\mW_{V,\mB,j}=\mW_{2,j}$ with a note that the original shared setting can be analyzed in a similar fashion. For the sake of theory, we assume that the activation functions $\sigma_1$ and $\sigma_2$ satisfy conditions specified in Appendix~\ref{appendix:shared_proved}.


\textbf{Problem setup.} In this setting, we still assume that $(\mX_1,\Ybm_1), (\mX_2,\Ybm_2),\ldots,(\mX_n,\Ybm_n)\in\mathbb{R}^{\bar{d}} \times\mathbb{R}^{\bar{d}}$ are i.i.d. samples drawn from a regression framework but with the following regression function: 
\begin{align}
 \hspace{-0.25 em}   &g_{\widetilde{G}_{*}}(\mX)  := \sum_{h=1}^{H}\pi^*_h\sum_{j=1}^{L} \frac{\exp(\mX^{\top} (\Pbm^0_{Q,h}+\sigma_2(\Wbm_{2,j}^*\Bbm_{h,j}^*)\sigma_1(\Wbm_{1,j}^*\Abm_{j}^*)\Pbm^{0}_{K,h}\mX+c^*_j)}{D^h_{g,*}(\mX)}\nonumber\\
 &\hspace{5cm}\cdot(\Pbm^0_{V,h}+\sigma_2(\Wbm_{2,j}^*\Bbm_{h,j}^*)\sigma_1(\Wbm_{1,j}^*\Abm_{j}^*))\mX, \label{eq:shared_true_regression_function_shared}
\end{align}
where we denote $D^h_{g,*}(\mX) := \sum_{j = 1}^{L}\exp(\mX^{\top} (\Pbm^0_{Q,h}+\sigma_2(\Wbm_{2,j}^*\Bbm_{h,j}^*)\sigma_1(\Wbm_{1,j}^*\Abm_{j}^*)\Pbm^{0}_{K,h}\mX+c^*_j),$ and $\widetilde{G}_{*} := \sum_{h=1}^H \pi^*_h\sum_{j = 1}^{L} \exp(c^*_{j'}) \delta_{(\Bbm_{h,j}^*,\Abm_{j}^*)}$, $\mW^*_{2,j} \in \mathbb{R}^{\bar{d}\times \bar{d}}$, and $\mW^*_{1,j} \in \mathbb{R}^{r\times r}$. Due to the change of regression function, the least squares estimator is tailored to this setting as
\begin{equation*}
    \widetilde{G}_n := \arg\min_{\widetilde{G} \in \mathcal{G}_{H,L'}(\widetilde{\Theta})}\sum_{i=1}^n (\boldsymbol{Y}_i-g_{\widetilde{G}}(\mX))^2, 
\end{equation*}
where we define 

\begin{align*}
    \mathcal{G}_{H,L'}(\widetilde{\Theta}) = \{\widetilde{G} = \sum_{h=1}^H \pi_h \sum_{j=1}^{\ell} \exp(c_{j}) \delta_{(\mW_{2,j}\Bbm_{h,j},\mW_{1,j}\Abm_{j})}, \ \ell\in[L'], (\pi_h,c_j',\mW_{2,j},\Bbm_{h,j},\mW_{1,j},\Abm_{j}) \in \widetilde{\Theta}\},
\end{align*}
where $\widetilde{\Theta}\subset[0,1] \times\mathbb{R} \times\mathbb{R}^{\bar{d}\times \bar{d}}\times\mathbb{R}^{\bar{d}\times r}\times\mathbb{R}^{r\times r}\times\mathbb{R}^{r\times \bar{d}}$. Furthermore, the Voronoi loss of interest in this setting is given by
\begin{align*}
    \mathcal{D}_{2}(\widetilde{G},\widetilde{G}_*) &:= \sum_{h=1}^H |\pi_{\tau(h)}-\pi_h^*| + \sum_{h=1}^H \pi_{\tau(h)} \sum_{l=1}^L\left|\sum_{i\in \mathcal{W}_{l|h}}\exp(c_i)-\exp(c_{h}^*)\right| \\
    &+\sum_{h=1}^{H} \pi_{\tau(h)} \sum_{l=1}^L \biggl[ \sum_{i \in \mathcal{W}_{l|h}, |\mathcal{W}_{l|h}|=1} \exp(c_i^*)(\|\Delta (\mW_{2}\mB)_{h,il}\| +  \|\Delta(\mW_1\mA)_{h,il}\|)\\
    &+ \sum_{i \in \mathcal{W}_{l|h}, |\mathcal{W}_{l|h}| > 1}\exp(c_i^*)(\|\Delta (\mW_{2}\mB)_{h,il}\|^2 +  \|\Delta(\mW_1\mA)_{h,il}\|^2)\biggr],
\end{align*}
where $\Delta(\mW_2\mB)_{h,il} := \mW_{2,i}\mB_{\tau(h),i} -  \mW_{2,l}\mB^*_{h,l}$ and $\Delta(\mW_1\mA)_{h,il} := \mW_{1,i}\mA_{i} -  \mW_{1,l}\mA^*_{l}$. Above, the Voronoi cells are defined as $\mathcal{W}_{j|h} = \{i \in [L']: \|\boldsymbol{H}_{\tau(h),i}-\boldsymbol{H}^*_{h,j}\| \leq \|\boldsymbol{H}_{\tau(h),i}-\boldsymbol{H}^*_{h,l}\|,\forall l \neq j \}$, where $\boldsymbol{H} := (\mW_2\mB, \mW_1\mA)$. With these ingredients in place, we are now ready to establish the sample complexity of estimating low-rank matrices under the shared structure in Theorem~\ref{theorem:shared}.
\begin{theorem}
\label{theorem:shared}
    Under the shared structure setting in Eq.~(\ref{eq:shared_true_regression_function_shared}), assume that the activation functions $\sigma_1$ and $\sigma_2$ satisfy the condition in Appendix \ref{appendix:shared_proved}, then we obtain
    \begin{equation}
        \label{eqn:minimax_shared} 
        \mathcal{D}_2(\widetilde{G}_n,\widetilde{G}_*) = \mathcal{O}_P([\log(n)/n]^{1/2}). 
    \end{equation}
\end{theorem}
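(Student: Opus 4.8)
The plan is to follow the standard two-part machinery for deriving parametric convergence rates of least-squares estimators in (hierarchical) mixture-of-experts models: (i) an empirical-process bound that controls the regression estimation error $\|g_{\widetilde{G}_n} - g_{\widetilde{G}_*}\|_{L^2(\mu)} = \mathcal{O}_P(\sqrt{\log n / n})$, and (ii) a local lower bound of the form $\|g_{\widetilde{G}} - g_{\widetilde{G}_*}\|_{L^2(\mu)} \gtrsim \mathcal{D}_2(\widetilde{G},\widetilde{G}_*)$ valid uniformly over $\widetilde{G} \in \mathcal{G}_{H,L'}(\widetilde{\Theta})$ in a neighborhood of $\widetilde{G}_*$. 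Combining (i) and (ii) immediately yields $\mathcal{D}_2(\widetilde{G}_n,\widetilde{G}_*) = \mathcal{O}_P(\sqrt{\log n/n})$. Step (i) is essentially routine: one bounds the bracketing entropy of the class $\{g_{\widetilde{G}} : \widetilde{G} \in \mathcal{G}_{H,L'}(\widetilde{\Theta})\}$ using the compactness of $\widetilde{\Theta}$ and the Lipschitz/boundedness properties of $\sigma_1,\sigma_2$, then applies a standard deviation inequality for least-squares over a parametric-dimension class (e.g., van de Geer). The bulk of the work is step (ii).

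For the local lower bound, I would argue by contradiction: suppose no constant $\epsilon_0 > 0$ makes the inequality hold uniformly over the $\epsilon_0$-neighborhood; then there is a sequence $\widetilde{G}_k \to \widetilde{G}_*$ (in $\mathcal{D}_2$) with $\|g_{\widetilde{G}_k} - g_{\widetilde{G}_*}\|_{L^2(\mu)} / \mathcal{D}_2(\widetilde{G}_k,\widetilde{G}_*) \to 0$. After passing to subsequences so that the Voronoi cell structure stabilizes, I would perform a Taylor expansion of $x \mapsto g_{\widetilde{G}_k}(x) - g_{\widetilde{G}_*}(x)$ around the atoms of $\widetilde{G}_*$. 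The key is that after the shared reparameterization $\mA_{Q,h,j} = \sigma_1(\mW_{1,j}\mA_j)$, $\mA_{V,h,j} = \sigma_1(\mW_{1,j}\mA_j)$ (both sharing $\mA_j$) and analogously for $\mB$, the problematic PDE identity $\partial^2 F/\partial \mB_V^{(u_1v_1)}\partial\mB_V^{(u_2v_2)} = \partial^2 F/\partial\mA_V^{(u_1v_1)}\partial\mA_V^{(u_2v_2)} = 0$ from the non-shared case no longer holds, so the first-order derivatives in the effective parameters $(\mW_2\mB, \mW_1\mA)$ are linearly independent of the relevant higher-order terms (this is where the assumed conditions on $\sigma_1,\sigma_2$ in Appendix~\ref{appendix:shared_proved} enter — they guarantee the Jacobian of the reparameterization is nondegenerate and the "algebraic independence" structure is preserved). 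Collecting the Taylor coefficients of the functions $x^{\alpha} \exp(\ldots)$ appearing in the expansion, I divide through by $\mathcal{D}_2(\widetilde{G}_k,\widetilde{G}_*)$ and extract the coefficient ratios; these must all tend to zero, yet by linear independence of the generating functions they form a system forcing the normalized coefficients to a nontrivial limit — a contradiction. The exact-specified atoms ($|\mathcal{W}_{l|h}| = 1$) contribute first-order terms, hence the $\|\cdot\|^1$ exponents in $\mathcal{D}_2$, while merged atoms ($|\mathcal{W}_{l|h}| > 1$) require going to second order, giving the $\|\cdot\|^2$ exponents; the hierarchical (two-level) structure is handled by first treating the outer gating weights $\pi_h$ and the matching $\tau(h)$, then the inner structure within each group.

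The main obstacle is the local lower bound (ii), specifically verifying the linear-independence / non-vanishing-of-higher-order-derivatives condition in the presence of the composed activation functions. One must show that the reparameterization $\mB_{h,j} \mapsto \sigma_2(\mW_{2,j}\mB_{h,j})$ does not re-introduce any degenerate algebraic relation among the derivative functions — i.e., that the natural map from perturbations $(\Delta\mW_2, \Delta\mB, \Delta\mW_1, \Delta\mA)$ to the span of $\{x^\alpha e^{s(x)}\}$ terms has trivial kernel modulo the Voronoi-loss quotient. This is exactly what the Appendix conditions on $\sigma_1,\sigma_2$ are designed to ensure (roughly: $\sigma_i$ is differentiable with non-vanishing derivative and its image avoids the degeneracy locus), so the step reduces to carefully transporting the known non-shared-with-correct-parameterization argument through the chain rule; the bookkeeping across $H$ groups and over the two cases $|\mathcal{W}_{l|h}|=1$ versus $>1$ is tedious but not conceptually hard. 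I would present (i) briefly and devote the appendix to (ii).
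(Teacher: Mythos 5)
Your proposal follows essentially the same route as the paper: Step (i) is the paper's Proposition~\ref{prop:MLE_convergence_rate} (model convergence at rate $\sqrt{\log n/n}$ via an equivalence with maximum likelihood and a bracketing-entropy bound in the style of van de Geer), and Step (ii) is the paper's Taylor-expansion argument by contradiction, with first-order expansion for Voronoi cells with $|\mathcal{W}_{l|h}|=1$ and second-order expansion for $|\mathcal{W}_{l|h}|>1$, the contradiction being extracted from the almost-sure linear independence of the derivative functions guaranteed by the conditions on $\sigma_1,\sigma_2$ (algebraic independence, uniform Lipschitz, strong identifiability), followed by Fatou's lemma.

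The one genuine omission is that your claim ``combining (i) and (ii) immediately yields the rate'' does not follow from a purely \emph{local} bound. You only establish $\|g_{\widetilde{G}}-g_{\widetilde{G}_*}\|_{L^2(\mu)} \gtrsim \mathcal{D}_2(\widetilde{G},\widetilde{G}_*)$ for $\mathcal{D}_2(\widetilde{G},\widetilde{G}_*)\leq \epsilon'$, but nothing in (i) guarantees that $\widetilde{G}_n$ lies in that neighborhood. The paper closes this with a separate \emph{global} part: for $\mathcal{D}_2(\widetilde{G},\widetilde{G}_*)>\epsilon'$ it shows the ratio is still bounded away from zero, arguing by contradiction via compactness of $\widetilde{\Theta}$, Fatou's lemma, and an identifiability result ($g_{\widetilde{G}}=g_{\widetilde{G}_*}$ a.e.\ implies $\widetilde{G}\equiv\widetilde{G}_*$, which itself needs the algebraic-independence condition on $\sigma_1,\sigma_2$ and is proved in the appendix). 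Without this global step, or an equivalent argument that $\widetilde{G}_n\to\widetilde{G}_*$ in $\mathcal{D}_2$, your combination of (i) and (ii) is incomplete; adding it makes your outline match the paper's proof in full.
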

The proof of Theorem~\ref{theorem:shared} is in Appendix \ref{appendix:shared_proved}. The bound in Eq.~(\ref{eqn:minimax_shared}) indicates that the rates for  estimating low-rank matrices $\mW_{1,l}\mA^*_{l}$ and $\mW_{2,l}\mB^*_{h,l}$ are at the order of $\widetilde{\mathcal{O}}_P(n^{-1/2})$ or $\widetilde{\mathcal{O}}_P(n^{-1/4})$, depending on the cardinality of the corresponding Voronoi cells. 

\textbf{Sample complexity of estimating low-rank matrices.} As a consequence, the above results imply that achieving estimators of the low-rank matrices with a given error $\epsilon$ requires only a polynomial number of data points of order $\mathcal{O}(\epsilon^{-2})$ or $\mathcal{O}(\epsilon^{-4})$. In contrast to the exponential sample complexity observed in the non-sharing structure, the sharing structure thus attains superior performance in terms of estimating low-rank matrices in the multi-head LoRA.


\section{Hyper-shared Low-rank Adaptation (\our{})}
\label{sec:method}


Motivated by the theoretical developments in Section~\ref{section: theory}, we introduce our practical method known as \textbf{Hy}per-shared Low-\textbf{R}ank \textbf{A}daptation (\our{}). In the following formulations, we use \underline{underscore} to highlight the learnable components. 


\textbf{Vanilla LoRA.} In the following formulations, we use \underline{underscore} to highlight the learnable components. Recall that for the $i^{th}$ head in an attention layer, vanilla LoRA fine-tunes the query projection matrices $\mW_{Q,i} \in \mathbb{R}^{k \times d}$ and the value projection matrices $\mW_{V,i} \in \mathbb{R}^{k \times d}$ as follows:
\begin{align*}
    \mW^{'}_{Q,i} = \mW_{Q,i} + \underline{\mB_{Q,i}\mA_Q}, \ \quad 
    \mW^{'}_{V,i} = \mW_{V,i} + \underline{\mB_{V,i}\mA_V}.
\end{align*}
\textbf{\our{}.} First, we discuss the translations from our theoretical results to the practical implementation. In our theoretical analysis, Theorem \ref{thm:non_share_sub_optimality} shows that under the non-shared structure setting, the sample complexity of estimating low-rank matrices in multi-head LoRA is suboptimal. The main reasons come from \textbf{(i)} a simple linear form of low-rank adapters in Eq. (\ref{eqn:non_share_regression_function}) and \textbf{(ii)} the lack of structure sharing of low-rank matrices across heads. Theorem \ref{theorem:shared} suggest that to substantially improve sample efficiency, we are encouraged to employ a shared structure across attention heads using joint hypernetworks to generate low-rank matrices, making low-rank adapters more expressive than the simple linear form. To this end, instead of optimizing these low-rank matrices independently and directly, we propose to generate these matrices with shared hypernetworks as follows:
\begin{align*}
    \mA_Q=\sigma_1(\underline{\mW_{Q,A}\mA})&, \mA_V=\sigma_1(\underline{\mW_{V,A}\mA}); \\
    \mB_{Q,i}=\underline{\mW_{Q,B,2}}\sigma_2(\underline{\mW_{B,1}}\mathrm{LN}(\underline{\mB_{i}}))&, \mB_{V,i}=\underline{\mW_{V,B,2}}\sigma_2(\underline{\mW_{B,1}}\mathrm{LN}(\underline{\mB_{i}})). 
\end{align*}
In this formulation, $\mW_{V,B, 1}\in R^{d_{hid}\times d_e}$, while $\mW_{Q,B,2} \in \mathbb{R}^{(r \times d)\times d_{hid}}$ is the concatenation of $d_{hid}$ low-rank matrices. $\mB_i\in \mathbb{R}^{d_e}$ is a learnable vector corresponding to the $i^{th}$ head where inputs $\mA$ are matrices while inputs $\mB_i$ are embedding vectors. For parameter efficiency, we implemented $\mA$ as diagonal matrices. The learnable matrices $\mW_{Q,A,1}$ and $\mW_{Q,B,2}$ are initialized with Kaiming uniform initialization, while matrices $\mW_{V,A,1}$ and $\mW_{V,B,2}$ are initialized with zero initialization. $\sigma_1$ and $\sigma_2$ are the activation functions, which we used the sigmoid functions in our experiments. We also initialize matrices $\mW_{B,1}$ with Kaiming uniform initialization. Inspired by the phenomenon in \cite{ortiz2023magnitude}, instead of directly using as the input $\mB_i$, we apply a non-learnable normalized layer $\mathrm{LN}(\vx) = \{\vx-\mathbb{E}(\vx)\}/\mathrm{Std}(\vx)$ to vector embeddings $\mB_i = \mathrm{LN}(\mB^{'}_i)$. For a demonstration of \our{}, we refer to Figure \ref{fig: method}. 

\paragraph{Benefits of shared structure across attention heads.} We emphasize that the shared statistics are not limited to attention heads; they also extend across the key and value projection matrices. By sharing part of the hypernetwork's structure across heads and across key/value projections, the model captures common adaptation patterns, reducing redundancy and encouraging information sharing. At the same time, the head-specific second layers preserve the flexibility needed for specialization. This structured coupling introduces an implicit regularization effect, which both improves sample efficiency—since gradients from different heads contribute to shaping a shared representation—and reduces the risk of overfitting in low-data settings. Moreover, this parameterization is scalable: as model size and number of heads grow, the shared structure amortizes parameter costs, yielding an efficient and expressive adaptation mechanism.

\section{Experiments}
\label{sec:experiments}

\vspace{-1mm}
\textbf{Experimental Settings.}
 To evaluate the effectiveness of our method, our experiments span two tasks, including image classification and commonsense reasoning. We compare our method with full fine-tuning, \textit{LoRA} \cite{hu2022lora}, \textit{DoRA} \cite{liu2024dora}, \textit{Prefix Tuning} \cite{li2021prefix}, and \textit{Parallel Adapter} \cite{houlsby2019adapter}. We also conduct a sample efficiency experiment in Section \ref{sec:sample_effi} to clarify the efficiency of our design. The experiments were conducted on 1 A100-GPUs. To ensure consistency with the theoretical setting, we conduct experiments by applying low-rank matrices to the query and value matrices at each layer. In addition, we also provide an extended version where these matrices are applied to the proj\_up and proj\_down matrices under the LLaMA-13B setting in Ablation \ref{sec:extend_ver}. More details of hyperparameters are shown in Appendix \ref{sec:hyper_detail}.

\paragraph{Image Classification.} We first evaluate our method on image classification using the Vision Transformer (ViT) \cite{vit} pretrained on ImageNet-21K \cite{imagenet}. Experiments are conducted on two widely adopted benchmarks: VTAB-1K \cite{vtab} and FGVC.

The VTAB-1K benchmark contains 19 classification tasks grouped into three categories—Natural, Specialized, and Structured—each with only 1,000 labeled examples for training. As shown in Table~\ref{tab:overal_vtab}, \our{} achieves the strongest performance overall, with an average accuracy of $74.4\%$. Moreover, compared to LoRA, \our{} delivers consistent gains across all domains: $+2.2\%$ on Natural, $+2.1\%$ on Specialized, and $+2.2\%$ on Structured tasks. These results demonstrate the effectiveness of stabilizing training while sharing information among attention heads. Detailed per-dataset results are reported in Appendix~\ref{sec:detail_vtab}.

\begin{figure}[!h]
    \centering
    \begin{minipage}{0.4\textwidth}
        \centering
        \includegraphics[width=\linewidth]{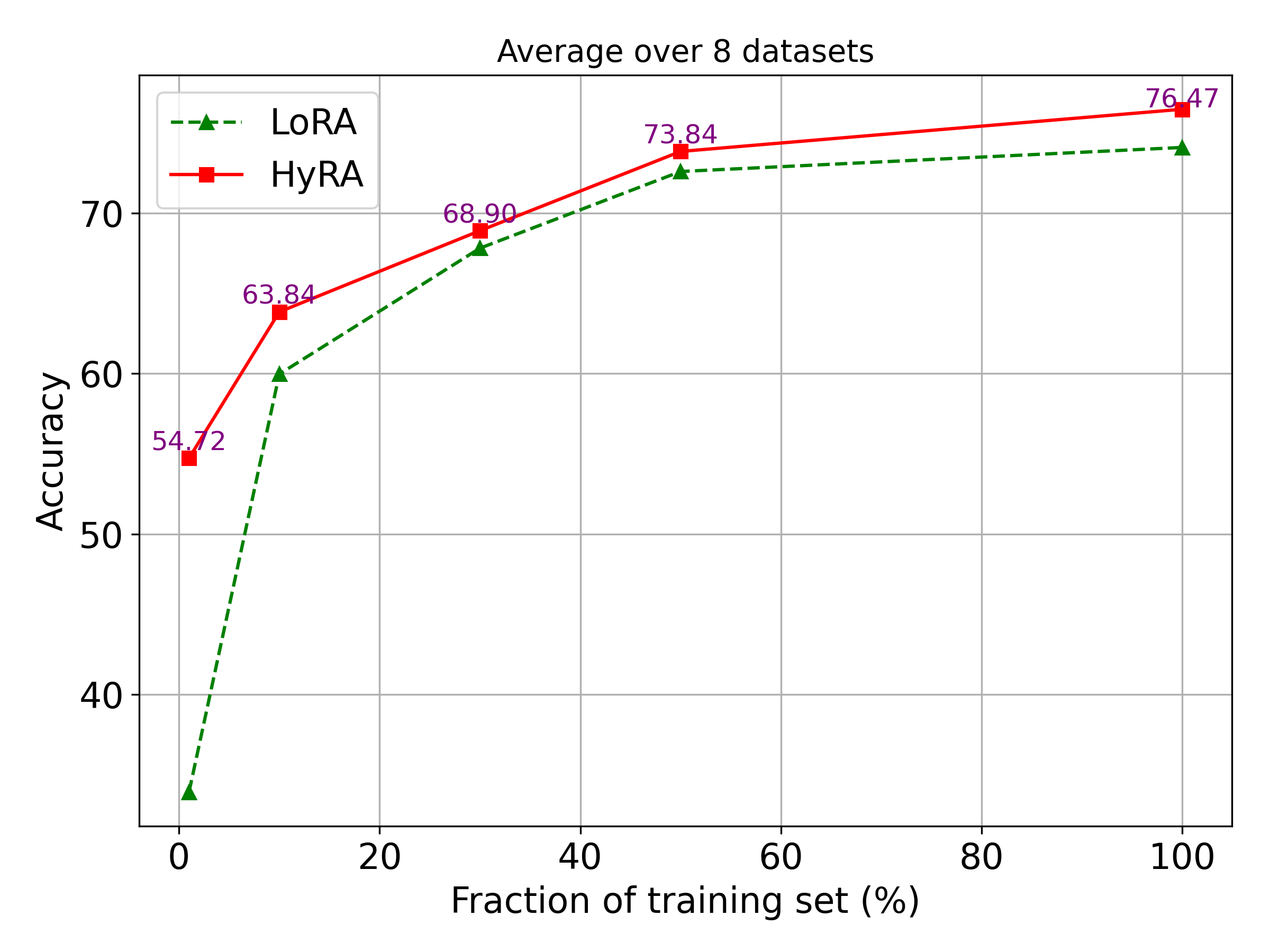}

        \caption{Sample efficiency on the commonsense reasoning datasets.}
        \label{fig:sample-efficiency}
    \end{minipage}%
    \hfill
    \begin{minipage}{0.58\textwidth}
        \centering
        \captionof{table}{Image Classification results on VTAB-1K.}

        \label{tab:overal_vtab}
        \resizebox{\textwidth}{!}{
        \begin{tabular}{c|cccc|c}
            \toprule
            Method & \textbf{\#Params. (\%)} &Natural & Specialized & Structured  & \textbf{AVG} \\ \midrule
            FFT & - &75.89 & 83.38 & 47.64 & 65.6 \\ 
            LoRA & 0.39  &79.4 & 84.55 & 59.78 & 72.2 \\ 
            DoRA & 0.40  &80.33 & 85.15 & 60.11 & 72.8 \\ 
            Adapter & 0.18  &79.01 & 84.08 & 58.49 & 71.4 \\ 
            Prefix & 0.16  &77.06 & 82.3 & 52.0 & 67.6 \\ \midrule
            \textbf{HyRA} & 0.47  &\textbf{81.67} & \textbf{86.68} & \textbf{61.96} & \textbf{74.4} \\ \bottomrule
        \end{tabular}}
    \end{minipage}
\end{figure}

 We next assess performance on the FGVC benchmark, which covers five fine-grained datasets: CUB-200-2011, NABirds, Oxford Flowers, Stanford Dogs, and Stanford Cars. As shown in Table~\ref{tab:fgvc}, \our{} achieves the highest overall accuracy of $89.96\%$, outperforming all PEFT baselines as well as full fine-tuning. In particular, \our{} sets new best results on four out of five datasets: CUB-200-2011 $(88.6\%)$, NABirds $(85.9\%)$, Oxford Flowers $(99.2\%)$, and Stanford Dogs $(91.0\%)$. On Stanford Cars, \our{} performs competitively $(85.0\%)$, while maintaining the best overall average. Compared to LoRA and DoRA, our method improves the average accuracy by notable margins of $+5.2\%$ and $+2.8\%$, respectively.

\begin{table}[!h]
\centering
\caption{Image Classification Results on the FGVC Datasets}
\label{tab:fgvc}
\resizebox{0.8\textwidth}{!}{

\begin{tabular}{
>{\columncolor[HTML]{FFFFFF}}c |c
>{\columncolor[HTML]{FFFFFF}}c 
>{\columncolor[HTML]{FFFFFF}}c 
>{\columncolor[HTML]{FFFFFF}}c 
>{\columncolor[HTML]{FFFFFF}}c 
>{\columncolor[HTML]{FFFFFF}}c |
>{\columncolor[HTML]{FFFFFF}}
>{\columncolor[HTML]{FFFFFF}}c }
\toprule
\textbf{Method}      &\textbf{\#Params (\%)} & CUB-200 -2011  & NABirds        & Oxford Flowers & Stanford Dogs  & Stanford Cars & \textbf{AVG} \\ \midrule
FFT                  &-  &  87.3           & 82.7           & 98.8           & 89.4           & 84.5     &   88.54            \\
LoRA                 &0.55   & 84.6           & 78.2           & 98.9           & 85.1           & 77.1     &    84.78            \\
DoRA                 & 0.57      &  87.3           & 80.0             & 99.1           & 87.6           & 81.9    &87.18            \\
Adapter              & 0.47    &87.1           & 84.3           & 98.5           & 89.8           & 68.6     &   85.66            \\
Prefix               & 0.42  & 87.5           & 82.0             & 98             & 74.2           & \textbf{90.2  }        & 86.38            \\ \midrule
\textbf{\our{}} & 0.64   & \textbf{88.6} & \textbf{85.9} & \textbf{99.2} & \textbf{91.0} & 85.0        &  \textbf{89.96}   \\ \bottomrule
\end{tabular}
}
\end{table}

Together, these results highlight the dual strengths of our approach. On VTAB-1K, \our{} demonstrates superior generalization under data scarcity. On FGVC, it achieves strong fine-grained recognition. Across both settings, \our{} consistently advances the state of the art in PEFT, while introducing only an additional 0.09\% learnable parameters relative to the total parameters.

\paragraph{Commonsense Reasoning.} We next evaluate its performance in the language domain on commonsense reasoning. This benchmark consists of eight tasks (BoolQ, PIQA, SIQA, HellaSwag, WinoGrande, ARC-e, ARC-c, and OBQA) with predefined training and test splits. All these tasks evaluate the model through multiple-choice questions. Following the protocol of \cite{commonsense}, we combine all tasks into a unified training dataset of approximately 150k examples. 
Experiments are conducted on LLaMA-7B and LLaMA-13B \cite{touvron2023llama}. To ensure fairness, we adopt the same rank of 32 for LoRA, DoRA, and \our{}. As shown in Table~\ref{tab:commonsense}, \our{} achieves the strongest performance across all tasks and model sizes. On LLaMA-7B, it improves over LoRA and DoRA by $+1.7\%$ and $+1.0\%$, respectively, reaching $76.64\%$. On LLaMA-13B, \our{} attains $80.82\%$ average accuracy, outperforming LoRA by $+2.6\%$ and DoRA by $+0.6\%$.



\begin{table}[ht]
\caption{Results on the commonsense reasoning task}
\label{tab:commonsense}
\resizebox{\textwidth}{!}{
\begin{tabular}{
>{\columncolor[HTML]{FFFFFF}}c |
>{\columncolor[HTML]{FFFFFF}}c |
>{\columncolor[HTML]{FFFFFF}}c 
>{\columncolor[HTML]{FFFFFF}}c 
>{\columncolor[HTML]{FFFFFF}}c 
>{\columncolor[HTML]{FFFFFF}}c 
>{\columncolor[HTML]{FFFFFF}}c 
>{\columncolor[HTML]{FFFFFF}}c 
>{\columncolor[HTML]{FFFFFF}}c 
>{\columncolor[HTML]{FFFFFF}}c 
>{\columncolor[HTML]{FFFFFF}}c |
>{\columncolor[HTML]{FFFFFF}}c }
\toprule
{\color[HTML]{333333} \textbf{Model}} &
  {\color[HTML]{333333} \textbf{Method}} &
  {\color[HTML]{333333} \textbf{\#Params. (\%)}} &
  {\color[HTML]{333333} BoolQ} &
  {\color[HTML]{333333} PIQA} &
  {\color[HTML]{333333} SIQA} &
  {\color[HTML]{333333} HellaSwag} &
  {\color[HTML]{333333} WinoGrande} &
  {\color[HTML]{333333} ARC-e} &
  {\color[HTML]{333333} ARC-c} &
  {\color[HTML]{333333} OBQA} &
  {\color[HTML]{333333} \textbf{AVG}} \\ \midrule
\cellcolor[HTML]{FFFFFF}{\color[HTML]{333333} } &
  {\color[HTML]{333333} \textbf{Prefix}} &
  {\color[HTML]{333333} 0.11} &
  {\color[HTML]{333333} 64.3} &
  {\color[HTML]{333333} 76.8} &
  {\color[HTML]{333333} \textbf{79.3}} &
  {\color[HTML]{333333} 42.1} &
  {\color[HTML]{333333} 72.1} &
  {\color[HTML]{333333} 72.9} &
  {\color[HTML]{333333} 54} &
  {\color[HTML]{333333} 60.6} &
  {\color[HTML]{333333} 65.26} \\
\cellcolor[HTML]{FFFFFF}{\color[HTML]{333333} } &
  {\color[HTML]{333333} \textbf{LoRA}} &
  {\color[HTML]{333333} 0.25} &
  {\color[HTML]{333333} 67.2} &
  {\color[HTML]{333333} 79.4} &
  {\color[HTML]{333333} 76.6} &
  {\color[HTML]{333333} 78.3} &
  {\color[HTML]{333333} 78.4} &
  {\color[HTML]{333333} 77.1} &
  {\color[HTML]{333333} 61.5} &
  {\color[HTML]{333333} 74.2} &
  {\color[HTML]{333333} 74.09} \\
\cellcolor[HTML]{FFFFFF}{\color[HTML]{333333} } &
  {\color[HTML]{333333} \textbf{DoRA}} &
  {\color[HTML]{333333} 0.25} &
  {\color[HTML]{333333} 67.22} &
  {\color[HTML]{333333} 79.98} &
  {\color[HTML]{333333} 76.66} &
  {\color[HTML]{333333} 80.66} &
  {\color[HTML]{333333} 79.72} &
  {\color[HTML]{333333} 79.5} &
  {\color[HTML]{333333} 61.01} &
  {\color[HTML]{333333} 74.8} &
  {\color[HTML]{333333} 74.94} \\
\cellcolor[HTML]{FFFFFF}{\color[HTML]{333333} } &
  {\color[HTML]{333333} \textbf{Adapter}} &
  {\color[HTML]{333333} 0.99} &
  {\color[HTML]{333333} 63} &
  {\color[HTML]{333333} 79.2} &
  {\color[HTML]{333333} 76.3} &
  {\color[HTML]{333333} 67.9} &
  {\color[HTML]{333333} 75.7} &
  {\color[HTML]{333333} 74.5} &
  {\color[HTML]{333333} 57.1} &
  {\color[HTML]{333333} 72.4} &
  {\color[HTML]{333333} 70.76} \\ \cmidrule{2-12} 
\multirow{-5}{*}{\cellcolor[HTML]{FFFFFF}{\color[HTML]{333333} \textbf{LLaMA-7B}}} &
  {\color[HTML]{333333} \textbf{HyRA}} &
  {\color[HTML]{333333} 0.28} &
  {\color[HTML]{333333} \textbf{68.59}} &
  {\color[HTML]{333333} \textbf{81.5}} &
  {\color[HTML]{333333} 79.07} &
  {\color[HTML]{333333} \textbf{81.42}} &
  {\color[HTML]{333333} \textbf{80.51}} &
  {\color[HTML]{333333} \textbf{80.01}} &
  {\color[HTML]{333333} \textbf{63.82}} &
  {\color[HTML]{333333} \textbf{78.2}} &
  {\color[HTML]{333333} \textbf{76.64}} \\ \midrule
\cellcolor[HTML]{FFFFFF}{\color[HTML]{333333} } &
  {\color[HTML]{333333} \textbf{Prefix}} &
  {\color[HTML]{333333} 0.03} &
  {\color[HTML]{333333} 65.3} &
  {\color[HTML]{333333} 75.4} &
  {\color[HTML]{333333} 72.1} &
  {\color[HTML]{333333} 55.2} &
  {\color[HTML]{333333} 68.6} &
  {\color[HTML]{333333} 79.5} &
  {\color[HTML]{333333} 62.9} &
  {\color[HTML]{333333} 68} &
  {\color[HTML]{333333} 68.38} \\
\cellcolor[HTML]{FFFFFF}{\color[HTML]{333333} } &
  {\color[HTML]{333333} \textbf{LoRA}} &
  {\color[HTML]{333333} 0.2} &
  {\color[HTML]{333333} 71.7} &
  {\color[HTML]{333333} 82.4} &
  {\color[HTML]{333333} 79.6} &
  {\color[HTML]{333333} 90.4} &
  {\color[HTML]{333333} \textbf{83.6}} &
  {\color[HTML]{333333} 83.1} &
  {\color[HTML]{333333} 68.5} &
  {\color[HTML]{333333} 82.1} &
  {\color[HTML]{333333} 80.18} \\
\cellcolor[HTML]{FFFFFF}{\color[HTML]{333333} } &
  {\color[HTML]{333333} \textbf{DoRA}} &
  {\color[HTML]{333333} 0.2} &
  {\color[HTML]{333333} 72.2} &
  {\color[HTML]{333333} 83.19} &
  {\color[HTML]{333333} \textbf{80.81}} &
  {\color[HTML]{333333} 88.92} &
  {\color[HTML]{333333} 81.93} &
  {\color[HTML]{333333} 82.95} &
  {\color[HTML]{333333} \textbf{69.37}} &
  {\color[HTML]{333333} 81} &
  {\color[HTML]{333333} 80.05} \\
\cellcolor[HTML]{FFFFFF}{\color[HTML]{333333} } &
  {\color[HTML]{333333} \textbf{Adapter}} &
  {\color[HTML]{333333} 0.8} &
  {\color[HTML]{333333} 71.8} &
  {\color[HTML]{333333} 83} &
  {\color[HTML]{333333} 79.2} &
  {\color[HTML]{333333} 88.1} &
  {\color[HTML]{333333} 82.4} &
  {\color[HTML]{333333} 82.5} &
  {\color[HTML]{333333} 67.3} &
  {\color[HTML]{333333} 81.8} &
  {\color[HTML]{333333} 79.51} \\ \cmidrule{2-12} 
\multirow{-5}{*}{\cellcolor[HTML]{FFFFFF}{\color[HTML]{333333} \textbf{LLaMA-13B}}} &
  {\color[HTML]{333333} \textbf{HyRA}} &
  {\color[HTML]{333333} 0.21} &
  {\color[HTML]{333333} \textbf{72.42}} &
  {\color[HTML]{333333} \textbf{84.17}} &
  {\color[HTML]{333333} 80.25} &
  {\color[HTML]{333333} \textbf{91.43}} &
  {\color[HTML]{333333} 82.95} &
  {\color[HTML]{333333} \textbf{83.21}} &
  {\color[HTML]{333333} 69.11} &
  {\color[HTML]{333333} \textbf{83}} &
  {\color[HTML]{333333} \textbf{80.82}} \\
  \bottomrule
\end{tabular}}
\end{table}

\label{sec:sample_effi}
\paragraph{Sample Efficiency.} In Section \ref{section: theory}, we have presented the theoretical benefits of implementing shared statistics among different attention heads to enhance the sample efficiency. In this section, we empirically evaluate this claim by comparing the sample efficiency of \our{} with LoRA on the commonsense reasoning task on the LLaMA-7B setting. Following the approach of \cite{d2021convit}, we subsample each class at fractions $f = \{1\%, 10\%, 30\%, 50\%, 100\%\}$ and scale the number of training epochs by $1/f$, ensuring the total number of data seen by the model remains constant. The results were presented in Figure \ref{fig:sample-efficiency} and Appendix \ref{sec:sample_efficiency_detail}, where \our{} outperforms LoRA in average. Moreover, this gap is significant in a low-data regime, with the gap of more than 20\% when subsampling 1\% of the dataset, suggesting an improved sample efficiency of \our{} compared to vanilla LoRA.

\paragraph{Ablation Studies.} To provide a comprehensive assessment of computational cost, we compare the FLOPs and runtime of HyRA against LoRA and DoRA. Detailed results are presented in Appendix \ref{appendix:computationalcost}. Notably, while HyRA introduces additional FLOPs during training due to the inclusion of hypernetworks, its runtime remains comparable to that of LoRA. Importantly, at inference time, all methods - including HyRA, LoRA, and DoRA - share identical FLOPs and runtime, as the hypernetworks used by HyRA are entirely discarded. This highlights HyRA's practicality and efficiency. We also examine the validation loss curves of HyRA versus LoRA in Appendix \ref{appendix:computationalcost}. The results demonstrate that HyRA achieves consistently better convergence, reinforcing its advantages not only in efficiency but also in optimization performance. 

\vspace{-0.1in}
\section{Conclusion}
\label{sec:conclusion}
We introduce \textbf{HyRA}, a parameter-efficient fine-tuning method that addresses the limitations of LoRA. Viewing Multi-head LoRA through the lens of HMoE, HyRA enables parameter sharing across layers. By coupling low-rank matrices via shared structures, HyRA reduces redundancy while preserving flexibility. Our theory establishes stronger generalization guarantees, and our experiments show competitive performance with substantial parameter savings. However, the extent of parameter sharing needs to be chosen carefully as over-sharing can reduce expressiveness and lower performance. Additionally, our current evaluations are limited to transformer-based architectures and do not yet explore other types of models. Future work includes exploring adaptive sharing mechanisms that can dynamically balance efficiency and expressiveness, extending the method to different architectures, and conducting large-scale benchmarks on diverse downstream tasks.


\newpage
\appendix
\centering
\textbf{\Large{Supplement to
``Hypernetwork-Driven Low-Rank Adaptation for Multi-Head Fine-Tuning''}}

\justifying
\setlength{\parindent}{0pt}

In this supplementary material, we review important related work in Appendix \ref{appendix:related_works}, provide detailed theoretical verification in Appendix \ref{appendix:theoretical_analysis}, and present additional experiments in Appendix \ref{appendix:additional_experimental_details}  to support our proposed mechanisms \our{}. Finally, we discuss the use of large language models in this paper in Appendix~\ref{appendix:llm}.

\section{Related works}
\label{appendix:related_works}
\paragraph{Attention}
The attention mechanism was first introduced to improve the sequence-to-sequence model in machine translation \cite{bahdanau2016neuralmachinetranslationjointly} by allowing models to dynamically focus on relevant parts of the input. \cite{vaswani2017attention} generalized this idea with the Transformer architecture, where scaled dot-product attention becomes the foundation of modern large language models. Since then, attention has been widely adopted across domains, including NLP \cite{devlin2018bert}, computer vision \cite{vit}, and multi-model learning \cite{radford2019language}. However, the quadratic complexity of attention in sequence length has motivated research to improve efficiency, such as sparse and low-rank approximation \cite{kitaev2020reformerefficienttransformer, wang2020linformerselfattentionlinearcomplexity}. These approaches aim to keep the expressiveness of attention while reducing its computational and space complexity, making it more practical for large-scale applications.

\paragraph{Parameter-efficient Fine-tuning (PEFT) and Low-rank Adaptation (LoRA)}
With the growing size of models, full fine-tuning has become increasingly impractical. Parameter-efficient Fine-Tuning (PEFT) addresses this challenge by adapting models by training a relatively small number of parameters while keeping most pre-trained weights frozen \cite{houlsby2019adapter, lester2021power}. Existing approaches include \textit{Adapter-based}, which insert lightweight modules into the Transformer layer \cite{he2022unifiedview}, and \textit{Prompt-based}, which add a learnable token to the input \cite{wei2023}. While effective, these approaches introduce inference latency compared to the original models.

Among the PEFT methods, Low-rank Adaptation (LoRA) \cite{hu2022lora} has emerged as a simple but powerful PEFT method that does not add extra inference burden. LoRA assumes that weight updates during fine-tuning occur in a low-rank subspace and reparameterized weight updates are represented as the product of two low-rank matrices. Since the low-rank components can be merged into the pre-trained weights after training, LoRA doesn't add extra inference cost compared to the original model. Due to its efficiency and strong empirical results, LoRA has become a widely adopted baseline for PEFT in both academic research and real-world applications. In natural language processing, it is employed for tasks such as domain adaptation, instruction tuning, summarization, question answering, and text generation \cite{loranlpsurvey}. Moreover, LoRA-based approaches have been extended to areas like federated learning, speech synthesis, and reinforcement learning, supporting scalable model adaptation in distributed and resource-limited settings \cite{yang2025lowrankadaptationfoundationmodels}. Recent extensions, such as MTLoRA, have enhanced its applications in multi-task learning for more efficient transfer learning in foundation models \cite{agiza2024mtlora}, DoRA \cite{liu2024dora} decomposes weight updates into magnitude and directional components, AdaLoRA \cite{zhang2023adalora} dynamically adjusts rank allocation, ReLoRA \cite{relora} notably increases the expressivenes of LoRA by proposing a strategy to train high-rank models with low-rank updates, and VeRA \cite{kopiczko2024vera} shares a pair of frozen random matrices across layers with a learnable scaling factor. Recently, \cite{truong2024improvinggeneralizationflathilbert} and \cite{pham2025promoting} integrated Bayesian inference into LoRA fine-tuning, improving robustness and generalization through uncertainty-aware and distributionally robust adaptation mechanisms.
These developments highlight LoRA's role as a foundation for modern PEFT research.

\paragraph{(Hierachical) Mixture of Experts (HMoE)}
The Mixture of Experts framework \cite{Jacob_Jordan-1991} combines multiple experts with a gating function that adaptively assigns inputs to experts. From that foundation, early work \cite{shazeer2017outrageously} showed that sparsely-gated MoE layers can effectively scale models' capacity by activating only a subset of experts per input. This design has since been applied to large language models \cite{Du_Glam_MoE}, computer vision \cite{puigcerver2024sparses}, and multi-modal learning \cite{han2024fusemoe}, showing strong gains in scalability and efficiency  \cite{nguyen2024statistical,nguyen2024temperature,nguyen2023demystifying}. More recently, theoretical work has highlighted the connection between MoE and Multi-head Self-attention \cite{le2024mixture,truong2025replora}, motivating new parameter-efficient fine-tuning methods. Hierarchical Mixture of Experts \cite{jordan1994hierarchical} arranges experts into a tree-like structure with gating occurring at multiple levels. Instead of routing an input directly to an expert, the gating functions make sequential decisions at each level of the hierarchy, which improves training efficiency by narrowing down the relevant subset of experts. As an advanced variant of MoE,it has been shown to handle complex data structures more effectively as well as enhance both generalization and computational efficiency \cite{Peralta2014HMoE, zhao2019hierarchical, i̇rsoy2018HMoE} by allowing different branches of the hierarchy to specialize in different regions of the input space.  
 
\paragraph{HyperNetwork}
The HyperNetwork framework \cite{ha2016hypernetworks} introduces an approach where the parameters of a target model are not learned directly but are instead generated by an auxiliary neural network, referred to as \textit{HyperNetwork}. Earlier work focused on recurrent neural networks, where HyperNetwork improved generalization and adaptability by producing context-dependent updates \cite{ha2016hypernetworks}. Later work explores this idea to continual learning, where task-specific weights generated by a HyperNetwork mitigated catastrophic forgetting \cite{vonoswald2022continual}. In the context of parameter-efficient fine-tuning (PEFT), HyperNetworks have been used to share adaptation across tasks and reduce redundancy. For example, \cite{mahabadi2021transformers} proposed using HyperNetwork to generate task-specific adapter weights for multi-task fine-tuning, significantly reducing the number of parameters while maintaining strong performance, or \cite{li2021prefix} uses HyperNetworks to extend \textit{Prompt tuning} by using Hypernetworks to generate the additional parameters, instead of optimizing those parameters directly. Recently, \cite{truong2025replora} and \cite{nguyen2024statistical} have investigated the theoretical benefits of those hypernetworks in different PEFT methods, and have shown that the usage of HyperNetworks is beneficial in enhancing the sample efficiency. Additionally, HyperPEFT \cite{hyperpeft} examines the effectiveness of hypernetworks in parameter-efficient fine-tuning, demonstrating that using a shared hypernetwork promotes knowledge transfer across tasks in pre-trained models, which helps reduce catastrophic forgetting and reveals underlying relationships between tasks. These works have highlighted HyperNetwork as a powerful tool for PEFT that reduces the number of parameters by learning through a lightweight Hypernetwork, improving both efficiency and flexibility.

\section{Proofs of Theoretical Results}
\label{appendix:theoretical_analysis}
\subsection{Proof of Theorem~\ref{thm:non_share_sub_optimality}}
\label{appendix:non_shared}

In this section, we present a detailed analysis of  Theorem \ref{thm:non_share_sub_optimality}.  

\begin{proof}[Proof of Theorem \ref{thm:non_share_sub_optimality}]


The proof of this theorem includes two steps: 

\textbf{Step 1. The $L^2$ density distance may be small compared to the Voronoi loss.} 

In this step, we show that the following limit satisfies for all $r \geq 1$: 
\begin{equation}
\label{eqn:L2_small_compared_with_D1}
    \lim_{\epsilon \to 0} \inf_{G \in \mathcal{G}_{H,L'}(\widehat{\Theta}): \mathcal{D}_{1,r}(G,G_*) \leq \epsilon} \dfrac{\|g_G - g_{G_*}\|_{L^2(\mu)}}{\mathcal{D}_{1,r}(G,G_*)} = 0. 
\end{equation}
To demonstrate this inequality, we can construct a sequence of mixing measure such that 
\begin{equation*}
    \lim_{n \to \infty} \mathcal{D}_{1,r}(G_n,G_*) = 0 \text{ and } \lim_{n \to \infty} \|g_G - g_{G_*}\|_{L^2(\mu)}/\mathcal{D}_{1,r}(G_n,G_*) = 0.
\end{equation*} 
To prove this, we can consider the sequence $$G_n = \sum_{h=1}^H\pi_h^n\sum_{i=1}^{L+1} \exp(c_i^n)\delta_{\boldsymbol{B}_{Q,h,j}^n,\boldsymbol{A}_{Q,h,j}^n,\boldsymbol{B}_{V,h,j}^n, \boldsymbol{A}_{V,h,j}^n}$$ such that 
\begin{itemize}
    \item $\pi^n_h = \pi_h^*$ for any $1\leq h \leq H$. 
    \item $\exp(c_1^n) = \exp(c_2^n) = \dfrac{1}{2}\exp(c_1^*) + \dfrac{1}{2n^{r+1}}$ and $\exp(c_i^n) = \exp(c_{i-1}^*)$ for any $3\leq i \leq L+1$. 
    \item $\boldsymbol{B}_{Q,h,1}^n = \boldsymbol{B}_{Q,h,2}^n = \boldsymbol{B}_{Q,h,1}^*$ and $\boldsymbol{B}_{Q,h,i}^n = \boldsymbol{B}_{Q,h,i-1}^*$ for any $3\leq i \leq L+1$. 
    \item $\boldsymbol{A}_{Q,h,1}^n = \boldsymbol{A}_{Q,h,2}^n = \boldsymbol{A}_{Q,h,1}^*$ and $\boldsymbol{A}_{Q,h,i}^n = \boldsymbol{A}_{Q,h,i-1}^*$ for any $3\leq i \leq L+1$. 
    \item $\boldsymbol{B}_{V,h,1}^n = \boldsymbol{B}_{V,h,1}^* + {n^{-1}e_{11}(\boldsymbol{A}_{V,h,1}^*)^{-1}}$,  $\boldsymbol{B}_{V,h,2}^n = \boldsymbol{B}_{V,h,1}^* -n^{-1}e_{11}(\boldsymbol{A}_{V,h,1}^*)^{-1}$ and $\boldsymbol{B}_{V,h,i+1}^n = \boldsymbol{B}_{V,h,i}^*$ for any $3\leq i \leq L+1$. 
    \item $\boldsymbol{A}_{V,h,1}^n = \boldsymbol{A}_{V,h,2}^n = \boldsymbol{A}_{V,h,1}^*$ and $\boldsymbol{A}_{V,h,i}^n = \boldsymbol{A}_{V,h,i-1}^*$ for any $3\leq i \leq L+1$, 
\end{itemize}
here we denote $e_{11}$ be the matrix that all of its coefficients are equal to 0, except (1,1)-coefficient, which is equal to 1, and, without loss of generality, we assume that $\det(\boldsymbol{A}_{V,1}^*) \neq 0$ (which implies that $\boldsymbol{A}_{V,1}^*$ is invertible). Then, it is evident that $\pi(h) = h$ for all $h \in [H]$. Accordingly, the loss function takes the form
\begin{align*}
    \mathcal{D}_{1,r}(G_n,G_*) =  \dfrac{1}{n^{r+1}}\sum_{h=1}^H \pi_{h} + \sum_{h=1}^H \pi_{h}\left[\exp(c_1^*) + \dfrac{1}{n^{r+1}}\right]\cdot \dfrac{1}{n^{r}}\|\mA^*_{V,h,1}\| = \mathcal{O}(n^{-r}),  
\end{align*}
which implies $\mathcal{D}_{1,r}(G_n,G_*) \to 0$. 

Next, we show that $\lim_{n\to \infty} \|g_{G_n} - g_{G_*}\|_{L^2(\mu)}/\mathcal{D}_{1,r}(G_n,G_*) = 0$. Let 
\begin{align*}
    D_{g,*}^h(x) &= \sum_{l=1}^L\exp(\mX^\top(\Pbm^0_{Q,h}+\boldsymbol{B}^*_{Q,h}\boldsymbol{A}^*_{Q,h})\Pbm^0_{K,h}+c_j^*),\\
    D_{g,n}^h(x) &= \sum_{l=1}^L\exp(\mX^\top(\Pbm^0_{Q,h}+\boldsymbol{B}^n_{Q,h}\boldsymbol{A}^n_{Q,h})\Pbm^0_{K,h}+c_j^n),
\end{align*}
we take into account the discrepancy
\begin{align*}
    \mathcal{L}_n(\mX) &:= g_{G_n}(\mX) - g_{G_*}(\mX) \\ &= \sum_{h=1}^{H} \pi_h\Bigl(\sum_{j=1}^{L} \bigl(\frac{\exp(\mX^{\top} (\Pbm^0_{Q,h}+\Bbm_{Q,h,j}^n\Abm_{Q,h,j}^n)\Pbm^{0}_{K,h}\mX+c^n_j)}{D^h_{g,n}(\mX)}\cdot(\Pbm^0_{V,h}+\Bbm_{V,h,j}^n
\Abm_{V,h,j}^n)\mX \\
& - \frac{\exp(\mX^{\top} (\Pbm^0_{Q,h}+\Bbm_{Q,h,j}^*\Abm_{Q,h,j}^*)\Pbm^{0}_{K,h}\mX+c^*_j)}{D^h_{g,*}(\mX)}\cdot(\Pbm^0_{V,h}+\Bbm_{V,h,j}^*
\Abm_{V,h,j}^*)\mX\bigr)
\Bigr)\\
&:= \sum_{h=1}^H \pi_h \tilde{\mathcal{L}}_n^h(\mX). 
\end{align*}

We examine the decomposition of ${\mathcal{L}}_n^h(\mX) =  D_{g,*}^h(\mX) \tilde{\mathcal{L}}_n^h(\mX)$: 
\begin{align*}
    \mathcal{L}^h_n(\mX) &= \sum_{l=1}^L \sum_{i\in \mathcal{V}_{l|h}} \exp(c_i^n)\Bigl[\exp(\mX^{\top} (\Pbm^0_{Q,h}+\Bbm_{Q,h,j}^n\Abm_{Q,h,j}^n)\Pbm^{0}_{K,h}\mX)(\Pbm_{V,h}^0 + \Bbm_{V,h,j}^n\Abm_{V,h,j}^n)\mX \\ 
    &-\exp(\mX^{\top} (\Pbm^0_{Q,h}+\Bbm_{Q,h,j}^*\Abm_{Q,h,j}^*)\Pbm^{0}_{K,h}\mX)(\Pbm_{V,h}^0 + \Bbm_{V,h,j}^*\Abm_{V,h,j}^*)\mX \Bigr]\\
    &- \sum_{l=1}^L\sum_{i \in \mathcal{V}_{l|h}} \Bigl[ \exp(\mX^{\top} (\Pbm^0_{Q,h}+\Bbm_{Q,h,j}^n\Abm_{Q,h,j}^n)\Pbm^{0}_{K,h}\mX)) - \exp(\mX^{\top} (\Pbm^0_{Q,h}+\Bbm_{Q,h,j}^*\Abm_{Q,h,j}^*)\Pbm^{0}_{K,h}\mX))\Bigr]g_{G_n}(\mX)\\
    &+ \sum_{l=1}^L \left(\sum_{i\in \mathcal{V}_{l|h}}\exp(c_i^n)-\exp(c_l^*)\right)\exp(\Pbm^0_{Q,h}+\Bbm_{Q,h,j}^*\Abm_{Q,h,j}^*)\bigl[(\Pbm^0_{V,h}+\Bbm_{V,h,j}^*\Abm_{V,h,j}^*) - g_{G_n}(x)\bigr]\\
    &:= \mathcal{A}_n^h(\mX) - \mathcal{B}_n^h(\mX) + \mathcal{C}_n^h(\mX). 
\end{align*}
Based on the definition of $\mB^n_{Q,h,i}$, $\mA^n_{Q,h,i}$, $\mB^n_{V,h,i}$, $\mA^n_{V,h,i}$, we obtain 
\begin{align*}
    \mathcal{A}^h_n(\mX) &= \sum_{i=1}^2\dfrac{1}{2}\left[\exp(c_1^*) + \dfrac{1}{n^{r+1}} \right]\exp\left(\mX^{\top} (\Pbm^0_{Q,h}+\Bbm_{Q,h,1}^*\Abm_{Q,h,1}^*)\Pbm^{0}_{K,h}\mX)\right)(\Bbm^n_{V,h,i}\Abm^n_{V,h,i} - \Bbm^*_{V,h,1}\Abm^*_{V,h,1})\mX\\
    &= \dfrac{1}{2}\left[\exp(b_{*,1}) + \dfrac{1}{n^{r+1}} \right]\exp\left(\mX^{\top} (\Pbm^0_{Q,h}+\Bbm_{Q,h,1}^*\Abm_{Q,h,1}^*)\Pbm^{0}_{K,h}\mX)\right)[(\Bbm^n_{V,h,1}\Abm^n_{V,h,1} - \Bbm^*_{V,h,1}\Abm^*_{V,h,1})\\
    &+(\Bbm^n_{V,h,2}\Abm^n_{V,h,2} - \Bbm^*_{V,h,2}\Abm^*_{V,h,2})]\mX\\
    &= 0. 
\end{align*}
The last equality can be justified by $\Bbm^n_{V,h,1}\Abm^n_{V,h,1} - \Bbm^*_{V,h,1}\Abm^*_{V,h,1} = \dfrac{1}{n}e_{11}$ and $\Bbm^n_{V,h,2}\Abm^n_{V,h,2} - \Bbm^*_{V,h,2}\Abm^*_{V,h,2} = -\dfrac{1}{n}e_{11}$. Also, from the choice that $\Bbm^n_{Q,h,1} = \Bbm^*_{Q,h,1}$ and $\Abm^n_{Q,h,1} = \Abm^*_{Q,h,1}$, we have $\mathcal{B}^h_n(\mX) = 0$. In addition, from the value of $c_i^n$ and $c_l^*$, it is straightforward to deduce that $\mathcal{C}_n^h(\mX) = \mathcal{O}(n^{-(r+1)})$. Combining these results gives us $\mathcal{L}_n^h(\mX)/\mathcal{D}_{1,r}(G_n,G_*) \to 0$. Also noting that the term $D_{g,*}^h(\mX)$ is bounded given that the parameter space $\widehat{\Theta}$ and input space $\mathcal{X}$ are compact, we have $\tilde{\mathcal{L}}_n^h(\mX)/\mathcal{D}_{1,r}(G_n,G_*) \to 0$ for almost every $\mX$. By summing up these results for $h$, we have $\mathcal{L}_n(\mX)/\mathcal{D}_{1,r}(G_n,G_*) \to 0$ for almost every $\mX$. This result implies that 
\begin{equation*}
    \|g_{G_n} - g_{G_*}\|_{L^2(\mu)}/\mathcal{D}_{1,r}(G_n,G_*) \to 0,
\end{equation*}
which illustrates Eq.~(\ref{eqn:L2_small_compared_with_D1}). 

\textbf{Step 2: Apply Le Cam’s two-point argument.} 

We conclude the proof by showing the minimax property of the estimator
\begin{equation*}
     \inf_{\bar{G}_n \in \mathcal{G}_{H,L'}(\widehat{\Theta})} \sup_{G \in \mathcal{G}_{H,L'}(\widehat{\Theta})\setminus \mathcal{G}_{H,L-1}(\widehat{\Theta})} \mathbb{E}_{g_G}[\mathcal{D}_{1,r}(\bar{G}_n,G)] \gtrsim n^{-1/2}.
\end{equation*}
Now, Eq.~(\ref{eqn:L2_small_compared_with_D1}) implies that for $\epsilon>0$ and a fixed constant $c>0$ determined later, there exists a mixing measures $G'_{*}\in \mathcal{G}_{k}(\Theta)$ satisfying $\mathcal{D}_{1,r}(G'_{*}, G_{*}) = 2\epsilon$ and $\|g_{G'_*}- g_{G_*}\|_{L^2(\mu)} \leq C_1\epsilon$. Using Le Cam's two points argument in \cite{yu97lecam} with weak triangle inequality property for the Voronoi loss function $\mathcal{D}_{1,r}$, we have 
\begin{align}
\label{eqn:minimax_estimate_one}
    \inf_{\bar{G}_n \in \mathcal{G}_{H,L'}(\widehat{\Theta})} \sup_{G \in \mathcal{G}_{H,L'}(\widehat{\Theta})\setminus \mathcal{G}_{H,L-1}(\widehat{\Theta})} &\mathbb{E}_{g_G}[\mathcal{D}_{1,r}(\bar{G}_n,G)] \nonumber \\
    &\gtrsim  \dfrac{\mathcal{D}_{1,r}(G'_*, G_*)}{8}\exp\left(-n\mathbb{E}_{\mX\sim \mu}[\text{KL}(\mathcal{N}(g_{G_*'}(\mX), \sigma^2I_{\bar{d}}), \mathcal{N}(g_{G_*'}(\mX), \sigma^2I_{\bar{d}}))]\right) \nonumber.
\end{align}
Bearing in mind that the $\text{KL}$ divergence between two Gaussian distributions can be calculated as 
\begin{equation*}
    \text{KL}(\mathcal{N}(g_{G'_*}(\mX), \sigma^2I_{\bar{d}}), \mathcal{N}(g_{G_*}(\mX), \sigma^2I_{\bar{d}})) = \dfrac{\|g_{G'_*}(\mX)-g_{G_*}(\mX)\|^2}{2\sigma^2}. 
\end{equation*}
As a result, we have 
\begin{align}
\label{eqn:minimax_estimate_two}
\inf_{\bar{G}_n \in \mathcal{G}_{H,L'}(\widehat{\Theta})} \sup_{G \in \mathcal{G}_{H,L'}(\widehat{\Theta})\setminus \mathcal{G}_{H,L-1}(\widehat{\Theta})} &\mathbb{E}_{g_G}[\mathcal{D}_{1,r}(\bar{G}_n,G)] \nonumber \\
    &\gtrsim \epsilon \cdot \exp(-n \|g_{G_*'} -g_{G_*} \|^2_{L^2(\mu)}) \nonumber \\
    &\gtrsim \epsilon \cdot \exp(-C_1n\epsilon^2),
\end{align}
Here, we choose $\epsilon = n^{-1/2}$, it follows that $\epsilon \cdot \exp(-C_1n\epsilon^2) = n^{-1/2}\exp(-C_1)$. Consequently, the minimax lower bound in equation Eq.~(\ref{eqn:L2_small_compared_with_D1}) is attained, thereby completing the proof.
\end{proof}

\subsection{Proof of Theorem~\ref{theorem:shared}}
\label{appendix:shared_proved}

Before delving into the details of the proof, it is important to note that the analysis can be reduced to the case where both $\boldsymbol{W}_{1,j}$ and $\boldsymbol{W}_{2,j}$ are identity matrices for each $j$. Consequently, we may assume without loss of generality that $\sigma_2(\boldsymbol{W}^*_{2,j}\mB^*_{h,j}) = \sigma_2(\mB^*_{h,j})$ and $\sigma_1(\boldsymbol{W}^*_{1,j}\mA^*_{j}) = \sigma_1(\mA^*_{j})$. The central ingredient of the proof is the model convergence property, namely that the estimator 
$g_{\widetilde{G}_n}$ converges to $g_{G_*}$ at a rate of order $\mathcal{O}([\log(n)/n]^{1/2})$.
\begin{proposition}[Model convergence]
\label{prop:MLE_convergence_rate}
    Given the least square estimator $\widetilde{G}_n$, the convergence rate of the regression function estimation $g_{\widetilde{G}_n}$ to the true regression function $g_{\widetilde{G}_*}$ under the $L^2(\mu)$ is parameteric on the sample size, i.e. 
    \begin{equation}
    \label{eqn:model_convergence_statement}
        \|g_{\widetilde{G}_n}- g_{\widetilde{G}_*}\|_{L^2(\mu)} = \mathcal{O}(\sqrt{\log(n)/n}). 
    \end{equation}
\end{proposition}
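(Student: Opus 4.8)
The plan is to establish Proposition~\ref{prop:MLE_convergence_rate} via the standard empirical-process analysis of least-squares estimators over a function class that is parametric up to within-group label permutations, following \cite{vandeGeer-00} and its adaptations to mixture-of-experts models. Write $\mathcal{F} := \{g_{\widetilde{G}} : \widetilde{G} \in \mathcal{G}_{H,L'}(\widetilde{\Theta})\}$ for the class of candidate regression functions, and for $\delta > 0$ set $\mathcal{F}(\delta) := \{g_{\widetilde{G}} \in \mathcal{F} : \|g_{\widetilde{G}} - g_{\widetilde{G}_*}\|_{L^2(\mu)} \le \delta\}$. Since $\widetilde{\Theta}$ is compact, the input space $\mathcal{X}$ is bounded, the softmax gating is smooth, and the activations $\sigma_1, \sigma_2$ obey the regularity conditions imposed in this appendix, every element of $\mathcal{F}$ is uniformly bounded on $\mathcal{X}$ and the map $\widetilde{G} \mapsto g_{\widetilde{G}}$ is Lipschitz from the (finite-dimensional, compact) parameter space into $L^\infty(\mathcal{X})$, after identifying parameter vectors that differ only by permuting experts within a group.

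First I would bound the bracketing entropy. Using the Lipschitz dependence on a finite-dimensional compact parameter set, a routine covering argument gives
\[
  \log N_{[\,]}\bigl(\varepsilon, \mathcal{F}, \|\cdot\|_{L^2(\mu)}\bigr) \;\lesssim\; \log(1/\varepsilon), \qquad 0 < \varepsilon \le 1/2,
\]
and the same bound for the averaged class $\overline{\mathcal{F}}(\delta) := \{ (g_{\widetilde{G}} + g_{\widetilde{G}_*})/2 : g_{\widetilde{G}} \in \mathcal{F}(\delta)\}$. Hence the bracketing integral satisfies $\int_0^{\delta} \sqrt{\log N_{[\,]}(u, \overline{\mathcal{F}}(\delta), \|\cdot\|_{L^2(\mu)})}\,\mathrm{d}u \lesssim \delta\sqrt{\log(1/\delta)}$.

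Second, I would combine this with the basic inequality for the least-squares estimator. Because $\widetilde{G}_n$ minimizes the empirical squared loss and the Gaussian noise satisfies $\mathbb{E}[\varepsilon_i \mid \mX_i] = 0$, the squared empirical norm $\frac1n\sum_{i=1}^n (g_{\widetilde{G}_n}(\mX_i) - g_{\widetilde{G}_*}(\mX_i))^2$ is controlled by the empirical process term $\frac1n\sum_{i=1}^n \varepsilon_i\,(g_{\widetilde{G}_n}(\mX_i) - g_{\widetilde{G}_*}(\mX_i))$. Peeling over dyadic shells $\{g_{\widetilde{G}} : 2^{j}\delta_n \le \|g_{\widetilde{G}} - g_{\widetilde{G}_*}\|_{L^2(\mu)} \le 2^{j+1}\delta_n\}$ and applying a maximal inequality whose right-hand side is the bracketing integral above, the critical radius $\delta_n$ is determined by $\sqrt{n}\,\delta_n^2 \gtrsim \delta_n\sqrt{\log(1/\delta_n)}$, which gives $\delta_n \asymp \sqrt{\log n / n}$. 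A standard uniform comparison between the empirical norm $\|\cdot\|_n$ and the population norm $\|\cdot\|_{L^2(\mu)}$ over $\mathcal{F}$ (again using boundedness and the logarithmic entropy) then transfers the bound to $\|g_{\widetilde{G}_n} - g_{\widetilde{G}_*}\|_{L^2(\mu)} = \mathcal{O}_P(\sqrt{\log n/n})$, which is exactly Eq.~(\ref{eqn:model_convergence_statement}).

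The main obstacle is the first step: verifying that the bracketing entropy of $\mathcal{F}$ is only logarithmic in $1/\varepsilon$. This is precisely where the shared hypernetwork parameterization must be handled with care — one needs to show that, despite the nested compositions $\sigma_2(\mW_{2,j}\mB_{h,j})$ and $\sigma_1(\mW_{1,j}\mA_j)$ appearing inside both the gating exponents and the expert heads, the parameter-to-function map is globally Lipschitz in $L^\infty(\mathcal{X})$ over the compact parameter domain. The boundedness of $\mathcal{X}$, compactness of $\widetilde{\Theta}$, smoothness of the softmax, and the regularity assumptions on $\sigma_1, \sigma_2$ stated in this appendix are exactly the ingredients that make this Lipschitz estimate — and hence the entropy bound — go through; the remaining empirical-process steps are then standard.
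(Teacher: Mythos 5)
Your proposal is correct, and the two key ingredients — a logarithmic bracketing-entropy bound for the class of candidate regression functions, obtained from Lipschitz dependence on a compact finite-dimensional parameter set, followed by the van de Geer rate machinery with critical radius solving $\sqrt{n}\,\delta_n^2 \gtrsim \delta_n\sqrt{\log(1/\delta_n)}$ — are exactly the ones the paper uses. The route differs in one respect: you stay entirely in the regression setting, invoking the basic inequality for least squares, a peeling argument over dyadic shells for the noise-weighted empirical process, and an empirical-to-population norm comparison; the paper instead first observes that under the Gaussian noise model the least-squares estimator coincides with the maximum-likelihood estimator for the conditional densities $p(\boldsymbol{Y}\mid g_{\widetilde{G}}(\mX),\sigma^2 I_{\bar d})$, applies the Hellinger-rate theorem for MLE (Theorems 7.4 and 9.2 of \cite{vandeGeer-00}, packaged as Lemma~\ref{lemma:MLE_near_exact_estimation}) to the density class, and then converts the resulting Hellinger bound back into an $L^2(\mu)$ bound on the regression functions via the closed form of the Hellinger distance between two Gaussians with common covariance. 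Your direct approach avoids this detour through density estimation and the final Hellinger-to-$L^2$ conversion, at the cost of having to carry out the peeling and the norm-comparison step explicitly rather than citing a packaged MLE theorem; the entropy computation, which is the only model-specific content and which you correctly flag as the main obstacle, is identical in substance to the paper's Steps 3--4 and rests on the same compactness, boundedness, and activation-regularity assumptions.
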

Although the proof of this result is presented later, it is worth noting that, as Eq.~(\ref{eqn:model_convergence_statement}) is established, we leverage the model convergence result to derive parameter convergence, employing a Taylor expansion for the local analysis and applying Fatou’s lemma for the global analysis.

\textbf{Assumption.} We impose the following distinguishability assumptions on the two functions.

\textit{(A.1) (Algebraic Independence)} If there exists two couples of parameter matrices $(\boldsymbol{B},\boldsymbol{A})$ and $(\tilde{\boldsymbol{B}},\tilde{\boldsymbol{A}})$ such that $$\sigma_2(\boldsymbol{B})\sigma_1(\boldsymbol{A}) = \sigma_2(\tilde{\boldsymbol{B}})\sigma_1(\tilde{\boldsymbol{A}}),$$ then it follows that $\boldsymbol{B} = \tilde{\boldsymbol{B}}$ and  $\boldsymbol{A} = \tilde{\boldsymbol{A}}$. 

\textit{(A.2) (Uniform Lipschitz)} Consider $$\boldsymbol{F}(\mX,\boldsymbol{A},\boldsymbol{B}) := \exp(\mX^\top(\Pbm_Q^0 + \sigma_2(\mB)\sigma_1(\mA))\mX)(\Pbm_V^0+\sigma_2(\mB)\sigma_1(\mA))\mX,$$ then for any $\eta \in \{1,2\}$ and index $\beta = (\beta_1,\beta_2) \in \mathbb{N}^{r\times \bar{d}} \times \mathbb{N}^{\bar{d} \times r}$
\begin{equation*}
    \sum_{|\alpha| = \eta} \left\| \left(\dfrac{\partial^{|\beta|}\boldsymbol{F}}{\partial \mA^{\beta_1}\partial \mB^{\beta_2}}(\mX, \boldsymbol{A},\boldsymbol{B}) -\dfrac{\partial^{|\beta|}\boldsymbol{F}}{\partial \mA^{\beta_1}\partial \mB^{\beta_2}}(\mX, \boldsymbol{A}',\boldsymbol{B}')\right)\gamma^\beta\right\|  \leq C\|(\boldsymbol{A},\boldsymbol{B}) - (\boldsymbol{A}',\boldsymbol{B}')\|^{\xi}\|\gamma\|^{\eta},
\end{equation*}
for any vector $\gamma \in \mathbb{R}^{2\bar{d}r}$, and for some positive constants $\xi$ and $C$ that are independent of the input $\mX$ and the parameters $\mA, \mB$.

\textit{(A.3) (Strong identifiability)} For any non-negative integer $\ell \geq 0$ and any collection of distinct parameter matrices 
$\{(\mB_j,\mA_j)\}_{j \in [\ell]}$, the functions in the set below are almost surely independent in $\mX$:

\begin{align*}
\Big\{ \;
&\mX^{(u)},\;
\mX^{(u)} \mX^{\top} \sigma_{2}(\mathbf{B}_{j}),\;
\mX^{(u)} \sigma_{1}(\mathbf{A}_{j}) \mX,\;
\mX^{\top} \sigma_{2}(\mathbf{B}_{j}),\;
\sigma_{1}(\mathbf{A}_{j}) \mX, \notag \\[4pt]
&\mX^{(u)} \mX^{(v)},\;
\mX^{(u)} \mX^{(v)} [\mX^{\top}\sigma_{2}(\mathbf{B}_{j})]^{2},\;
\mX^{(u)} \mX^{(v)} [\sigma_{1}(\mathbf{A}_{j})\mX]^{2},\;\\
&\mX^{(u)} \mX^{(v)} \mX^{\top}\sigma_{2}(\mathbf{B}_{j})\sigma_{1}(\mathbf{A}_{j})\mX
\; :\; j\in[\ell],\; u,v\in[d] 
\Big\}
\end{align*}

\begin{proof}[Return to the proof of Theorem \ref{theorem:shared}]
Through a permutation, without loss of generality, we can suppose that $\tau(h) = h$ for all $h \in [H]$. The focus of this argument is to establish the following inequality:
\begin{equation}
\label{eqn:shared_L2_wins_D2}
    \inf_{\widetilde{G} \in \mathcal{G}_{H,L'}(\widetilde{\Theta})} \|g_{\widetilde{G}}-g_{\widetilde{G}^*}\|_{L^2(\mu)}/\mathcal{D}_2(\widetilde{G},G^*) > 0. 
\end{equation}
We can divide our demonstration into two parts. The first part, namely \textit{local part}, is to establish Eq.~(\ref{eqn:shared_L2_wins_D2}) when $\mathcal{D}_2(\widetilde{G},\widetilde{G}^*)$ is small enough
\begin{equation}
    \label{eqn:shared_local_L2_wins_D2}
\lim_{\epsilon \to 0} \inf_{G \in \mathcal{G}_{H,L'}(\widetilde{\Theta}):\mathcal{D}_2(\widetilde{G},\widetilde{G}^*)\leq \epsilon} \|g_{\widetilde{G}}-g_{\widetilde{G}^*}\|_{L^2(\mu)}/\mathcal{D}_2(\widetilde{G},\widetilde{G}^*) > 0. 
\end{equation}
The Taylor expansion is the main tool used to resolve this problem in the local regime. The \textit{global part} of the proof concerns the behavior of this property when $\mathcal{D}_2(\widetilde{G},\widetilde{G}^*)$ is sufficiently large.
\begin{equation*}
    \inf_{\widetilde{G} \in \mathcal{G}_{H,L'}(\widetilde{\Theta}):\mathcal{D}_2(\widetilde{G},\widetilde{G}^*)> \epsilon} \|g_{\widetilde{G}}-g_{\widetilde{G}^*}\|_{L^2(\mu)}/\mathcal{D}_2(\widetilde{G},\widetilde{G}^*) > 0. 
\end{equation*}

\textbf{Proof of local part Eq.~(\ref{eqn:shared_local_L2_wins_D2})}

Suppose that Eq.~(\ref{eqn:shared_local_L2_wins_D2}) does not hold, i.e. 
\begin{equation*}
        \lim_{\epsilon \to 0} \inf_{\widetilde{G} \in \mathcal{G}_{H,L'}(\widetilde{\Theta}):\mathcal{D}_2(\widetilde{G},\widetilde{G}^*)\leq \epsilon} \|g_{\widetilde{G}}-g_{\widetilde{G}^*}\|_{L^2(\mu)}/\mathcal{D}_2(\widetilde{G},\widetilde{G}^*) = 0. 
\end{equation*}

Remember that we can denote  
\begin{align*}
    g^h_{\widetilde{G}_n}(\mX) &= \sum_{j=1}^L \frac{\exp(\mX^{\top} (\Pbm^0_{Q,h}+\Bbm_{h,j}^n\Abm_{h,j}^n)\Pbm^{0}_{K,h}\mX+c^n_j)}{D^h_{g,n}(\mX)}\cdot(\Pbm^0_{V,h}+\Bbm_{h,j}^n\Abm_{h,j}^n)\mX,\\
    g^h_{\widetilde{G}_*}(\mX) &= \sum_{j=1}^L \frac{\exp(\mX^{\top} (\Pbm^0_{Q,h}+\Bbm^*_{h,j}\Abm^*_{h,j})\Pbm^{0}_{K,h}\mX+c_j)}{D^h_{g,*}(\mX)}\cdot(\Pbm^0_{V,h}+\Bbm_{h,j}^*\Abm_{h,j}^*)\mX,
\end{align*}
where 
\begin{align*}
    D_{g,*}^h(\Xbm) &= \sum_{l=1}^L\exp(\mX^\top(\Pbm^0_{Q,h}+\boldsymbol{B}^*_{h,j}\boldsymbol{A}^*_{h,j})\Pbm^0_{K,h}+c_j^*),\\
    D_{g,n}^h(\Xbm) &= \sum_{l=1}^L\exp(\mX^\top(\Pbm^0_{Q,h}+\boldsymbol{B}^n_{h,j}\boldsymbol{A}^n_{h,j})\Pbm^0_{K,h}+c_j^n).
\end{align*}
Then, we have 
\begin{equation*}
    g_{\widetilde{G}_n}(\mX) = \sum_{h=1}^H\pi_h^n g^h_{\widetilde{G}_n}(\mX), \quad g_{\widetilde{G}_*}(\mX) = \sum_{h=1}^H\pi_h^* g^h_{\widetilde{G}_*}(\mX).
\end{equation*}

\textbf{Step 1 - Decomposition the discrepancy between regression functions. } 

The first step of this proof includes decompose the quantity $g_{\widetilde{G}_n}(\mX) - g_{\widetilde{G}^*}(\mX)$ using Taylor expansion. Recall that 
\begin{align*}
    \mathcal{L}_n(\mX) &:= g_{\widetilde{G}_n}(\mX) - g_{\widetilde{G}_*}(\mX) \\ 
    &= \sum_{h=1}^{H} \pi_h^n g^h_{\widetilde{G}_n}(\mX) - \sum_{h=1}^{H}\pi_h^*g^h_{\widetilde{G}_*}(\mX)\\
    &= \sum_{h=1}^{H}\pi_h^n (g^h_{\widetilde{G}_n}(\mX) - g^h_{\widetilde{G}_*}(\mX)) + \sum_{h=1}^{H} \left(\pi_h^n - \pi^*_h\right)g^h_{\widetilde{G}_*}(\mX)\\
&:= \sum_{h=1}^{H}\pi^n_h \tilde{\mathcal{L}}_n^{h}(\mX) + \sum_{h=1}^{H} \left(\pi_h^n - \pi^*_h\right)g^h_{\widetilde{G}_*}(\mX), 
\end{align*}
where $\tilde{\mathcal{L}}_n^{h}(\mX) = g^h_{\widetilde{G}_n}(\mX) - g^h_{\widetilde{G}_*}(\mX)$.  

Each term ${\mathcal{L}}_n^{h}(\mX) =  D_{g,*}^h(\mX) \tilde{\mathcal{L}}_n^{h}(\mX)$ can be decomposed as 
\begin{align}
\mathcal{L}_n^{h}(\mX) &= \sum_{j=1}^L\sum_{i \in \mathcal{W}_{j|h}} \exp(c_{n,i})\Bigr[\exp(\mX^\top(\Pbm_{Q,h}^0+\sigma_2(\mB^n_{h,i})\sigma_1(\mA_{i}^n))\mX)(\Pbm^0_{V} + \sigma_2(\mB^n_{h,i})\sigma_1(\mA^n_{i})) \nonumber \\
& -\exp(\mX^\top(\Pbm_{Q,h}^0+\sigma_2(\mB^*_{h,j})\sigma_1(\mA_{j}^*))\mX)(\Pbm^0_{V} + \sigma_2(\mB^*_{h,j})\sigma_1(\mA^*_{j})) \Bigr] \nonumber \\
&- \sum_{j=1}^L\sum_{i \in \mathcal{W}_{j|h}}\exp(c_{n,i})\Bigr[\exp(\mX^\top(\Pbm^0_{Q,h}+\sigma_2(\mB^n_{h,i})\sigma_1(\mA^n_s{i}))\mX) \\
&-\exp(\mX^\top(\Pbm^0_{Q,h}+\sigma_2(\mB^*_{h,i})\sigma_1(\mA^*_{i}))\mX)\Bigr]g^h_{G_n}(\mX)\nonumber \\
& + \sum_{j=1}^L\bigr(\sum_{i \in \mathcal{W}_{j|h}}\exp(c_{n,i})-\exp(c_j^*)\bigr)\exp(\mX^\top(\Pbm_{Q,h}^0+\sigma_2(\mB^*_{h,j})\sigma_1(\mA_{j}^*))\mX)\nonumber \\ 
&:=\mathcal{A}^{h}_n(\mX) - \mathcal{B}^{h}_n(\mX) + \mathcal{C}^{h}_n(\mX).
\end{align}

\textbf{Decomposition for the function $\mathcal{A}^{h}_n(\mX)$.} Let 
\begin{align*}
    \Rbm(\mX;\mB,\mA)&= \exp(\mX^\top(\Pbm_Q^0+\sigma_2(\mB)\sigma_1(\mA))\mX),\\
    \Sbm(\mX;\mB,\mA) &= (\Pbm_V^0+\sigma_2(\mB)\sigma_1(\mA))\mX,\\
    \Gbm(\mX;\mB,\mA) &= \Rbm(\mX;\mB,\mA)\Sbm(\mX;\mB,\mA).
\end{align*}
Our term $\mathcal{A}^h_n$ can be decomposed based on the number of element in each Voronoi cells
\begin{align*}
    \mathcal{A}^{h}_n &= \sum_{j:|\mathcal{W}_{j|h}| = 1} \sum_{i\in \mathcal{A}_{j|u,h}} \exp(c_{n,i})[\Gbm(\mX;\mB^n_{h,i},\mA^n_{i})-\Gbm(\mX;\mB^*_{h,j},\mA^*_{j})]\\
    &+ \sum_{j:|\mathcal{W}_{j|h}| > 1} \sum_{i\in \mathcal{W}_{j|h}} \exp(c_{n,i})[\Gbm(\mX;\mB^n_{h,i},\mA^n_{i})-\Gbm(\mX;\mB^*_{h,j},\mA^*_{j})]\\
    &:= \mathcal{A}^{h}_{n,1} + \mathcal{A}^{h}_{n,2}. 
\end{align*}
Using the first-order Taylor expansion, we have 
\begin{align*}
    \Rbm(\mX;\mB^n_{h,i},\mA^n_{i}) &= \Rbm(\mX;\mB^*_{h,j},\mA^*_{j}) + \sum_{|\alpha|=1}(\Delta\mA_{n,ij})^{\alpha_1}(\Delta\mB^{h}_{n,ij})^{\alpha_2}\dfrac{\partial^{|\alpha|}\Rbm}{\partial\mA^{\alpha_1}\partial\mB^{\alpha_2}}(\mX; \mB_{h,j}^*,\mA_{j}^*) + \mathcal{R}_{ij,1}(\mX),\\
    \Sbm(\mX;\mB^n_{n,i},\mA_{n,i}) &= \Sbm(\mX;\mB^*_{h,j},\mA^*_{j}) + \sum_{|\alpha|=1}(\Delta\mA_{n,ij})^{\alpha_1}(\Delta\mB^{h}_{n,ij})^{\alpha_2}\dfrac{\partial^{|\alpha|}\Sbm}{\partial\mA^{\alpha_1}\partial\mB^{\alpha_2}}(\mX; \mB_{h,j}^*,\mA_{j}^*) + \mathcal{R}_{ij,2}(\mX),\\
\end{align*}
for any $i$ and $j$ satisfying $i \in \mathcal{W}_{j|h}$ and $\mathcal{W}_{j|h}=1$. In the formulas above, $\mathcal{R}_{ij,1}(\mX)$ and $\mathcal{R}_{ij,2}(\mX)$ denote the Taylor expansion remainder. The results above gives us 
\begin{align*}
    \mathcal{A}^{h}_{n,1}(\mX) &= \sum_{j:|\mathcal{W}_{j|h}|=1} \sum_{i \in \mathcal{W}_{j|h}} \dfrac{\exp(c_{n,i})}{\alpha!} \sum_{|\alpha|=1}\Bigl\{
    (\Delta\mA_{n,ij})^{\alpha_1}(\Delta\mB^{h}_{n,ij})^{\alpha_1}\dfrac{\partial^{\alpha}\Rbm}{\partial \mA^{\alpha_1}\mB^{\alpha_2}}(\mX;\mB_{h,j}^*,\mA_{j}^*)\Sbm
    (\mX;\mB_{h,j}^*\mA_{j}^*)\\
    &+ (\Delta\mA_{n,ij})^{\alpha_1}(\Delta\mB^{h}_{n,ij})^{\alpha_1}\Rbm
    (\mX;\mB_{h,j}^*\mA_{j}^*)\dfrac{\partial^{\alpha}\Sbm}{\partial \mA^{\alpha_1}\mB^{\alpha_2}}(\mX;\mB_{h,j}^*,\mA_{j}^*)\Bigl\} + \mathcal{R}^h_{n,1}(\mX)\\
    &= \sum_{j:|\mathcal{W}_{j|h}|=1}\sum_{|\alpha|=1}\Bigl\{ \bar{U}^h_{n,j,\alpha} \dfrac{\partial^{|\alpha|}\Rbm}{\partial \mA^{\alpha_1}\mB^{\alpha_2}}(\mX;\mB_{h,j}^*\mA_{j}^*)\Sbm
    (\mX;\mB_{h,j}^*\mA_{j}^*)\\
    &+ \bar{U}^{h}_{n,j,\alpha}\Rbm
    (\mX;\mB_{h,j}^*\mA_{j}^*)\dfrac{\partial^{\alpha}\Sbm}{\partial \mA^{\alpha_1}\mB^{\alpha_2}}(\mX;\mB_{h,j}^*,\mA_{j}^*)\Bigr\} + \mathcal{R}^h_{n,1}, 
\end{align*}
where the reminder is small compared with the loss function $\mathcal{R}^h_{n,1}/\mathcal{D}_2(G^n,G_*)$, which is due to the uniform Lipschitz property of function $G$. Here, the coefficients $\bar{U}^h_{n,j,\alpha}$ are defined as 
\begin{equation*}
    \bar{U}^{h}_{n,j,\alpha_1,\alpha_2} = \sum_{i \in \mathcal{W}_{j|h}}\dfrac{\exp(c_{n,i})}{\alpha!}(\Delta\mA_{n,ij})^{\alpha_1}(\Delta\mB^{h}_{n,ij})^{\alpha_2}, \forall \alpha: |\alpha| = 1.  
\end{equation*}

For $\mathcal{A}^{h}_{n,2}$, using the Taylor expansion up to second order, we have 
\begin{align*}
    \mathcal{A}^{h}_{n,2} &= \sum_{j:|\mathcal{A}_{j}| > 1} \sum_{1\leq |\alpha| \leq 2} \Bigl\{\bar{U}^{h}_{n,j,\alpha_1,\alpha_2}  \dfrac{\partial^{\alpha}\Rbm}{\partial \mA^{\alpha_1}\mB^{\alpha_2}}(\mX;\mB_{h,j}^*,\mA_{j}^*)\Sbm
    (\mX;\mB_{h,j}^*\mA_{j}^*) \\
    &+ \Bigl\{\bar{U}^{h}_{n,j,\alpha_1,\alpha_2}  \Rbm
    (\mX;\mB_{h,j}^*\mA_{j}^*)\dfrac{\partial^{\alpha}\Sbm}{\partial \mA^{\alpha_1}\mB^{\alpha_2}}(\mX;\mB_{h,j}^*,\mA_{j}^*)\Bigl\}\\
    &+ \sum_{|\alpha|=1,|\beta|=1} \bar{U}^{h}_{n,j,\alpha_1,\beta_1,\alpha_2,\beta_2} \dfrac{\partial^{|\alpha|}\Rbm}{\partial \mA^{\alpha_1}\partial\mB^{\alpha_2}}(\mX;\mB_{h,j}^*,\mA_{j}^*)\dfrac{\partial^{|\beta|}\Sbm}{\partial \mA^{\beta_1}\partial\mB^{\beta_2}}(\mX;\mB_{h,j}^*,\mA_{j}^*) + \mathcal{R}_{n,2}(\mX),
\end{align*}
where the remainder $\mathcal{R}_{n,2}(\mX)$ is small compared with $\mathcal{D}_2(G^n,G_*)$: $\mathcal{R}_{n,2}(\mX)/\mathcal{D}_2(G^n,G_*) \to 0$. Here, the coefficients take the following forms: 
\begin{align*}
    \bar{U}^{h}_{n,j,\alpha_1,\alpha_2} &= \sum_{i \in \mathcal{W}_{j|h}} \dfrac{\exp(c_{n,i})}{\alpha!}(\Delta \mA_{n,ij})^{\alpha_1}(\Delta \mB^{h}_{n,ij})^{\alpha_2}, \forall |\alpha| =2 \\ 
    \bar{U}^{h}_{n,j,\alpha_1,\beta_1,\alpha_2,\beta_2} &= \sum_{i \in \mathcal{W}_{j|h}} \dfrac{\exp(c_{n,i})}{\alpha!\beta!}(\Delta \mA_{n,ij})^{\alpha_1+\beta_1}(\Delta \mB^{h}_{n,ij})^{\alpha_2+\beta_2}, \forall |\alpha| = |\beta| =1.  
\end{align*}

Simple calculation gives us the following formulation of the partial derivative of $\Rbm(\mX;\mB,\mA)$ and $\Sbm(\mX;\mB,\mA)$: 
\begin{align*}
    \dfrac{\partial \Rbm}{\partial \mA^{(u)}}(\mX;\mB,\mA) &= \mX^{(u)}\sigma'_1(\mA^{(u)})\mX^\top \sigma_2(\mB)\exp(\mX^\top(\Pbm_Q^0+\sigma_2(\mB)\sigma_1(\mA))),\\
    \dfrac{\partial \Rbm}{\partial \mB^{}(u)}(\mX;\mB,\mA) &= \mX^{(u)}\sigma_1(\mA^{(u)})\mX^\top \sigma'_2(\mB)\exp(\mX^\top(\Pbm_Q^0+\sigma_2(\mB)\sigma_1(\mA)))\\
    \dfrac{\partial^2 \Rbm}{\partial \mA^{(u)}\partial \mA^{(v)}}(\mX;\mB,\mA) &= \Bigl[\mX^{(u)}\mX^{(v)}\sigma'_1(\mA^{(u)})\sigma'_1(\mA^{(v)})(\mX^\top\sigma_2(\mB))^2 + \boldsymbol{1}_{u=v}\mX^{(u)} \sigma_1^{"}(\mA^{(u)})\mX^\top \sigma_2(\mB)\Bigl] \\
    &\times \exp(\mX^\top(\Pbm_Q^0+\sigma_2(\mB)\sigma_1(\mA)))\\
    \dfrac{\partial^2 \Rbm}{\partial \mB^{(u)}\partial \mB^{(v)}}(\mX;\mB,\mA) &= \Bigl[\mX^{(u)}\mX^{(v)}\sigma'_2(\mB^{(u)})\sigma'_2(\mB^{(v)})(\mX^\top\sigma_2(\mA))^2 + \boldsymbol{1}_{u=v}\mX^{(u)} \sigma_2^{"}(\mB^{(u)})\mX^\top \sigma_2(\mB)\Bigl] \\
    &\times \exp(\mX^\top(\Pbm_Q^0+\sigma_2(\mB)\sigma_1(\mA)))\\
    \dfrac{\partial^2 \Rbm}{\partial \mA^{(u)}\partial \mB^{(v)}}(\mX;\mB,\mA) &= \Bigl[\mX^{(u)}\mX^{(v)}\sigma'_1(\mA^{(u)})\sigma'_2(\mB^{(v)}) + \mX^{(u)}\sigma_1'(\mB^{(u)})\mX^\top \sigma_2(\mB)\Bigl] \\
    &\times \exp(\mX^\top(\Pbm_Q^0+\sigma_2(\mB)\sigma_1(\mA))\mX^{(v)})\\
    \dfrac{\partial \Sbm}{\partial \mA^{(u)}}(\mX;\mB,\mA) &= \mX^{(u)}\sigma_1'(\mA)\sigma_2(\mB)\\
    \dfrac{\partial \Sbm}{\partial \mB^{(u)}}(\mX;\mB,\mA) &= \mX^{(u)}\sigma_1(\mA)\sigma'_2(\mB)\\
    \dfrac{\partial^2 \Sbm}{\partial \mA^{(u)}\partial \mA^{(v)}}(\mX;\mB,\mA) &= \boldsymbol{1}_{u=v}\mX^{(u)}\sigma_1^{"}(\mA)\sigma_2(\mB)\\
    \dfrac{\partial^2 \Sbm}{\partial \mB^{(u)}\partial \mB^{(v)}}(\mX;\mB,\mA) &= \boldsymbol{1}_{u=v}\mX^{(u)}\sigma_1(\mA)\sigma_2{''}(\mB)\\
    \dfrac{\partial^2 \Sbm}{\partial \mA^{(u)}\partial \mB^{(v)}}(\mX;\mB,\mA) &= \boldsymbol{1}_{u=v}\mX^{(u)}\sigma'_1(\mA)\sigma'_2(\mB)\\
\end{align*}
Plugging these formulations into the functions $\mathcal{A}^h_{n,1}(\mX)$ and $\mathcal{A}^h_{n,2}(\mX)$, we achieve that 
\begin{align*}
    \mathcal{A}^h_{n,1}(\mX) &= \sum_{j:|\mathcal{W}_{j|h}| = 1} \exp(\mX^{\top} (\Pbm^0_{Q,h}+\Bbm^*_{h,j}\Abm^*_{h,j})\Pbm^{0}_{K,h}\mX)\Bigl[(\bar{V}^\top_{h,n,1,j}\mX\mX^\top\sigma_2(\mB_{h,j}^*)\\
    &+ \bar{V}^\top_{h,n,1,j}\mX\sigma_1(\mA_j^*)\mX)(\Pbm_V^0 + \sigma_2(\mB^*_{j,h})\sigma_2(\mA^*_{j}))\mX + \bar{V}^\top_{h,n,1,j}\mX\sigma_2(\mB_{h,j}^*) + \sigma_1(\mA_{j}^*)\mX \bar{V}_{h,n,2,j}\Bigl] + \mathcal{R}_{n,1}(\mX)\\
    \mathcal{A}^h_{n,2}(\mX) &= \sum_{j:|\mathcal{W}_{j|h}|>1}\exp(\mX^\top(\Pbm_{Q,h}^0 + \sigma_2(\Bbm^*_{h,j})\sigma_1(\mA_j^*))\mX)\Bigl[(\bar{V}^{\top}_{h,n,1,j}\mX\mX^\top\sigma_2(\Bbm^*_{h,j})+\bar{V}^{\top}_{h,n,2,j}\mX\sigma_1(\mA_j^*)\mX),\\
    &+ \mX^\top \bar{V}_{h,n,1,j}\mX(\mX^\top \sigma_2(\Bbm^*_{h,j}) + \bar{V}^\top_{h,n,4,j}\mX\mX^\top\sigma_2(\Bbm^*_{h,j}) + \mX^\top \bar{V}_{h,n,5,j}\mX(\sigma_1(\mA_j^*)\mX)^2 + \bar{V}_{h,n,6,j}^\top\mX\sigma_1(\mA^*_{j})\mX\\
    &+ \mX^\top \bar{V}_{h,n,7,j} \mX + \mX^\top \bar{V}_{h,n,7,j} \mX\mX^\top \sigma_2(\mB^*_{h,j}\sigma_1(\mA^*_j)\mX)\times (\Pbm_V^0 + \sigma_2(\mB_j^*)\sigma_1(\mA_j^*))\mX + \bar{V}_{h,n,1,j}^\top \mX\sigma_2(\mB_j^*)\\
    &+ \sigma_1(\mA_j^*) + \bar{V}^\top_{h,n,4,j} \mX \sigma_2(\mB_j^*) + \sigma_1(\mA_j^*) \mX \bar{V}_{h,n,6,j} + \bar{V}_{h,n,7,j}^\top \mX\Bigl] + \mathcal{R}_{h,n,2}(\mX), 
\end{align*}
where the values of $\bar{V}_{h,n,1,j},\ldots,\bar{V}_{h,n,7,j}$ are given by 
\begin{align*}
    \bar{V}_{h,n,1,j} &:= (\bar{U}_{h,n,j,e_u,0_d}\sigma'_1(\mA^{(u)}))_{u=1}^d\\
    \bar{V}_{h,n,2,j} &:= (\bar{U}_{h,n,j,0_d,e_u}\sigma'_2(\mA^{(u)}))_{u=1}^d\\
    \bar{V}_{h,n,3,j} &:= (\bar{U}_{h,n,j,e_u+e_v,0_d}\sigma'_1(\mA^{(u)})\sigma'_1(\mA^{(v)}))_{u,v=1}^d\\
    \bar{V}_{h,n,4,j} &:= (\bar{U}_{h,n,j,2e_u,0_d}\sigma''_1(\mA^{(u)}))_{u=1}^d\\
    \bar{V}_{h,n,5,j} &:= (\bar{U}_{h,n,j,e_u+e_v,0_d}\sigma'_2(\mB^{(u)})\sigma'_2(\mB^{(v)}))_{u,v=1}^d\\
    \bar{V}_{h,n,6,j} &:= (\bar{U}_{h,n,j,0_d,2e_u}\sigma''_2(\mB^{(u)}))_{u=1}^d\\
    \bar{V}_{h,n,7,j} &:= (\bar{U}_{n,j,e_u,e_v}\sigma'_1(\mB^{(u)})\sigma'_2(\mB^{(v)}))_{u,v=1}^d\\
\end{align*}
Here, $e_u$ denotes the $u$-th canonical basis vector in $\mathbb{R}^d$, 
that is, the vector whose $u$-th component equals $1$ and all other components equal $0$. 
Similarly, $e_{uv}$ denotes the canonical basis matrix in $\mathbb{R}^{d \times d}$, 
with a $1$ in the $(u,v)$-th entry and $0$ elsewhere.

\textbf{Decomposition of the function $\mathcal{B}_n(\mX)$. } Consider the function $\mathcal{B}^h_n(\mX)$, we decompose it as 
\begin{align*}
    \mathcal{B}^h_n(\mX) &= \sum_{j:|\mathcal{W}_{j|h}| =1} \sum_{i \in \mathcal{W}_{j|h}} \exp(c_{n,i})\left[\Rbm(\mX;\mB^h_{n,i},\mA_{n,i}) - \Rbm(\mX;\mB^*_{h,j},\mA^*_{j})\right] g^h_{G_n}(\mX )\\
    &+ \sum_{j:|\mathcal{W}_{j|h}|>1} \sum_{i\in \mathcal{W}_{j|h}}\exp(c_{n,i})\left[\Rbm(\mX ; \mB^h_{n,i},\mA^h_{n,i})-\Rbm(\mX ;\mB^*_{h,j},\mA^*_{j})\right]g^h_{G_n}(\mX )\\
    &:= \mathcal{B}^h_{n,1}(\mX ) + \mathcal{B}^h_{n,2}(\mX ). 
\end{align*}
Using Taylor's expansions up to the first order for $\mathcal{B}^h_{n,1}$ and the second order for $\mathcal{B}^h_{n,2}$, we have 
\begin{align*}
    \mathcal{B}^h_{n,1} &= \sum_{j:|\mathcal{W}_{j|h}|=1}\sum_{|\alpha| = 1} \bar{U}^h_{n,j,\alpha_1,\alpha_2}\dfrac{\partial^{\alpha}\Rbm}{\partial \mA^{\alpha_1} \mB^{\alpha_2}}(\mX ;\Bbm^*_{h,j},\mA_{j}^*)g^h_{G_n}(\mX ) + \mathcal{R}^h_{n,3}(\mX )\\
    \mathcal{B}^h_{n,2} &= \sum_{j:|\mathcal{W}_{j|h}|=1}\sum_{1 \leq |\alpha|\leq 2} \bar{U}^h_{n,j,\alpha_1,\alpha_2}\dfrac{\partial^{\alpha}\Rbm}{\partial \mA^{\alpha_1} \mB^{\alpha_2}}(\mX ;\Bbm^*_{h,j},\mA_{j}^*)g^h_{G_n}(\mX ) + \mathcal{R}^h_{n,4}(\mX )\\
\end{align*}
where the Taylor remainders $\mathcal{R}^h_{n,3}(\mX )$ and $\mathcal{R}^h_{n,4}(\mX )$ are small compared with $\mathcal{D}_2(G_n,G_*)$: $\mathcal{R}^h_{n,3}(\mX )/\mathcal{D}_2(G_n,G_*) \to 0$, and $\mathcal{R}^h_{n,4}(\mX )/\mathcal{D}_2(G_n,G_*) \to 0$. This leads to 
\begin{align*}
    \mathcal{B}_{n,1}^h(\mX ) &= \sum_{j:|\mathcal{W}_{j|h}|=1} \exp(\mX ^\top(\Pbm^h_{Q,h}+\sigma_2(\mB_{h,j}^*)\sigma_1(\mA^*_{j}))\mX )\left[\bar{V}^\top_{h,n,1,j}\mX \mX ^\top\sigma_2(\Bbm^*_{h,j})+\bar{V}^\top_{h,n,2,j}\mX \sigma_1(\mA_j^*)\mX \right]g^h_{G_n}(\mX ) \\
    &+ \mathcal{R}^h_{n,3}\mX \\
    \mathcal{B}_{n,2}^h(\mX ) &= \sum_{j:|\mathcal{W}_{j|h}|>1} \exp(\mX ^\top(\Pbm^h_{Q,h}+\sigma_2(\mB_{h,j}^*)\sigma_1(\mA^*_{j}))\mX )\bigl[\bar{V}^\top_{h,n,1,j}\mX \mX ^\top\sigma_2(\Bbm^*_{h,j}) + \bar{V}^\top_{h,n,2,j}\mX \sigma_1(\mA_j^*)\mX \\
    &+\mX ^\top \bar{V}_{h,n,3,j}\mX (\mX ^\top \sigma_2(\mB_j^*))^2 + \bar{V}^\top_{h,n,4,j}\mX \mX ^\top \sigma_2(\mB_{j,n}^*)\sigma_1(\mA_j^*)\mX \bigl]g^h_{G_n}(\mX ) + \mathcal{R}^h_{n,4}\mX .\\
\end{align*}

Putting all the above results together, the function $\mathcal{L}^h_{n}(\mX )$ can be represented as 
\begin{align}
\label{eqn:shared_full_expansion}
    \mathcal{L}^h_{n}(\mX ) &= \sum_{j:|\mathcal{W}_{j|h}|=1} \exp(\mX ^{\top} (\Pbm^0_{Q,h}+\Bbm^*_{h,j}\Abm^*_{h,j})\Pbm^{0}_{K,h}\mX ) \Bigl[(\bar{V}_{h,n,1,j}^\top \mX \mX ^\top \sigma_2(\mB_{h,j}^*)+ \bar{V}_{h,n,2,j}^\top\mX \sigma_1(\mA_j^*)\mX )\nonumber \\
    &\times (\Pbm_V^0+\sigma_2(\mB_{j,h})^*\sigma_1(\mA_j^*))\mX + \bar{V}^\top_{h,n,1,j}\mX  \sigma_2(\Bbm^*_{h,j})+\sigma_1(\mA_j^*)\mX \bar{V}_{h,n,2,j}\Bigl] \nonumber \\
    &+ \sum_{j:|\mathcal{W}_{j|h}| > 1} \exp(\mX ^{\top} (\Pbm^0_{Q,h}+\Bbm^*_{h,j}\Abm^*_{h,j})\Pbm^{0}_{K,h}\mX ) \Bigl[ (\bar{V}^\top_{h,n,1,j}\mX \mX ^\top\sigma_2(\mB_{h,j}^*) + \bar{V}^\top_{h,n,2,j}\mX \sigma_1(\mA_j^*)\mX \nonumber \\
    &+ \mX ^\top \bar{V}_{h,n,3,j} \mX (\mX ^\top \sigma_2(\mB_{h,j}^*))^2 +\bar{V}^\top_{h,n,4,j}\mX \mX ^\top \sigma_2(\mB_{h,j}^*)+\mX ^\top \bar{V}_{h,n,5,j}\mX (\sigma_1(\mA_j^*)\mX )^2 \nonumber \\
    &+ \bar{V}_{h,n,6,j}^\top \mX  \sigma_1(\mA_j^*)\mX  + \mX ^\top \nonumber 
    \bar{V}_{h,n,7,j}\mX  + \mX ^\top \bar{V}_{h,n,7,j}\mX \mX ^\top \sigma_2(\mB^*_{h,j})\sigma_1(\mA^*_{j})\mX )\\
    &\times (\Pbm_{V,h}^0 +\sigma_2(\mB^*_{j,h})\sigma_1(\mA_j^*)\mX ) + \bar{V}^\top_{h,n,1,j}\mX \sigma_2(\mB^\top_{h,j}) + \sigma_1(\mA_j^*)\mX \bar{V}_{h,n,2,j} + \bar{V}^\top_{h,n,4,j} \mX  \sigma_2(\mB_j^*) \nonumber \\
    &+ \sigma_1(\mA^*_{j})\mX \bar{V}_{h,n,6,j} + \bar{V}^\top_{h,n,7,j})\mX \Bigl] \nonumber \\
    &- \sum_{j:|\mathcal{W}_{j|h}|=1} \exp(\mX ^\top(\Pbm_Q^0 + \sigma_2(\mB_{h,j})\sigma_1(\mA_j^*))\mX )\Bigl[\bar{V}^\top_{h,n,1,j}\mX \mX ^\top\sigma_2(\mB_{h,j}^*) + \bar{V}^\top_{h,n,2,j}\mX \sigma_1(\mA_j^*)\mX \Bigl]g_{\widetilde{G}_n}(\mX )\nonumber \\
    &-\sum_{j:|\mathcal{W}_{j|h}|>1} \exp(\mX ^\top(\Pbm_Q^0 + \sigma_2(\mB_{h,j})\sigma_1(\mA_j^*))\mX )\Bigl[\bar{V}^\top_{h,n,1,j}\mX \mX ^\top\sigma_2(\mB_{h,j}^*) + \bar{V}^\top_{h,n,2,j}\mX \sigma_1(\mA_j^*)\mX  \mX ^\top\nonumber \\
    &+ \mX ^\top \bar{V}_{h,n,3,j}\mX  (\mX ^\top \sigma_2(\mB_{h,j}^*))^2
    + \bar{V}^\top_{h,n,4,j}\mX \mX ^\top\sigma_2(\mB_{h,j}^*) + \mX ^\top \bar{V}_{h,n,5,j}\mX  (\mX ^\top \sigma_1(\mA_j^*))^2 \nonumber \\
    &+ \bar{V}^\top_{h,n,6,j}\mX \sigma_1(\mA_j^*)\mX  + \mX ^\top \bar{V}_{h,n,7,j}\mX  +  \mX ^\top \bar{V}_{h,n,7,j}\mX \mX ^\top \sigma_2(\Bbm^*_{h,j}) \sigma_1(\mA^*_{j})\mX 
    \Bigl]g_{\widetilde{G}_n}(\mX )\nonumber \\
    &+ \sum_{j=1}^L\bar{T}_{n,j}\exp(\mX ^{\top} (\Pbm^0_{Q,h}+\Bbm^*_{h,j}\Abm^*_{h,j})\Pbm^{0}_{K,h}\mX )\bigl[(\Pbm_{V}^0 + \mB^*_{h,j}\mA^*_{j})\mX -g_{\widetilde{G}_n}(\mX )\bigl] \nonumber \\
    &+ \mathcal{R}^h_{n,1}(\mX ) + \mathcal{R}^h_{n,2}(\mX ) - \mathcal{R}^h_{n,3}(\mX ) - \mathcal{R}^h_{n,4}(\mX ), 
\end{align}
where $\bar{T}^h_{n,j}:= \sum_{i \in \mathcal{W}_{j|h}}\exp(c_{n,i}) - \exp(c_j^*)$ for any $j \in [L]$. 

\textbf{Step 2 - Non-vanishing coefficients. } The Eq.~(\ref{eqn:shared_full_expansion}) shows that the ratio $\mathcal{L}_n(\mX )/\mathcal{D}_{2n}$ can be decomposed as a linear combination of the following independent function 
\begin{align*}
    &g^h_{\widetilde{G}_*}(x),\\
    &\frac{1}{D_{g,*}^h(\mX)}\Rbm(\mX ;\mB^*_{h,j},\mA^*_{j})\mX^{(u)}\mX ^\top\sigma_2(\mB_j^*)\Sbm(\mX ;\mB_j^*,\mA_j^*), \\
    &\frac{1}{D_{g,*}^h(\mX )}\Rbm(\mX ;\mB^*_{h,j},\mA^*_{j})\mX ^{(u)}\sigma_1(\mA_j^*)\mX \Sbm(\mX ;\mB_j^*,\mA_j^*),\\
    &\frac{1}{D_{g,*}^h(\mX )}\mX ^{(u)}\sigma_2(\mB_j^*), \, \frac{1}{D_{g,*}^h(\mX )}\Rbm(\mX ;\mB_{h,j}^*,\mA_{j})\sigma_1(\mA_j^*)\mX e_u, \\
    &\frac{1}{D_{g,*}^h(\mX)}\Rbm(\mX ;\mB^*_{h,j},\mA^*_{j})\mX ^{(u)}\mX ^{(v)}(\mX ^\top\sigma_2(\mB_{h,j}^*))^2\Sbm(\mX ;\mB_{h,j}^*,\mA_j^*),\\
    & \frac{1}{D_{g,*}^h(\mX )}\Rbm(\mX ;\mB^*_{h,j},\mA^*_{j})\mX ^{(u)}\mX ^{(v)}(\sigma_1(\mA_j^*)\mX )^2\Sbm(\mX ;\mB_{h,j}^*,\mA_j^*),\\
    &\frac{1}{D_{g,*}^h(\mX )}  \Rbm(\mX ;\mB^*_{h,j},\mA^*_{j})\mX ^{(u)}\sigma_1(\mA_j^*)\mX \Sbm(\mX ;\mB_{h,j}^*,\mA_j^*),\\
    &\frac{1}{D_{g,*}^h(\mX )}\Rbm(\mX ;\mB^*_{h,j},\mA^*_{j})\mX ^{(u)} \mX ^{(v)}\Sbm(\mX ;\mB_{h,j}^*,\mA_j^*),\\
    &\frac{1}{D_{g,*}^h(\mX )}\Rbm(\mX ;\mB^*_{h,j},\mA^*_{j})\mX ^{(u)}\mX ^{(v)}\mX ^\top\sigma_2(\Bbm^*_{h,j})\mX \Sbm(\mX ;\mB_{h,j}^*,\mA_j^*),\\
    &\frac{1}{D_{g,*}^h(\mX )}\Rbm(\mX ;\mB^*_{h,j},\mA^*_{j})\mX ^{(u)}\mX ^\top\sigma_2(\mB_{j,h})g^\top_{\widetilde{G}_n}, \, \frac{1}{D_{g,*}^h(\mX )}\Rbm(\mX ;\mB^*_{h,j},\mA^*_{j})\mX ^{(u)}\mX ^\top\sigma_1(\mA_{j})g^\top_{\widetilde{G}_n},\\
    &\frac{1}{D_{g,*}^h(\mX )}\Rbm(\mX ;\mB^*_{h,j},\mA^*_{j})\mX ^{(u)}\mX ^{(v)}(\mX ^\top\sigma_2(\mB_{j,h}))^2g^h_{\widetilde{G}_n}, \, \frac{1}{D_{g,*}^h(\mX )}\Rbm(\mX ;\mB^*_{h,j},\mA^*_{j})\mX ^{(u)}\mX ^\top\sigma_2(\mB_{j,h})g^h_{\widetilde{G}_n},\\
    &\frac{1}{D_{g,*}^h(\mX )}\Rbm(\mX ;\mB^*_{h,j},\mA^*_{j})\mX ^{(u)}\mX ^{(v)}(\sigma_1(\mA_{j})\mX )^2g^h_{\widetilde{G}_n}, \, \frac{1}{D_{g,*}^h(\mX )}\Rbm(\mX ;\mB^*_{h,j},\mA^*_{j})\mX ^{(u)}\sigma_1(\mB_{j,h})\mX g^h_{\widetilde{G}_n},\\
    &\frac{1}{D_{g,*}^h(\mX )}\Rbm(\mX ;\mB^*_{h,j},\mA^*_{j})\mX ^{(u)}\mX ^{(v)}g^h_{\widetilde{G}_n}, \, \frac{1}{D_{g,*}^h(\mX )}\Rbm(\mX ;\mB^*_{h,j},\mA^*_{j})\mX ^{(u)}\mX ^{(v)}\mX ^\top\sigma_2(\mB_{j,h})\sigma_1(\mA_{j})\mX g^h_{\widetilde{G}_n},\\
    &\frac{1}{D_{g,*}^h(\mX )}\Rbm(\mX ;\mB^*_{h,j},\mA^*_{j})\Sbm(\mX ;\mB^*_{h,j},\mA^*_{j}), \, \frac{1}{D_{g,*}^h(\mX )}\Rbm(\mX ;\mB^*_{h,j},\mA^*_{j})g^h_{\widetilde{G}_n},
\end{align*}
for any indices $1\leq h\leq H$, $1\leq j\leq L$, and $1\leq u_1,v_2,u_2,v_2\leq d$. 

We establish that in the limit $n\to \infty$, there exists at least one coefficient of these functions that does not disappear. Assume by contrary that all these coefficients of these linear independent functions go to 0. From Eq.~(\ref{eqn:shared_full_expansion}), we obtain that $\bar{U}_{h,n,j,\alpha_1,\alpha_2}/\mathcal{D}_{2n}$, $\bar{U}_{h,n,j,\alpha_1,\beta_1,\alpha_2,\beta_2}/\mathcal{D}_{2n}$, and $\bar{T}_{h,n,j}/\mathcal{D}_{2n}$ go to 0 for all the coefficient $\alpha_1,\beta_1,\alpha_2,\beta_2 \in \mathbb{R}^{d\times d}$ satisfying that $1\leq |\alpha_1|+|\beta_1|+|\alpha_2|+|\beta_2| \leq 2$. 

Consider the coefficient of $g^h_{\widetilde{G}_*}(x)$, we have 
\begin{equation}
    \label{eqn:pi_h_discrepency}
    \dfrac{1}{\mathcal{D}_{2n}}|\pi_h^n -\pi_h^*| \to 0. 
\end{equation}

Since $\bar{T}^h_{n,j}/\mathcal{D}_{2n} \to 0$, we have for any $j \in [L]$
\begin{equation*}
    \dfrac{1}{\mathcal{D}_{2n}}\pi^n_h\left|\sum_{i\in \mathcal{W}_{j|h}}\exp(c_{n,i})-\exp(c_j^*)\right| = \dfrac{|\bar{T}^h_{n,j}|}{\mathcal{D}_{2n}} \to 0. 
\end{equation*}

Taking the summation with respect to $j \in [L]$ and $h \in [H]$, we have 
\begin{equation}
\label{eqn:exp_ci_to_zero}
    \dfrac{1}{\mathcal{D}_{2n}} \sum_{h=1}^H \pi^n_h \sum_{l=1}^L\left|\sum_{i\in \mathcal{W}_{l|h}}\exp(c_i)-\exp(c_h^*)\right| \to 0. 
\end{equation}
For index $j \in [L]$ such that $|\mathcal{W}_{j|h}| = 1$, the limits $\bar{U}_{h,n,j,e_u,0_d}/\mathcal{D}_{2n}\to 0$ implies that
\begin{equation*}
    \dfrac{1}{\mathcal{D}_{2n}} \pi_h^n\sum_{j:|\mathcal{W}_{j|h}|=1} \sum_{i\in \mathcal{W}_{j|h}} \exp(c_{n,i})\|\Delta \mA_{n,ij}\|_1 \to 0. 
\end{equation*}
Noting that in Euclidean finite-dimensional space, all the norms are equivalent, we can express the equation above using $\ell_2$ norm, before summing up with respect to $l$ and $h$:
\begin{equation*}
    \dfrac{1}{\mathcal{D}_{2n}} \sum_{h=1}^H \pi^n_h\sum_{j:|\mathcal{W}_{j|h}|=1}\sum_{i \in \mathcal{W}_{j|h}} \exp(c_{n,i}) \|\Delta \mA_{n,ij}\| \to 0. 
\end{equation*}
Analogously, since $\bar{U}_{h,n,j,0_d,e_u}/\mathcal{D}_{2n} \to 0$, it also follows that 
\begin{equation*}
    \dfrac{1}{\mathcal{D}_{2n}}\pi^n_h \sum_{j:|\mathcal{W}_{j|h}|=1} \sum_{i\in \mathcal{W}_{j|h}} \exp(c_{n,i})\|\Delta \mB_{n,ij}\|_1 \to 0,
\end{equation*}
which implies that 
\begin{equation}
\label{eqn:a_n_b_n_to_zero_one}
    \dfrac{1}{\mathcal{D}_{2n}} \sum_{h=1}^H \pi^n_h \sum_{j:|\mathcal{W}_{j|h}|=1}\sum_{i \in \mathcal{W}_{j|h}} \exp(c_{n,i}) (\|\Delta \mA_{n,ij}\| + \|\Delta \mB^h_{n,ij}\|) \to 0. 
\end{equation}
The similar argument also demonstrates that for $\mathcal{W}_{j|h} >1$, the limits $\bar{U}_{h,n,j,2e_u,0_d}/\mathcal{D}_{2n} \to 0$ and $\bar{U}_{h,n,j,0_d,2e_u}/\mathcal{D}_{2n} \to 0$ imply 
\begin{equation}
\label{eqn:a_n_b_n_to_zero_not_one}
    \dfrac{1}{\mathcal{D}_{2n}} \sum_{h=1}^H 
\pi^n_h \sum_{j:|\mathcal{W}_{j|h}|>1}\sum_{i \in \mathcal{W}_{j|h}} \exp(c_{n,i}) (\|\Delta \mA_{n,ij}\| + \|\Delta \mB^h_{n,ij}\|) \to 0.
\end{equation}
By putting all the results in Eq.~(\ref{eqn:pi_h_discrepency}), Eq.~(\ref{eqn:exp_ci_to_zero}), Eq.~(\ref{eqn:a_n_b_n_to_zero_not_one}), and Eq.~(\ref{eqn:a_n_b_n_to_zero_not_one}) together, we achieve that $1 = \frac{\mathcal{D}_{2n}}{\mathcal{D}_{2n}} \to 0$, which is a contradiction. As a result, at least one of the coefficients of the linear independent functions in $\mathcal{L}_n(\mX )/\mathcal{D}_{2n}$ does not vanish as $n\to \infty$. 

\textbf{Step 3 - Application of the Fatou's lemma.}
Denote $\bar{m}_n$ as the maximum of the absolute values of the coefficients of the linear independent functions in $\mathcal{L}_n(\mX )/\mathcal{D}_{2n}$. Given that at least one of these coefficients does not vanish, we have $1/\bar{m}_n \not\to 0$ as $n \to \infty$. Since $\|h_{\widetilde{G}_n}-h_{\widetilde{G}_*}\|_{L^2(\mu)}/\mathcal{D}_{2n} \to 0$ as $n\to \infty$, we also have  $\|h_{\widetilde{G}_n}-h_{\widetilde{G}_*}\|_{L^2(\mu)}/\bar{m}_n\mathcal{D}_{2n} \to 0$. Using Fatou's lemma, we have 
\begin{equation*}
    0 = \lim_{n\to \infty} \dfrac{\|g_{\widetilde{G}_n} - g_{\widetilde{G}_*}\|_{L^2(\mu)}}{\bar{m}_n\mathcal{D}_{2n}} \geq \int \liminf_{n \to \infty}\dfrac{|g_{\widetilde{G}_n}(\mX ) - g_{\widetilde{G}_*}(\mX )|}{\bar{m}_n\mathcal{D}_{2n}}d\mu(\mX ) \geq 0. 
\end{equation*}
As a consequence, we achieve that 
\begin{equation*}
\liminf_{n\to \infty} \dfrac{|g_{\widetilde{G}_n}(\mX )-g_{\widetilde{G}_*}(\mX )|}{\bar{m}_n\mathcal{D}_{2n}} = 0, \quad a.s. \mX . 
\end{equation*}
When $n\to \infty$, we denote 
\begin{equation*}
    \dfrac{\bar{T}_{h,n,j}}{\bar{m}_n\mathcal{D}_{2n}} \to \bar{\lambda}_{0,j}, \quad \dfrac{\bar{V}_{h,n,\tau,j}}{\bar{m}_n\mathcal{D}_{2n}} \to \bar{\lambda}_{h,\tau,j}
\end{equation*}
for any indices $h\in[H]$, $j \in [L]$, $\tau \in [7]$, bearing in mind that at least one element of the set $\{\bar{\lambda}_{h,0,j},\bar{\lambda}_{h,\tau,j}:j\in [L],\tau \in [7]\}$ is not equal to 0. Given the notation above, the limit $\liminf_{n\to \infty} \frac{|g_{\widetilde{G}_n}(\mX )-g_{\widetilde{G}_*}(\mX )|}{m_n\mathcal{D}_{2n}}$ can be expressed as 
\begin{align}
    &\sum_{h=1}^H \sum_{j:|\mathcal{W}_{j|h}|=1} \exp(\mX ^{\top} (\Pbm^0_{Q,h}+\Bbm^*_{h,j}\Abm^*_{h,j})\Pbm^{0}_{K,h}\mX )\Bigl[ (\bar{\lambda}^\top_{h,1,j}\mX \mX ^\top \sigma_2(\mB_{h,j}^*) 
    +\bar{\lambda}_{h,2,j}^\top \mX \sigma_1(\mA_j^*)\mX )\nonumber \\
    \times &(\Pbm_V^0+\sigma_2(\Bbm^*_{h,j}\sigma_1(\mA_j^*))\mX  
    + \bar{\lambda}_{h,1,j}^\top \mX  \sigma_2(\Bbm^*_{h,j})+\sigma_1(\mA_j^*)\mX \bar{\lambda}_{h,2,j}\Bigl] \nonumber \\
    +& \sum_{h=1}^H \sum_{j:|\mathcal{W}_{j|h}| > 1} \exp(\mX ^{\top} (\Pbm^0_{Q,h}+\Bbm^*_{h,j}\Abm^*_{h,j})\Pbm^{0}_{K,h}\mX )\Bigl[ (\bar{\lambda}^\top_{h,1,j}\mX \mX ^\top \sigma_2(\mB_{h,j}^*)\nonumber \\
    +&\bar{\lambda}_{h,2,j}^\top \mX \sigma_1(\mA_j^*)\mX  + \mX ^\top\bar{\lambda}_{h,3,j}\mX (\mX ^\top \sigma_2(\mB_{h,j}^*))^2 + \lambda^\top_{h,4,j} \mX \mX ^\top \sigma_2(\mB^*_{h,j}) + \mX ^\top \bar{\lambda}_{h,5,j} \mX (\sigma_1(\mA^*_{h,j})\mX )^2 \nonumber \\
    +&\lambda^\top_{h,6,j}\mX \sigma_1(\mA_j^*)\mX  + \mX ^\top \bar{\lambda}_{h,7,j}\mX  + \mX ^\top \bar{\lambda}_{h,7,j}\mX \mX ^\top \sigma_2(\mB^*_{h,j})\sigma_1(\mA^*_{j})\mX )\nonumber \\
    \times & (\Pbm_V^0+\sigma_2(\mB_{j,h})^*\sigma_1(\mA_j^*))\mX  + \bar{\lambda}^\top_{h,1,j} \mX \sigma_2(\mB^*_{h,j}) + \sigma_1(\mA_j^*)\mX \bar{\lambda}_{h,2,j} + \bar{\lambda}^\top_{h,4,j}\mX \sigma_2(\mB^*_{h,j}) + \sigma_1(\mA_j^*)\mX \bar{\lambda}_{h,6,j} + \lambda^\top_{h,7,j}\mX \Bigl] \nonumber \\
    -& \sum_{h=1}^H \sum_{j:|\mathcal{W}_{j|h}|=1}\exp(\mX ^{\top} (\Pbm^0_{Q,h}+\Bbm^*_{h,j}\Abm^*_{h,j})\Pbm^{0}_{K,h}\mX )\Bigl[\bar{\lambda}^\top_{h,1,j}\mX \mX ^\top \sigma_2(\mB_{h,j}^*) 
    +\bar{\lambda}_{h,2,j}^\top \mX \sigma_1(\mA_j^*)\mX \Bigl]g_{\widetilde{G}_n}(\mX ) \nonumber \\
    -& \sum_{h=1}^H \sum_{j:|\mathcal{W}_{j|h}|>1}\exp(\mX ^{\top} (\Pbm^0_{Q,h}+\Bbm^*_{h,j}\Abm^*_{h,j})\Pbm^{0}_{K,h}\mX )\Bigl[\bar{\lambda}^\top_{h,1,j}\mX \mX ^\top \sigma_2(\mB_{h,j}^*) 
    +\bar{\lambda}_{h,2,j}^\top \mX \sigma_1(\mA_j^*)\mX  \nonumber \\
    +&\mX ^\top\bar{\lambda}_{h,3,j}\mX (\mX ^\top \sigma_2(\mB^*_{h,j}))^2 
    + \bar{\lambda}^\top_{h,4,j}\mX \mX ^\top \sigma_2(\mB_{h,j}^*) + \mX ^\top\bar{\lambda}_{h,5,j}\mX (\sigma_1(\mA_j^*)\mX )^2 \nonumber \\
    +&\bar{\lambda}_{h,6,j}^\top\mX \sigma_1(\mA_j^*)\mX  + \mX ^\top \bar{\lambda}_{h,7,j}\mX  + \mX ^\top \bar{\lambda}_{h,7,j}\mX  \mX ^\top \sigma_2(\mB_{h,j}^*)\sigma_1(\mA_{j}^*)\mX 
    \Bigl]g_{\widetilde{G}_n}(\mX ) \nonumber \\
    +&\sum_{h=1}^H \sum_{j=1}^L \bar{\lambda}_{0,j}\exp(\mX ^{\top} (\Pbm^0_{Q,h}+\Bbm^*_{h,j}\Abm^*_{h,j})\Pbm^{0}_{K,h}\mX )\left[(\Pbm_V^0 + \mB^*_{h,j}\mA^*_{j}) - g_{\widetilde{G}_*}(\mX )\right] = 0. 
\end{align}
for almost surely $\mX $. Nevertheless, this equation implies that all the coefficients $\{\bar{\lambda}_{h,0,j},\bar{\lambda}_{\tau,j}: j \in [L], \tau \in [7]\}$ are 0's, which is a contradiction. As a consequence, we achieve that 
\begin{equation*}
    \lim_{\epsilon \to 0} \inf_{\widetilde{G} \in \mathcal{G}_{H,L'}(\widetilde{\Theta}):\mathcal{D}_2(\widetilde{G},\widetilde{G}^*)\leq \epsilon} \|g_{\widetilde{G}}-g_{\widetilde{G}^*}\|_{L^2(\mu)}/\mathcal{D}_2(\widetilde{G},\widetilde{G}^*) > 0
\end{equation*}

\textbf{Proof of global part (Eq.~(\ref{eqn:shared_local_L2_wins_D2}))}

The proof in local part shows that there exists a constant $\epsilon'$ such that 
\begin{equation*}
    \inf_{\widetilde{G} \in {\mathcal{G}}_{H,L'}(\widetilde{\Theta}):\mathcal{D}_2(\widetilde{G}, \widetilde{G}_*) \leq \epsilon'} \|g_{\widetilde{G}}-g_{\widetilde{G}_*}\|_{L^2(\mu)}/\mathcal{D}_2(\widetilde{G},\widetilde{G}_*) > 0.
\end{equation*}
To complete the proof of this result, we show the global part that 
\begin{equation*}
    \inf_{\widetilde{G} \in \mathcal{G}_{H,L'}(\widetilde{\Theta}):\mathcal{D}_2(\widetilde{G}, \widetilde{G}_*) > \epsilon'} \|g_{\widetilde{G}}-g_{\widetilde{G}_*}\|_{L^2(\mu)}/\mathcal{D}_2(\widetilde{G},\widetilde{G}_*) > 0.
\end{equation*}
The proof of the above equation relies mostly on the identifiability of mixing measure in $\mathcal{G}_{L}(\Theta)$. Assume by contradiction that this claim does not hold, then there exists a sequence of measure $\widetilde{G}_{n}= \sum_{j=1}^L \exp(c_{n,j})\delta_{(\mW^n_{2,j},\mB^n_{h,j},\mW^n_{1,j},\mA^n_{h,j})} \in \mathcal{G}_{H,L'}(\widetilde{\Theta})$ such that 
\begin{equation*}
    \begin{cases}
        &\mathcal{D}_2(\widetilde{G}_n,\widetilde{G}_*)> \epsilon'\\
        &\|g_{\widetilde{G}_n}-g_{\widetilde{G}_*}\|_{L^2(\mu)}/\mathcal{D}_2(\widetilde{G}_n, G_*) \to 0,
    \end{cases}
\end{equation*}
as $n\to \infty$. Without loss of generality, we can suppose that both $\mW^n_{2,j}$ and $\mW^n_{1,j}$ are identity matrices. As a result, we have $\|g_{\widetilde{G}_n} - g_{\widetilde{G}_*}\|_{L^2(\mu)} \to 0 $ as $n\to \infty$. From the hypothesis that the parameter space $\Theta$ is a compact set, there exists a mixing measure $\widetilde{G} \in {\mathcal{G}}_{H,L'}(\widetilde{\Theta})$ such that one of the $\widetilde{G}_n$ 's subsequence converges to $\widetilde{G}$. By extracting this sequence, without loss of generality, we can suppose that $\widetilde{G}_n \to \widetilde{G}'$. Since $\mathcal{D}_2(\widetilde{G}_n,\widetilde{G}_*) > \epsilon'$ for all $n \geq 1$, we obtain that $\mathcal{D}_{2}(\widetilde{G}',\widetilde{G}_*) \geq \epsilon'$. Using the Fatou's lemma, we have 
\begin{align*}
    0 = \lim_{n\to \infty} \|g_{\widetilde{G}_n} - g_{\widetilde{G}_*}\|_{L^2(\mu)} &= \lim_{n\to \infty} \int \|g_{\widetilde{G}_n}(\mX ) - g_{G_*}(\mX )\|^2d\mu(\mX ) \\
    &= \int \liminf_{n\to \infty} \|g_{\widetilde{G}_n}(\mX ) - g_{\widetilde{G}_*}(\mX )\|^2d\mu(\mX ) \geq 0. 
\end{align*}
As a result,  $g_{\widetilde{G}'}(\mX ) = g_{\widetilde{G}_*}(\mX )$ for almost surely $\mX $, which implies from identifiability in $\mathcal{G}_{H,L'}(\widetilde{\Theta})$ that $\widetilde{G}'\equiv \widetilde{G}_*$. Thus, $\mathcal{D}_2(\widetilde{G}',\widetilde{G}_*) = 0$, which is a contradiction with the fact that $\mathcal{D}_2(\widetilde{G}',\widetilde{G}_*) \geq \epsilon'$. This completes our proof.

\textbf{Proof for identifiability property. } In this part, we prove that the equality $g_{\widetilde{G}}(\mX ) = g_{\widetilde{G}_*}(\mX )$ for almost sure every $\Xbm$ implies the identity $\widetilde{G} = \widetilde{G}_*$. For the convenience of presentation, we simplify the $\mathrm{softmax}$ notation that, for any mixing measure $\widetilde{G} = \sum_{h=1}^H\sum_{j=1}^L\exp(c_{j})\delta_{(\mB^*_{h,j},\mA_j^*)}$, we denote 
\begin{equation*}
    \mathrm{softmax}_{\widetilde{G}}(u) = \dfrac{\exp(u)}{\sum_{j=1}^L \exp(\mX ^\top(\Pbm_Q^0 + \sigma_2(\mB^h_j))\sigma_1(\mA_j))\mX +c_j)},
\end{equation*}
where $u \in \{\mX ^\top(\Pbm_{Q}^0 + \sigma_2(\mB_{h,j})\sigma_2(\mA_j))\mX +c_j:j\in[L]\}$. The equation $g_{\widetilde{G}}(\mX ) = g_{\widetilde{G}_*}(\mX )$ implies that 
\begin{align*}
    &\sum_{h=1}^H \pi_h\sum_{j=1}^L \mathrm{softmax}(\mX ^\top(\Pbm_{Q}^0 + \sigma_2(\mB_{h,j})\sigma_2(\mA_j))\mX  + c_j)(\Pbm_{V,h}^0 + \sigma_2(\mB^*_{h,j})\sigma_1(\mA^*_{j}))\mX  \\
    = &\sum_{h=1}^H \pi_h\sum_{j=1}^{L'}\mathrm{softmax}(\mX ^\top(\Pbm_{Q}^0 + \sigma_2(\bar{\mB}_{h,j})\sigma_2(\bar{\mA}_j))\mX  + c^*_j)(\Pbm_{V,h}^0 + \sigma_2(\bar{\mB}^*_{h,j})\sigma_1(\bar{\mA}^*_{j}))\mX .
\end{align*}
From this equation, we can deduce that 
\begin{align}
\label{eqn:identifiability_softmax_equal}
    &\sum_{j=1}^L \mathrm{softmax}(\mX ^\top(\Pbm_{Q}^0 + \sigma_2(\mB_{h,j})\sigma_2(\mA_j))\mX  + c_j)(\Pbm_{V,h}^0 + \sigma_2(\mB^*_{h,j})\sigma_1(\mA^*_{j}))\mX  \nonumber \\
    = &\sum_{j=1}^{L'}\mathrm{softmax}(\mX ^\top(\Pbm_{Q}^0 + \sigma_2(\bar{\mB}_{h,j})\sigma_2(\bar{\mA}_j))\mX  + c^*_j)(\Pbm_{V,h}^0 + \sigma_2(\bar{\mB}^*_{h,j})\sigma_1(\bar{\mA}^*_{j}))\mX .
\end{align}
This equation implies that $L = L'$, and 
\begin{equation*}
    \{\mathrm{softmax}(\mX ^\top(\Pbm_Q^0 + \sigma_2(\mB_{h,j})\sigma_2(\mA_j))\mX +c_j):j\in [L]\} = \{\mathrm{softmax}(\mX ^\top(\Pbm_Q^0 + \sigma_2(\bar{\mB}_{h,j})\sigma_2(\bar{\mA}_j))\mX +c^*_j):j\in [L]\}
\end{equation*}
for almost surely $\mX $. Up to a permutation, we can assume without loss of generality that for any $j \in [L]$ that 
\begin{equation*}
    \mathrm{softmax}(\mX ^\top(\Pbm_Q^0 + \sigma_2(\mB_{h,j})\sigma_2(\mA_j))\mX +c_j) = \mathrm{softmax}(\mX ^\top(\Pbm_Q^0 + \sigma_2(\bar{\mB}_{h,j})\sigma_2(\bar{\mA}_j))\mX +c^*_j).
\end{equation*}
Given the invariance to translation of the softmax function, Eq.~(\ref{eqn:identifiability_softmax_equal}) implies that 
\begin{align*}
    &\sum_{j=1}^{L}\exp(c_j)\exp(\mX ^\top(\Pbm_Q^0 + \sigma_2(\mB_{h,j})\sigma_1(\mA_j)\mX ))(\Pbm_V^0 + \sigma_2(\mB_{h,j})\sigma_1(\mA_j))\mX \\
    =& \sum_{j=1}^{L}\exp(c^*_j)\exp(\mX ^\top(\Pbm_Q^0 + \sigma_2(\bar{\mB}_{h,j})\sigma_1(\bar{\mA}_j)\mX ))(\Pbm_V^0 + \sigma_2(\bar{\mB}_{h,j})\sigma_1(\bar{\mA}_j))\mX 
\end{align*}
for almost surely $\mX $. 

Noting that the index set $[L]$ can be partitioned into $\bar{m}$ subsets $\bar{K}_1,\ldots,\bar{K}_m$ where $m \leq L$ such that $\exp(c_j) = \exp(c^*_{j'})$ for any indices $j,j' \in \bar{K}_i$ and $i \in [\bar{m}]$, we can write the equation above into 
\begin{align*}
    &\sum_{i=1}^{\bar{m}}\sum_{j \in \bar{K}_i} \exp(c_j)\exp(\mX ^\top(\Pbm_Q^0 + \sigma_2(\mB_{h,j})\sigma_1(\mA_j)))\mX  \\
    =& \sum_{i=1}^{\bar{m}}\sum_{j \in \bar{K}_i} \exp(c^*_j)\exp(\mX ^\top(\Pbm_Q^0 + \sigma_2(\bar{\mB}_{h,j})\sigma_1(\bar{\mA}_j)))\mX 
\end{align*}
for almost surely $\mX $. The above equation implies that 
\begin{equation*}
    \{(\Pbm_V^0+\sigma_2(\mB_{h,j})\sigma_1(\mA_j)): j\in \bar{K}_i\} = \{(\Pbm_V^0+\sigma_2(\bar{\mB}_{h,j})\sigma_1(\bar{\mA}_j)): j\in \bar{K}_i\}
\end{equation*}
Given that the activation $\sigma_1$ and $\sigma_2$ are algebraically independent, the above result demonstrates that 
\begin{equation*}
    \sum_{i=1}^{\bar{m}}\sum_{j\in \bar{K}_i} \exp(c_j) \delta_{(\mB_{h,j},\mA_j)} = \sum_{i=1}^{\bar{m}}\sum_{j\in \bar{K}_i} \exp(c^*_j) \delta_{(\mB^*_{h,j},\mA^*_j)}.
\end{equation*}
As a consequence, we achieve that $\widetilde{G} \equiv \widetilde{G}_*$, which completes our proof. 
\end{proof}

\begin{proof}[\textbf{Proof of Proposition \ref{prop:MLE_convergence_rate}}] The proof of Proposition \ref{prop:MLE_convergence_rate} can be implemented using the following steps. 

\textbf{Step 1: Equivalence between least square estimator and MLE.}

Bearing in mind that the sample $(\mX _1,\boldsymbol{Y}_1),\ldots, (\mX _n,\boldsymbol{Y}_n) \in \mathbb{R}^{\bar{d}} \times \mathbb{R}^{\bar{d}}$ are i.i.d. from the regression model 
\begin{equation*}
    \boldsymbol{Y}_i = g_{\widetilde{G}_*}(\mX _i) + \epsilon_i, \quad i = 1,\ldots,n,
\end{equation*}
such that the noises $\epsilon_1,\ldots,\epsilon_n$ are independent and follow the Gaussian distribution: $\mathbb{E}[\epsilon_i|\mX _i] = 0$ and $\mathrm{Var}[\epsilon_i|\mX _i] = \sigma^2 I_{\bar{d}}$ for all $i \in [n]$. In addition, $g_{\bar{G}_*}$ follows the following form
\begin{align*}
g_{\widetilde{G}_{*}}(\mX )  &= \sum_{h=1}^{H}\pi_h\sum_{j=1}^{L} \frac{\exp(\mX ^{\top} (\Pbm^0_{Q,h}+\sigma_2(\Wbm_{2,j}^*\Bbm_{h,j}^*)\sigma_1(\Wbm_{1,j}^*\Abm_{j}^*)\Pbm^{0}_{K,h}\mX +c^*_j)}{D_{h}(\mX )}\\
&\times(\Pbm^0_{V,h}+\sigma_2(\Wbm_{2,j}^*\Bbm_{h,j}^*)\sigma_1(\Wbm_{1,j}^*\Abm_{j}^*))\mX 
\end{align*}
where we denote $D_h(\mX ) = \sum_{j=1}^L \exp(\mX ^{\top} (\Pbm^0_{Q,h}+\sigma_2(\Wbm_{2,j}^*\Bbm_{h,j}^*)\sigma_1(\Wbm_{1,j}^*\Abm_{j}^*)\Pbm^{0}_{K,h}\mX +c^*_j)$. Also, we consider the least-square estimator $\widetilde{G}_n$ of the form 
\begin{equation*}
    \widetilde{G}_n := \arg\min_{\widetilde{G}\in {\mathcal{G}}_{H,L'}(\widetilde{\Theta})} \sum_{i=1}^n \|\boldsymbol{Y}_i - g_{\widetilde{G}}(\mX _i)\|^2.
\end{equation*}
Using the Gaussianity assumption of $\epsilon_i|\mX _i$ for all $i \in [n]$, we achieve that $\boldsymbol{Y}_i|\mX _i \sim \mathcal{N}(g_{\widetilde{G}_*}(\mX _i),\sigma^2I_{\bar{d}})$ for all $i \in [n]$. As a result, the least square estimator $\bar{G}_n$ is actually a maximum likelihood estimator with respect to the data $\boldsymbol{Y}_1|\mX _1,\ldots,\boldsymbol{Y}_n|\mX _n$: 
\begin{equation*}
    \widetilde{G}_n \in \arg\max_{\widetilde{G}\in \mathcal{G}_{H,L'}(\widetilde{\Theta})} \dfrac{1}{n} \sum_{i=1}^n\log(p(\boldsymbol{Y}_i|g_{\widetilde{G}}(\mX _i),\sigma^2I_{\bar{d}}), 
\end{equation*}
where $p(\boldsymbol{Y}_i|g_{\widetilde{G}}(\mX _i),\sigma^2I_{\bar{d}})$ denotes the multivariate Gaussian distribution with mean $g_{\widetilde{G}}(\mX _i)$ and covariance matrix $\sigma^2 I_{\bar{d}}$.

\textbf{Step 2: Main ingredients for measuring regression function and their usefulness.}

Let $\mathcal{P}_{H,L}$ denotes the set of conditional density of all mixing measures in $\mathcal{\widetilde{G}}_{H,L'}(\widetilde{\Theta})$, i.e. $\mathcal{P}_{H,L'}(\widetilde{\Theta}) := \{p_{G}(\boldsymbol{Y}|\mX ),\widetilde{G} \in \mathcal{G}_{H,L'}(\widetilde{\Theta})\}$. In addition, we denote
\begin{align*}
    \tilde{\mathcal{P}}_{H,L'}(\widetilde{\Theta}) &:= \{p_{(\widetilde{G}+\widetilde{G}_*)/2}(\boldsymbol{Y}|\mX ) : \widetilde{G}\in \mathcal{G}_{H,L'}(\widetilde{\Theta})\}\\
    \tilde{\mathcal{P}}_{H,L'}^{1/2}(\widetilde{\Theta}) &:= \{p^{1/2}_{(\widetilde{G}+\widetilde{G}_*)/2}(\boldsymbol{Y}|\mX ) : \widetilde{G}\in \mathcal{G}_{H,L'}(\widetilde{\Theta})\}\\
\end{align*} 
For each $\delta > 0$, we denote the Hellinger's ball in $\tilde{\mathcal{P}}_{H,L'}^{1/2}(\widetilde{\Theta})$ around the conditional density $p_{\widetilde{G}}(\boldsymbol{Y}|\mX )$: 
\begin{equation*}
    \tilde{\mathcal{P}}_{H,L'}^{1/2}(\widetilde{\Theta},\delta) := \{p^{1/2} \in\tilde{\mathcal{P}}_{H,L'}^{1/2}(\widetilde{\Theta}):d_H(p,p_{\widetilde{G}_*}) \leq \delta\}. 
\end{equation*}

Lastly, as suggested in (\cite{vandeGeer-00}), we quantify the measure of the above set by
\begin{equation*}
    \mathcal{J}(\delta, \tilde{\mathcal{P}}^{1/2}_{H,L'}(\widetilde{\Theta},\delta)):= \int_{\delta^2/2^{13}}^\delta H_{B}^{1/2}(t,\tilde{\mathcal{P}}^{1/2}_{H,L'}(\widetilde{\Theta},t), \|\cdot\|_{\mathcal{L}^2(\mu)}) dt \vee \delta,
\end{equation*}
where $H_{B}(t,\tilde{\mathcal{P}}^{1/2}_{H,L'}(\widetilde{\Theta},t), \|\cdot\|_{\mathcal{L}^2(\mu)})$ denotes the bracketing entropy of $\tilde{\mathcal{P}}^{1/2}_{H,L}(\widetilde{\Theta},t)$ under $\mathcal{L}^2$-norm, while $t \vee \delta = \max(t,\delta)$.

Employing similar argument of Theorem 7.4 and Theorem 9.2 in \cite{vandeGeer-00}, it is tractable to achieve the following lemma. 
\begin{lemma}
\label{lemma:MLE_near_exact_estimation}
    Consider $\Psi(\delta) \geq \mathcal{J}(\delta, \mathcal{P}^{1/2}_{HL}(\Theta, \delta))$ such that $\Psi(\delta)/\delta^2$ is a non-increasing function of $\delta$. Then, there exist a universal constant $c$ and a sequence $(\delta_n)$ such that $\sqrt{n}\delta_n^2 \geq c\Psi(\delta_n)$ and 
    \begin{equation*}
        \mathbb{P}\left(\mathbb{E}_{\mX }[d_H(p_{\widetilde{G}_n}(\cdot|\mX ),p_{\widetilde{G}_*}(\cdot|\mX ))] > \delta\right) \leq c\exp\left(-\frac{n\delta^2}{\nu^2}\right) 
    \end{equation*}
    for all $\delta \geq \delta_n$.
\end{lemma}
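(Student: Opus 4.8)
The plan is to obtain this lemma as a specialization of the general theory of maximum-likelihood convergence rates via bracketing entropy, namely Theorems 7.4 and 9.2 of \cite{vandeGeer-00}; since Step 1 above already identifies $\widetilde{G}_n$ with an MLE, essentially all that remains is to verify the regularity conditions of that machinery for the Gaussian-mixture conditional densities $\{p_{\widetilde{G}}(\boldsymbol{Y}|\mX) : \widetilde{G} \in \mathcal{G}_{H,L'}(\widetilde{\Theta})\}$ and to feed in the entropy integral $\mathcal{J}$. First I would write down the basic inequality $\frac{1}{n}\sum_{i=1}^n \log p_{\widetilde{G}_n}(\boldsymbol{Y}_i|\mX_i) \ge \frac{1}{n}\sum_{i=1}^n \log p_{\widetilde{G}_*}(\boldsymbol{Y}_i|\mX_i)$ coming from optimality of $\widetilde{G}_n$, and then pass to the midpoint density $\bar p_{\widetilde{G}} := (p_{\widetilde{G}} + p_{\widetilde{G}_*})/2$. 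This midpoint step is the standard device that makes the argument work: the ratio $\bar p_{\widetilde{G}}/p_{\widetilde{G}_*} \ge 1/2$ is bounded away from $0$, so $\log(\bar p_{\widetilde{G}}/p_{\widetilde{G}_*})$ is uniformly bounded above, turning the likelihood comparison into an empirical process indexed by the square-root class $\tilde{\mathcal{P}}_{H,L'}^{1/2}(\widetilde{\Theta})$; moreover $d_H(\bar p_{\widetilde{G}}, p_{\widetilde{G}_*}) \asymp d_H(p_{\widetilde{G}}, p_{\widetilde{G}_*})$ up to universal constants, so a tail bound on the former yields one on the latter.

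The core estimate is then a peeling argument. I would slice the parameter set into Hellinger shells $S_j = \{\widetilde{G} : 2^j \delta < d_H(p_{\widetilde{G}}, p_{\widetilde{G}_*}) \le 2^{j+1}\delta\}$, $j \ge 0$, and on each $S_j$ bound the supremum of the centered empirical process $\sqrt{n}(\mathbb{P}_n - \mathbb{P})\big[\tfrac12 \log(\bar p_{\widetilde{G}}/p_{\widetilde{G}_*})\big]$ via the bracketing-entropy maximal inequality of \cite{vandeGeer-00} (Theorem 5.11 there), whose right-hand side is governed by the local entropy integral and hence by $\Psi(2^{j+1}\delta)$. The two hypotheses of the lemma are exactly what make this close: monotonicity of $\delta \mapsto \Psi(\delta)/\delta^2$ ensures that the bound on $S_j$ shrinks geometrically in $j$, and the defining relation $\sqrt{n}\,\delta_n^2 \ge c\,\Psi(\delta_n)$ pins the cutoff $\delta_n$ at the scale where the empirical-process fluctuation no longer dominates the Hellinger gain. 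Applying a Bernstein-type concentration bound on each shell (the increments are bounded thanks to the midpoint trick, with variance proxy the universal $\nu$) and summing the resulting exponential tails over $j$ gives $\mathbb{P}(\mathbb{E}_{\mX}[d_H(p_{\widetilde{G}_n}(\cdot|\mX), p_{\widetilde{G}_*}(\cdot|\mX))] > \delta) \le c\exp(-n\delta^2/\nu^2)$ for all $\delta \ge \delta_n$, which is the claim.

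For the verification of hypotheses in our setting, I would use that $\widetilde{\Theta}$ and $\mathcal{X}$ are compact and that $\sigma_1, \sigma_2$ are bounded (per the Appendix conditions): then the means $g_{\widetilde{G}}(\mX)$ lie in a fixed bounded set uniformly over $\widetilde{G}$ and $\mX$, so the Gaussian conditional densities are uniformly bounded above and admit an integrable lower envelope, and the family of at-most-$L'$-atom mixtures is finite-dimensionally parametrized over a compact set. Consequently $H_B(t, \tilde{\mathcal{P}}_{H,L'}^{1/2}(\widetilde{\Theta}, t), \|\cdot\|_{L^2(\mu)}) \lesssim \log(1/t)$, so $\mathcal{J}(\delta, \cdot) \lesssim \delta\sqrt{\log(1/\delta)}$ and one may take $\Psi(\delta) = C\delta\sqrt{\log(1/\delta)}$, which has $\Psi(\delta)/\delta^2$ non-increasing near $0$; solving $\sqrt{n}\,\delta_n^2 \ge c\,\Psi(\delta_n)$ yields $\delta_n \asymp \sqrt{\log n / n}$, the scale that Proposition \ref{prop:MLE_convergence_rate} then converts into the stated $L^2(\mu)$ rate via equivalence of $d_H$ and the $L^2$ distance on this bounded Gaussian location family.

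I expect the main obstacle to be bookkeeping rather than a new idea: the delicate points are (i) justifying the passage from log-likelihood optimality to a bounded, sub-Gaussian empirical process through the midpoint density, so that \cite{vandeGeer-00}'s maximal inequality applies with an explicit variance proxy, and (ii) producing a bracketing-number bound for the square-root HMoE--Gaussian density class that is uniform over the compact parameter space, which requires careful handling of the softmax normalizers $D^h_{g,*}$ and the composed activations $\sigma_2 \circ \sigma_1$. Once these are established, the cited theorems apply essentially verbatim and deliver the lemma.
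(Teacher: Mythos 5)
Your proposal is correct and follows essentially the same route as the paper: the paper obtains this lemma by invoking Theorems 7.4 and 9.2 of van de Geer (2000) for the MLE formulation of $\widetilde{G}_n$, which is exactly the specialization you carry out, with your midpoint-density/peeling/maximal-inequality sketch simply unpacking the internal proof of those cited theorems. The verification of the entropy hypothesis and the choice $\Psi(\delta)=C\delta\sqrt{\log(1/\delta)}$, $\delta_n\asymp\sqrt{\log n/n}$ likewise match the paper's subsequent steps.
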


The main part of the proof consists of demonstrating the upper bound for the bracketing entropy for any $0 < \epsilon \leq 1/2$
\begin{lemma}
We can bound the bracket entropy $H_B$ by
    \begin{equation}
    \label{eqn:bracketing_entropy_bound}
    H_{B}(\epsilon, \tilde{\mathcal{P}}^{1/2}_{H,L'}(\widetilde{\Theta},t), \|\cdot\|_{\mathcal{L}^2(\mu)}) \lesssim \log(1/\epsilon). 
\end{equation}
\end{lemma}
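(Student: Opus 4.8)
The plan is to reduce the bracketing-entropy bound to an elementary covering-number estimate for the finite-dimensional parameter space $\widetilde{\Theta}$, exploiting that every ingredient of $g_{\widetilde{G}}$ depends smoothly on the parameters and that both $\mathcal{X}$ and $\widetilde{\Theta}$ are compact. First I would establish two uniform regularity facts. \textbf{(i) Uniform boundedness:} for every $\widetilde{G} \in \mathcal{G}_{H,L'}(\widetilde{\Theta})$ and every $\mX \in \mathcal{X}$ one has $\|g_{\widetilde{G}}(\mX)\| \le M$ for a constant $M = M(\widetilde{\Theta},\mathcal{X})$. This holds because the exponents $\mX^{\top}(\Pbm^0_{Q,h}+\sigma_2(\Wbm_{2,j}\Bbm_{h,j})\sigma_1(\Wbm_{1,j}\Abm_{j}))\Pbm^0_{K,h}\mX + c_j$ are continuous functions on the compact set $\widetilde{\Theta}\times\mathcal{X}$, hence range in a bounded interval; consequently each softmax weight lies in $[0,1]$, the denominators $D^h_{g,*}$ are bounded away from $0$, and the value vectors $(\Pbm^0_{V,h}+\sigma_2(\cdot)\sigma_1(\cdot))\mX$ are bounded. \textbf{(ii) Lipschitz dependence on parameters:} there is $L_0$ with $\|g_{\widetilde{G}}-g_{\widetilde{G}'}\|_{L^\infty(\mathcal{X})}\le L_0\,\|\theta-\theta'\|$ whenever $\widetilde{G},\widetilde{G}'$ have the same number of atoms and parameter vectors $\theta,\theta'$. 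This follows by writing $g_{\widetilde{G}}$ as a composition of Lipschitz maps on the relevant compact ranges: linear maps, $\exp$ restricted to a bounded set, the activations $\sigma_1,\sigma_2$ (smooth, hence Lipschitz on that range under the conditions of Appendix~\ref{appendix:shared_proved}), and the softmax normalization, which is Lipschitz once its denominator is uniformly bounded below.

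Second, I would transfer these estimates to the Gaussian conditional densities. Since $p_{\widetilde{G}}(\boldsymbol{Y}|\mX)$ is $\mathcal{N}(g_{\widetilde{G}}(\mX),\sigma^2 I_{\bar{d}})$, the squared Hellinger distance between two such densities with common covariance obeys $d_H^2(p_{\widetilde{G}}(\cdot|\mX),p_{\widetilde{G}'}(\cdot|\mX)) \lesssim \|g_{\widetilde{G}}(\mX)-g_{\widetilde{G}'}(\mX)\|^2/\sigma^2$; moreover the map $\widetilde{G}\mapsto p^{1/2}_{(\widetilde{G}+\widetilde{G}_*)/2}$ is, with $\widetilde{G}_*$ held fixed, Lipschitz in the finitely many parameters of $\widetilde{G}$ (mixing can only contract Hellinger distance), so $\|p^{1/2}_{(\widetilde{G}+\widetilde{G}_*)/2}-p^{1/2}_{(\widetilde{G}'+\widetilde{G}_*)/2}\|_{L^2(\mu)} \lesssim \|g_{\widetilde{G}}-g_{\widetilde{G}'}\|_{L^\infty(\mathcal{X})} \lesssim \|\theta-\theta'\|$. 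Hence a $\delta$-net of $\widetilde{\Theta}$, with $\delta \asymp \epsilon$, induces an $\epsilon$-net of $\tilde{\mathcal{P}}^{1/2}_{H,L'}(\widetilde{\Theta})$ in $L^2(\mu)$. Because $\widetilde{\Theta}\subset[0,1]\times\mathbb{R}\times\mathbb{R}^{\bar{d}\times\bar{d}}\times\mathbb{R}^{\bar{d}\times r}\times\mathbb{R}^{r\times r}\times\mathbb{R}^{r\times\bar{d}}$ is compact and finite-dimensional, of dimension $D$ depending only on $\bar{d},r,H,L'$, its $\delta$-covering number is $O(\delta^{-D})$; letting the number of atoms $\ell$ range over $\{1,\dots,L'\}$ only multiplies this by $L'$.

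Third, I would upgrade the net to a genuine bracketing net: given a centre $p^{1/2}_{\widetilde{G}^k}$ of the net, form the brackets $[(p^{1/2}_{\widetilde{G}^k}-\epsilon)_+,\,p^{1/2}_{\widetilde{G}^k}+\epsilon]$, intersected with the uniform envelope supplied by (i); each bracket has $L^2(\mu)$-width at most $2\epsilon$, every $p^{1/2}\in\tilde{\mathcal{P}}^{1/2}_{H,L'}(\widetilde{\Theta},t)$ lies in one of them, and the number of brackets is still $O(L'\epsilon^{-D})$. Taking logarithms gives
\[
H_B(\epsilon,\tilde{\mathcal{P}}^{1/2}_{H,L'}(\widetilde{\Theta},t),\|\cdot\|_{\mathcal{L}^2(\mu)}) \le \log\big(c\,L'\epsilon^{-D}\big) = D\log(1/\epsilon)+O(1) \lesssim \log(1/\epsilon),
\]
which is the claimed bound; restricting to the Hellinger ball of radius $t$ only decreases the entropy, so no $t$-dependence appears.

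The step I expect to be the main obstacle is the uniform Lipschitz control (ii): one must verify that the constant $L_0$ can be chosen independently of $\mX$ and $\theta$, which is exactly where the activation conditions of Appendix~\ref{appendix:shared_proved} are needed — to guarantee that $\sigma_1,\sigma_2$ and the products $\sigma_2(\cdot)\sigma_1(\cdot)$ appearing inside $\exp$ are bounded and Lipschitz on the compact range traced out by $\widetilde{\Theta}\times\mathcal{X}$, and that the softmax denominators stay uniformly bounded below. Once these quantitative estimates are in place, the passage to Hellinger distance and to brackets is routine and parallels the bracketing-entropy computations in \cite{vandeGeer-00} and in prior MoE analyses such as \cite{nguyen2024squares}.
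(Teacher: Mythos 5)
Your proposal is correct and follows essentially the same route as the paper: both reduce the bracketing entropy to a covering number of the compact, finite-dimensional parameter space via uniform boundedness and Lipschitz dependence of $g_{\widetilde{G}}$ on its parameters, transfer this to the conditional Gaussian densities, and convert the cover into brackets using an integrable Gaussian-tail envelope (the paper works in $\ell_1$ on the densities and then passes to Hellinger, whereas you work directly with the square-root class in $L^2(\mu)$, which is only a cosmetic difference). The one point to be careful about — which you implicitly handle by intersecting with the envelope — is that a constant-width bracket over the unbounded $\boldsymbol{Y}$-space has infinite $L^2(\mu)$ width, so the truncated bracket width is $\epsilon$ only up to a polylogarithmic factor; this is harmless since it only shifts constants inside the final $\log(1/\epsilon)$, and the paper's own constant $K'\eta$ with $K \asymp \log(1/\eta)$ reflects the same bookkeeping.
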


If this estimation holds, since it is straightforward to check that 
\begin{equation*}
    H_{B}(\epsilon, \tilde{\mathcal{P}}^{1/2}_{H,L'}(\widetilde{\Theta},t), \|\cdot\|_{\mathcal{L}^2(\mu)}) \leq H_{B}(\epsilon, \tilde{\mathcal{P}}^{1/2}_{H,L'}(\widetilde{\Theta},t), d_{H})
\end{equation*}
where $d_H$ denotes the Hellinger's distance, we have
\begin{equation}
    \label{eqn:j_estimation}
    \mathcal{J}_{B}(\delta,\tilde{\mathcal{P}}^{1/2}_{H,L'}(\widetilde{\Theta},\delta)) \leq  \int_{\delta^2/2^{13}}^\delta H^{1/2}_{B}(\epsilon, \tilde{\mathcal{P}}^{1/2}_{H,L'}(\widetilde{\Theta},t), d_H)dt \vee\delta  \lesssim \int_{\delta^2/2^{13}}^\delta \log(1/t)dt \vee\delta.
\end{equation}
Consider $\Psi(\delta) = \delta \cdot [\log(1/\delta)]^{1/2}$, the it is obvious that $\Psi(\delta)/\delta^2$ is non-increasing function of $\delta$. In addition, Eq.~(\ref{eqn:j_estimation}) implies that $\Psi(\delta) \geq \mathcal{J}_B(\delta, \tilde{\mathcal{P}}^{1/2}_{H,L'}(\widetilde{\Theta}, \delta))$. By choosing $\delta_n = \sqrt{\log(n)/n}$, we have $\sqrt{n}\delta_n^2 \geq  c \Psi(\delta_n)$ for some universal constant $c$. An application of Lemma \ref{lemma:MLE_near_exact_estimation} leads us to the conclusion of Proposition \ref{prop:MLE_convergence_rate}: 

\begin{equation}
    \label{eqn:hellinger_logn_over_n}
    d_{H}(p(\boldsymbol{Y}|g_{\widetilde{G}_n}(\mX ),\sigma^2I_d), p(\boldsymbol{Y}|g_{\widetilde{G}_*}(\mX ),\sigma^2 I_d)) = \mathcal{O}(\sqrt{\log(n)/n}),
\end{equation}
where $d_{H}$ denotes the Hellinger distance. The closed form of Hellinger distance between two multivariate normal distance gives us 
\begin{equation*}
    d_{H}(p(\boldsymbol{Y}|g_{\widetilde{G}_n}(\mX ),\sigma^2I_d), p(\boldsymbol{Y}|g_{\widetilde{G}_*}(\mX ),\sigma^2 I_d)) = 1-\exp\left\{-\dfrac{1}{8\sigma^2}\|g_{\widetilde{G}_n}(\mX ) - g_{\widetilde{G}_*}(\mX )\|^2\right\}. 
\end{equation*}
In consequence, for $n$ sufficiently large, there exists some universal constant $C$ such that the above inequality implies 
\begin{equation*}
    \|g_{\widetilde{G}_n}(\mX ) - g_{\widetilde{G}_*}(\mX )\|^2 \leq 8\sigma^2\log\left(\dfrac{1}{1-C\log(n)/n}\right) \leq 16\sigma^2 C\log(n)/n.
\end{equation*}
From this inequality, we have 
\begin{equation*}
    \|g_{\bar{G}_n}(\mX ) - g_{\bar{G}_*}(\mX )\| = \mathcal{O}(\sqrt{\log(n)/n}),
\end{equation*}
or $\|g_{\widetilde{G}_n} - g_{\widetilde{G}_*}\|_{L^2(\mu)} = \mathcal{O}_P(\sqrt{\log(n)/n})$. This concludes the proof of this proposition.

\textbf{Proof of the bound in Eq.~(\ref{eqn:bracketing_entropy_bound}).} 

\textbf{Step 3: Relation between bracket entropy and covering number. }

The first step of this proof includes establishing the upper bound for the multivariate Gaussian density $p_{\widetilde{G}}(\cdot|\mX )$. Noting that the variance effect $\sigma^2$ is fixed, we have
\begin{equation*}
    p_{\widetilde{G}}(\boldsymbol{Y}|\mX ) = \dfrac{1}{(2\pi\sigma^2)^{d/2}}\exp\left(-\dfrac{\|\boldsymbol{Y}-g_{\widetilde{G}}(\mX )\|^2}{2\sigma^2}\right) \leq \dfrac{1}{(2\pi\sigma^2)^{d/2}}. 
\end{equation*}
Since the input space $\mathcal{X}$ and parameter space $\widetilde{\Theta}$ are bounded, there exists a constant $M$ such that $\|g_{\widetilde{G}}(\mX )\| \leq M$ for $\widetilde{G} \in \mathcal{G}_{H,L'}$ and $\mX  \in \mathcal{X}$. Thus, for any $\|\boldsymbol{Y}\| \geq 2M$, we have $\frac{\|\boldsymbol{Y}-g_{\widetilde{G}}(\mX )\|^2}{2\sigma^2} \geq \frac{\|\boldsymbol{Y}\|^2}{8\sigma^2}$, which leads to 
\begin{equation*}
    p(\boldsymbol{Y}|g_{\widetilde{G}}(\mX ),\sigma^2I_{\bar{d}}) = \dfrac{1}{(2\pi\sigma^2)^{d/2}}\exp\left(-\dfrac{\|\boldsymbol{Y}-g_{\widetilde{G}}(\mX )\|^2}{2\sigma^2}\right) \leq \dfrac{1}{(2\pi\sigma^2)^{d/2}}\exp\left(-\frac{\|\boldsymbol{Y}\|^2}{8\sigma^2}\right). 
\end{equation*}
Define the integrable function 
\begin{equation*}
    K(\boldsymbol{Y}|\mX ) = \begin{cases}
        (2\pi\sigma^2)^{-d/2} &\quad \text{ for } \|\boldsymbol{Y}\|\leq 2M,\\
        (2\pi\sigma^2)^{-d/2}\exp\left(-\dfrac{\|\boldsymbol{Y}\|^2}{8\sigma^2}\right)&\quad \text{ for } \|\boldsymbol{Y}\|> 2M,
    \end{cases}
\end{equation*}
then the above estimations give us $p(\boldsymbol{Y}|g_{\widetilde{G}}(\mX ),\sigma^2I_{\bar{d}}) \leq K(\boldsymbol{Y}|\mX )$ for all $\boldsymbol{Y}$ and $\mX \in \mathcal{X}$.

For $\eta < \epsilon$, consider an $\eta$-cover $\{\mu_1,\ldots,\mu_n\}$ of $\mathcal{P}_{H,L'}(\widetilde{\Theta})$ under $\ell_1$-norm such that $N:=N(\eta,\mathcal{P}_{H,L'}(\widetilde{\Theta}),\|\cdot\|_1)$. Then, the brackets of the form $[L_i(\boldsymbol{Y}|\mX ), U_i(\boldsymbol{Y}|\mX )]$, for $1\leq i \leq N$, can be constructed as 
\begin{align*}
    L_i(\boldsymbol{Y}|\mX )&:= \max\{\mu_i(\boldsymbol{Y}|\mX )-\eta,0\},\\
    U_i(\boldsymbol{Y}|\mX )&:= \max\{\mu_i(\boldsymbol{Y}|\mX )+\eta,K(\boldsymbol{Y}|\mX )\}.\\
\end{align*}
It is straightforward to check that $\mathcal{P}_{H,L'}(\widetilde{\Theta}) \subset \bigcup_{i=1}^N [L_i(\boldsymbol{Y}|\mX ),U_i(\boldsymbol{Y}|\mX )]$ and $L_i(\boldsymbol{Y}|\mX ) - U_i(\boldsymbol{Y}|\mX ) \leq \min\{\eta, K(\boldsymbol{Y}|\mX )\}$. From this, we can achieve the following upper bound
\begin{align*}
    \|U_i-L_i\|_1 &= \int_{\|\boldsymbol{Y}\|\leq 2M} |U_i(\boldsymbol{Y}|\mX )-L_i(\boldsymbol{Y}|\mX )|d(\mX ,\boldsymbol{Y}) + \int_{\|\boldsymbol{Y}\|> 2M} |U_i(\boldsymbol{Y}|\mX )-L_i(\boldsymbol{Y}|\mX )|d(\mX ,\boldsymbol{Y})\\
    &\leq K\eta + \exp\left(-\dfrac{K^2}{2\sigma^2}\right)\leq K'\eta,
\end{align*}
where $K := \max\{2M,\sqrt{8\sigma^2}\}\log(1/\eta)$, $K'$ be a positive constant. From the definition of bracket entropy, given that $H_B(K'\eta,\mathcal{P}_{H,L'}(\widetilde{\Theta}),\|\cdot\|_1)$ is the logarithm of the smallest number of bracket of size $K'\eta$ necessary to cover $\mathcal{P}_{H,L'}(\widetilde{\Theta})$, we have 
\begin{equation*}
    H_B(K'\eta,\mathcal{P}_{H,L'}(\widetilde{\Theta}),\|\cdot\|_1) \leq \log(N) = \log N(\eta,\mathcal{P}_{H,L'}(\widetilde{\Theta}), \|\cdot\|_1). 
\end{equation*}
If we can achieve the upper bound for the covering number $\log N(\eta,\mathcal{P}_{H,L'}(\widetilde{\Theta}), \|\cdot\|_1) \lesssim \log(1/\eta)$, then we achieve 
\begin{equation*}
    H_B(K'\eta,\mathcal{P}_{H,L'}(\widetilde{\Theta}),\|\cdot\|_1) \lesssim \log(1/\eta). 
\end{equation*}
By choosing $\epsilon = \epsilon/K'$, noting that Hellinger distance is upper bounded by $\ell_1$ norm, we have 
\begin{equation*}
    H_B(\epsilon,\mathcal{P}_{H,L'}(\widetilde{\Theta}),d_{H}) \leq H_B(\epsilon,\mathcal{P}_{H,L'}(\widetilde{\Theta}),\|\cdot\|_1)\lesssim \log(1/\epsilon). 
\end{equation*}

\textbf{Step 4: Bound covering number. }

Now, it is our turn to bound the covering number $N$. To do this, let $\Gamma:= \{(\pi_1,\ldots,\pi_h):\sum_{i=1}^h \pi_i = 1, \text{ and } \pi_i\geq 0\}$ and $\Delta = \{(\mB_{h,j},\mA_j): (\mB_{h,j},\mA_j) \in \Omega \}$. Given that the parameter space $\Omega$ is compact, as well as $\Gamma$ is also a compact space, there exists $\xi$-cover for $\Gamma$ and $\Delta$, which can be denoted as $\Gamma_\xi$ and $\Delta_\xi$, respectively. In addition, it is straightforward to verify that 
\begin{equation*}
    |\Gamma_{\xi}| \leq \mathcal{O}(\xi^{-(H-1)}), \quad |\Delta_{\xi}| \leq \mathcal{O}(\xi^{-(2rdHL^*)}).
\end{equation*}
For a mixing measure $G = \sum_{h=1}^H \pi_h \sum_{j=1}^L \exp(c_j)\delta_{(\mB_{h,j},\mA_{j})} \in \mathcal{G}_{H,L'}$, let $(\bar{c}_j,\bar{\mB}_{h,j},\bar{\mA}_{j}) \in \Delta_\xi$ such that $(\bar{c}_j,\bar{\mB}_{h,j},\bar{\mA}_{j})$ is the closet point to $(c_j,\mB_{h,j},\mA_{j})$ in this set w.r.t. $\|\cdot\|_2$ norm, and $(\bar{\pi}_1,\ldots,\bar{\pi}_H) \in \Gamma_\xi$ such that $(\bar{\pi}_1,\ldots,\bar{\pi}_H)$ is the closet point to $(\pi_1,\ldots,\pi_H)$ in this set (also w.r.t. $\|\cdot\|_2$ norm). We consider two mixing measures: 
\begin{align*}
    \tilde{G}:= \sum_{h=1}^{H} \pi_{h}\sum_{j=1}^L \exp(\bar{c}_j)\delta_{(\bar{\mB}_{h,j},\bar{\mA}_{j})}, \quad \bar{G} = \sum_{h=1}^H \bar{\pi}_h\sum_{j=1}^L \exp(\bar{c_j})\delta_{(\bar{\mB}_{h,j},\bar{\mA}_{j})}.
\end{align*}

For the sake of presentation, we denote 
\begin{align*}
    g_{h}(\mX ) &:= \sum_{l=1}^L \mathrm{Softmax}(\mX ^\top(\Pbm^0_{Q,h} + \sigma_2(\mB^*_{h,j})\sigma_1(\mA^*_{j}))\Pbm^0_{K,h}\mX +c_j^*)\cdot(\Pbm^0_{V,h} + \sigma_2(\mB^*_{h,j})\sigma_1(\mA^*_{j}))\mX ,\\
    \tilde{g}_{h}(\mX ) &:= \sum_{l=1}^L\mathrm{Softmax}(\mX ^\top(\Pbm^0_{Q,h} + \sigma_2(\mB^*_{h,j})\sigma_1(\mA^*_{j}))\Pbm^0_{K,h}\mX +c_j^*)\cdot(\Pbm^0_{V,h} + \sigma_2(\bar{\mB}^*_{h,j})\sigma_1(\bar{\mA}^*_{j}))\mX \\
    \bar{g}_{h}(\mX ) &:= \sum_{l=1}^L \mathrm{Softmax}(\mX ^\top(\Pbm^0_{Q,h} + \sigma_2(\bar{\mB}^*_{h,j})\sigma_1(\bar{\mA}^*_{j}))\Pbm^0_{K,h}\mX +\bar{c_j}^*)\cdot(\Pbm^0_{V,h} + \sigma_2(\bar{\mB}^*_{h,j})\sigma_1(\bar{\mA}^*_{j}))\mX ,\\  
\end{align*}
for all $h \in [H]$. We provide an upper bound for the discrepancy $\|g_{G} - g_{\tilde{G}}\|_{\infty}$ as 
\begin{align}
\label{eqn:g_and_g_bar_difference}
    \|g_{G} - g_{\tilde{G}}\|_{\infty} &\leq \sum_{h=1}^H \pi_{h} \|g_h - \bar{g}_{h}\|_{\infty} \leq \sum_{h=1}^H \|g_h-\bar{g}_h\|_{\infty} \nonumber\\
    &\leq \sum_{h=1}^H (\|g_h-\tilde{g}_h\|_{\infty} + \|\tilde{g}_h-\bar{g}_h\|_{\infty})
\end{align}

For simplicity, denote $\mathcal{K}(\mX ,\mB^*_{h,j},\mA_{j}^*) := (\Pbm^0_{V,h} + \sigma_2(\mB^*_{h,j})\sigma_1(\mA^*_{j}))\mX $. The discrepancy $\|g_h-\tilde{g}_h\|_{\infty}$ can be estimated as 
\begin{align}
\label{eqn:g_and_g_tilde_bound}
    \|g_h-\tilde{g}_h\|_{\infty} &\leq \sum_{l=1}^L \sup_{\mX  \in \mathcal{X}}|\mathrm{Softmax}(\mX ^\top(\Pbm^0_{Q,h} + \sigma_2(\mB^*_{h,j})\sigma_1(\mA^*_{j}))\Pbm^0_{K,h}\mX +c_j^*)| \cdot |\mathcal{K}(\mB^*_{h,j},\mA_{j}^*) - \mathcal{K}(\bar{\mB}^*_{h,j},\bar{\mA}_{j}^*)| \nonumber \\
    &\leq \sum_{l=1}^L \sup_{\mX  \in \mathcal{X}} |\mathcal{K}(\mX ,\mB^*_{h,j},\mA_{j}^*) - \mathcal{K}(\mX ,\bar{\mB}^*_{h,j},\bar{\mA}_{j}^*)| \nonumber \\
    &\leq \sum_{l=1}^L \sup_{\mX  \in \mathcal{X}} |(\sigma_2(\mB^*_{h,j})\sigma_1(\mA_{j}^*)\mX -\sigma_2(\bar{\mB}^*_{h,j})\sigma_1(\bar{\mA}_{j}^*))\mX | \nonumber \\
    & \lesssim \sum_{l=1}^L \sup_{\mX  \in \mathcal{X}} (\|(\mB^*_{h,j},\mA_j^*) - (\bar{\mB}^*_{h,j},\bar{\mA}_j^*)\| \cdot \|\mX \|) \nonumber \\
    &\lesssim \sum_{l=1}^L \xi \cdot B \lesssim \xi,
\end{align}
where the second last inequality holds due to the fact that the input space is bounded: $\|\Xbm\| \leq B$ for all $\mX  \in \mathcal{X}$. 

For the second term $\|\tilde{g}_h-\bar{g}_h\|_{\infty}$, denote $\mathcal{M}(\mX ,\mB^*_{h,j},\mA_{j}^*,c_j^*) := \mX ^\top(\Pbm^0_{Q,h} + \sigma_2(\mB^*_{h,j})\sigma_1(\mA^*_{j}))\Pbm^0_{K,h}\mX +c_j^*$, we bound it using the following argument: 
\begin{align}
\label{eqn:g_bar_and_g_tilde_bound}
    \|\tilde{g}_h - \bar{g}_h\|_{\infty} &\leq \sum_{l=1}^L \sup_{\mX  \in \mathcal{X}} \left|\mathrm{Softmax}(\mathcal{M}(\mX ,\mB^*_{h,j},\mA_{j}^*))- \mathrm{Softmax}(\mathcal{M}(\mX ,\bar{\mB}^*_{h,j},\bar{\mA}_{j}^*))\right|\cdot \left|\mathcal{K}(\mX ,\mB^*_{h,j},\mA^*_{j})\right|\nonumber\\
    &\lesssim \sum_{l=1}^L \sup\left|\mathrm{Softmax}(\mathcal{M}(\mX ,\mB^*_{h,j},\mA_{j}^*))- \mathrm{Softmax}(\mathcal{M}(\mX ,\bar{\mB}^*_{h,j},\bar{\mA}_{j}^*))\right| \nonumber\\
    &\lesssim \sum_{l=1}^L \|(\bar{\mB}^*_{h,j},\bar{\mA}_{j}^*)-({\mB}^*_{h,j},{\mA}_{j}^*)\| \nonumber\\
    &\lesssim \sum_{l=1}^L \xi \lesssim \xi,
\end{align}
given that the $\mX $ belongs to a compact space $\mathcal{X}$. Thus, the Eq.~(\ref{eqn:g_and_g_bar_difference}), Eq.~(\ref{eqn:g_bar_and_g_tilde_bound}), and Eq.~(\ref{eqn:g_and_g_tilde_bound}) implies that $\|g_{G}-g_{\bar{G}}\|_\infty \lesssim \xi$. In addition, we similarly can bound $\|g_{\tilde{G}}-g_{\bar{G}}\|_{\infty}$ using the following step: 
\begin{align*}
    \|g_{\tilde{G}}-g_{\bar{G}}\|_{\infty} &\leq \sum_{h=1}^H|\pi_h-\bar{\pi}_h|\cdot\left\|g_h\right\| \\
    &\lesssim \sum_{h=1}^H M \cdot |\pi_h-\bar{\pi}_h| \lesssim \xi,
\end{align*}
where the second last inequality follows from the fact that the input spaces are compact, which means $|g_h(\mX )|\leq M$ for $\mX  \in \mathcal{X}$. 

As a result, from the triangle inequality, we have 
\begin{equation*}
    \|g_{G} - g_{\bar{G}}\|_\infty \leq \|g_{G} - g_{\tilde{G}}\|_\infty + \|g_{\tilde{G}} - g_{\bar{G}}\|_{\infty} \lesssim \xi. 
\end{equation*}
Thus, noting that the Gaussian density function $f(x) = (2\pi\sigma^2)^{-d/2}\exp\left(-\|x\|^2/2\sigma^2\right)$ is a global Lipschitz function, we have 
\begin{equation*}
    \|p(\boldsymbol{Y}|g_{G}(\mX ),\sigma^2Id) - p(\boldsymbol{Y}|g_{\bar{G}}(\mX ),\sigma^2Id)\|_1 \lesssim \|g_{G}(\mX ) - g_{\bar{G}}(\mX )\|_\infty \lesssim \xi.
\end{equation*}
From the definition of covering number, we get
\begin{equation}
    \label{eqn:bound_for_covering_number}
    \log N(\eta,\mathcal{P}_{H,L'}(\widetilde{\Theta}), \|\cdot\|_1) \leq |\Gamma_{\xi}| \times |\Delta_{\xi}| \leq \mathcal{O}(\xi^{-(H-1)}) \times \mathcal{O}(\xi^{-2rdHL^*}).
\end{equation}
From Eq.~(\ref{eqn:g_bar_and_g_tilde_bound}) and Eq.~(\ref{eqn:bound_for_covering_number}), we have 
\begin{equation*}
    H_B(\xi,\mathcal{P}_{H,L'}(\widetilde{\Theta}),\|\cdot\|_1) \lesssim \log(1/\xi). 
\end{equation*}
By choosing $\xi = \epsilon/2$, we achieve that 
\begin{equation*}
    H_{B}(\epsilon,\mathcal{P}_{H,L'}(\widetilde{\Theta}), d_{H}) \lesssim \log(1/\epsilon). 
\end{equation*}
This completes our proof. 
\end{proof}

\section{Additional Experimental Details}
\label{appendix:additional_experimental_details}
\subsection{Implementation Details}
\label{sec:hyper_detail}
For vision tasks, we conduct experiments on ViT-B/16 (\cite{vaswani2017attention}) for 100 epochs. The training configuration includes 100 warmup steps, a total batch size of 64, a Low-Rank Matrix rank of 8, and an alpha value of 8. We optimize the model using the AdamW optimizer with a cosine learning rate scheduler. To select learning rate and weight decay hyperparameters, we perform a grid search over the learning rate in $\{0.001, 0.005, 0.01, 0.05, 0.1\}$ and the weight decay in $\{0.0001, 0.0005, 0.001, 0.01, 0.1\}$. For the hypernetwork used in low-rank matrix B, the input dimension is 64, the hidden dimension is 16, and the activation function is leaky-relu.

For the commonsense reasoning tasks, we conduct experiments on two LLaMA versions, LLaMA-7B with 32 Transformer layers and LLaMA-13B with 40 Transformer layers \cite{touvron2023llama}. The training configuration includes a warmup steps of 100, a total batch size of 32, a learning rate of 2e-4, and a dropout value of 0.05. The models are trained with 1 A100-GPUs for 3 epochs. The rank of Low-Rank Matrix is 32 and the alpha value is 64. We optimize the models using AdamW optimizer with a linear learning rate scheduler. In both LLaMA-7B and LLaMA-13B settings, the hypernetwork used in low-rank matrix B has input dimension of 64 and hidden dimension of 40, while the activation function is leaky-relu.

\subsection{Detail of Sample Efficiency}
\label{sec:sample_efficiency_detail}

We provide in Figure \ref{fig:detail_sample_efficiency} and \ref{fig:detail_sample_efficiency_13B} the detail of the sample efficiency problem in each commonsense reasoning dataset with LLaMA-7B setting and LLaMA-13B setting. From these figures, we observe that \our{} yield clear sample-efficiency improvements over LoRA on both 7B and 13B scales.

\begin{figure}[t] 
    \centering
    
    \begin{subfigure}{0.32\textwidth}
        \centering
        \includegraphics[width=\linewidth]{figures/llama7B_total.png}
    \end{subfigure}
    \begin{subfigure}{0.32\textwidth}
        \centering
        \includegraphics[width=\linewidth]{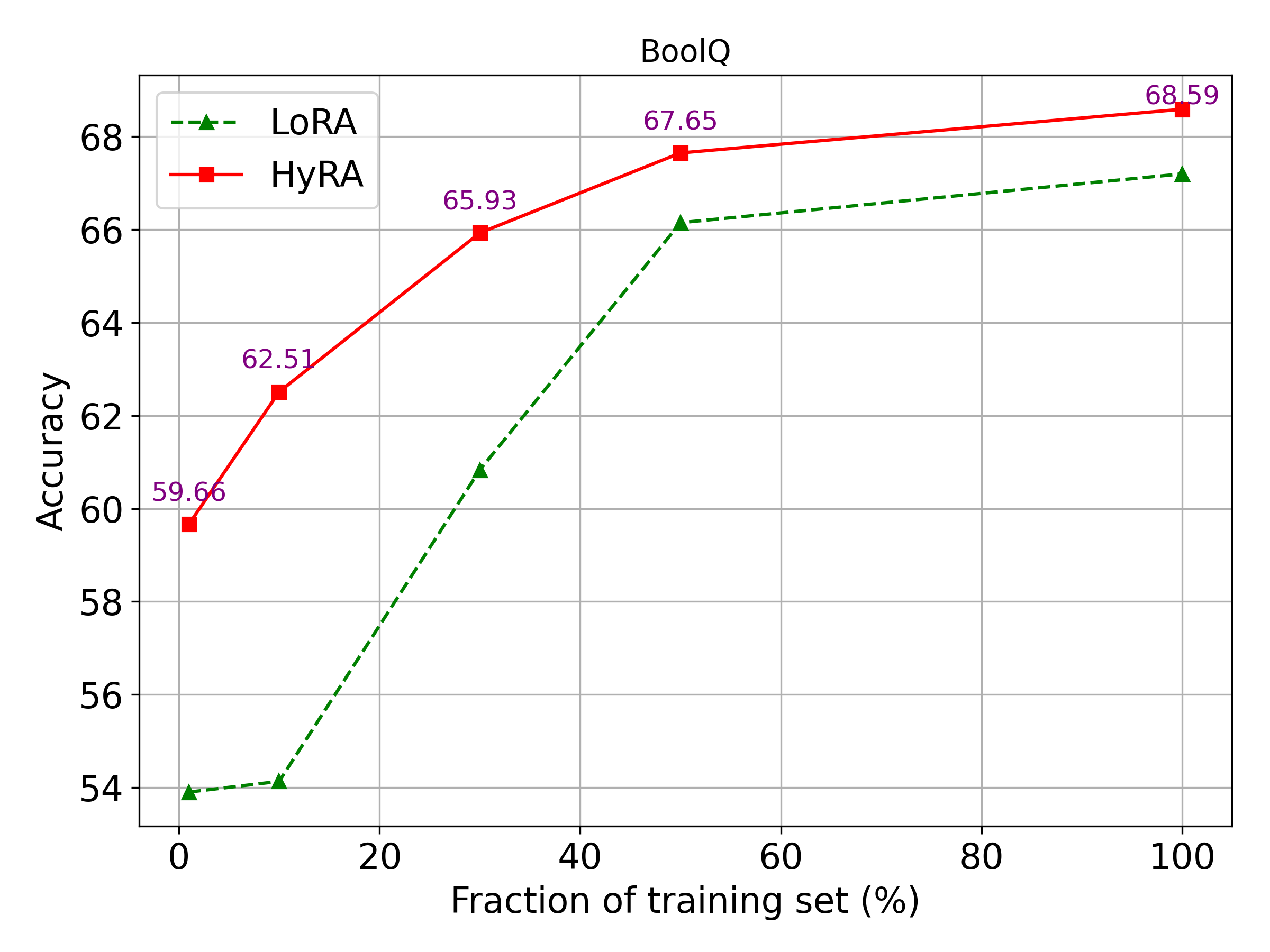}
    \end{subfigure}
    \begin{subfigure}{0.32\textwidth}
        \centering
        \includegraphics[width=\linewidth]{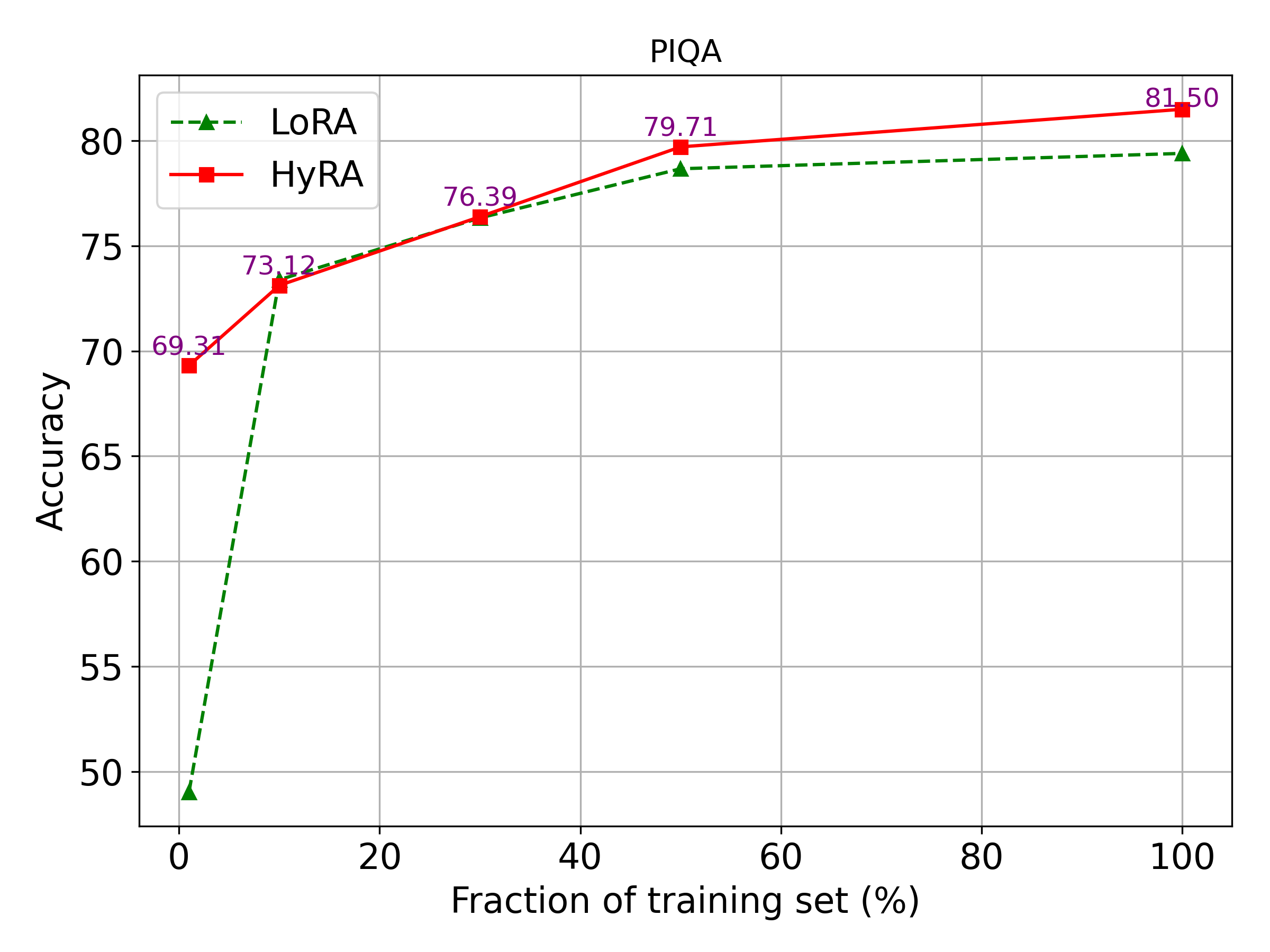}
    \end{subfigure}

    \begin{subfigure}{0.32\textwidth}
        \centering
        \includegraphics[width=\linewidth]{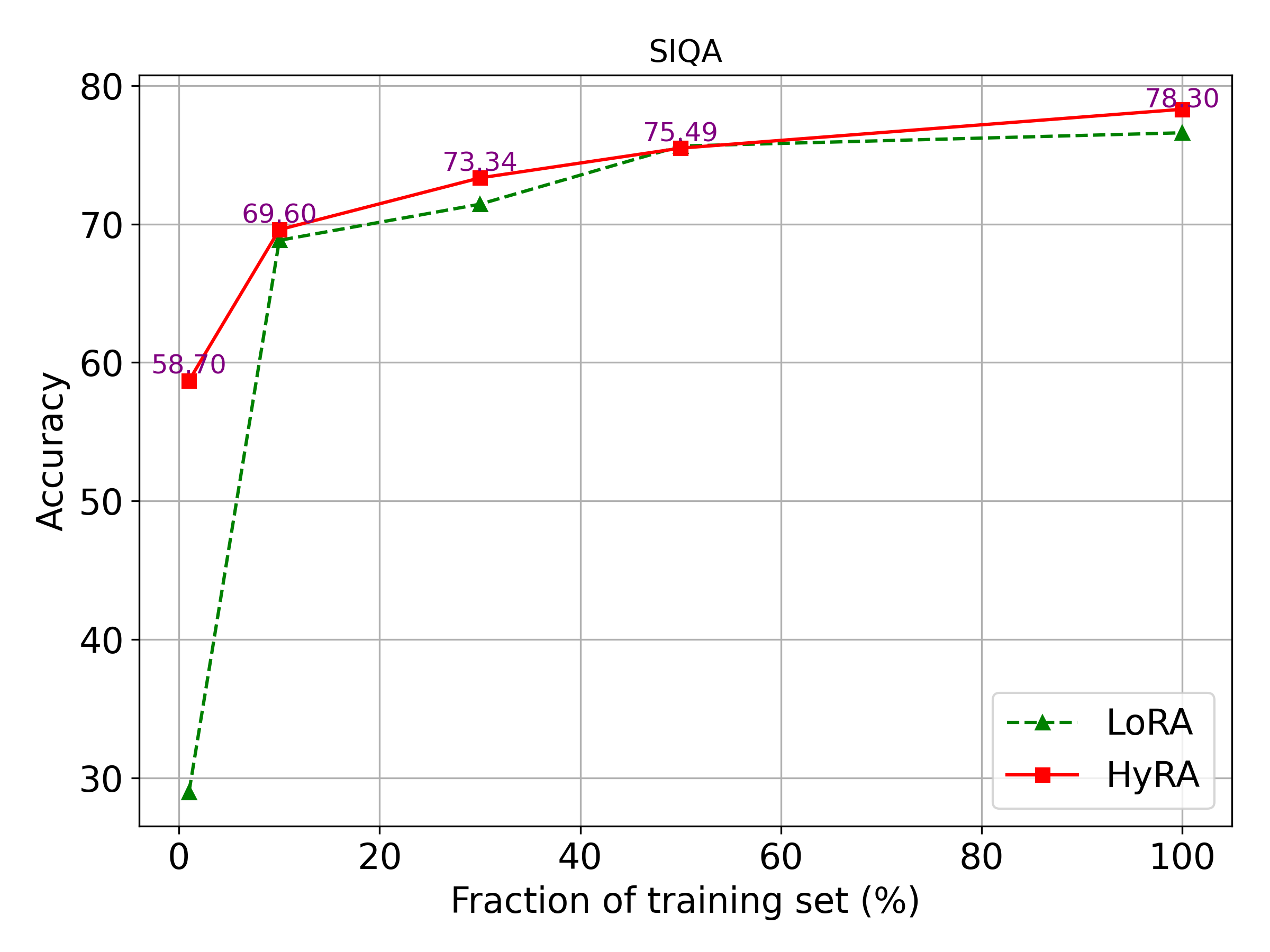}
    \end{subfigure}
    \begin{subfigure}{0.32\textwidth}
        \centering
        \includegraphics[width=\linewidth]{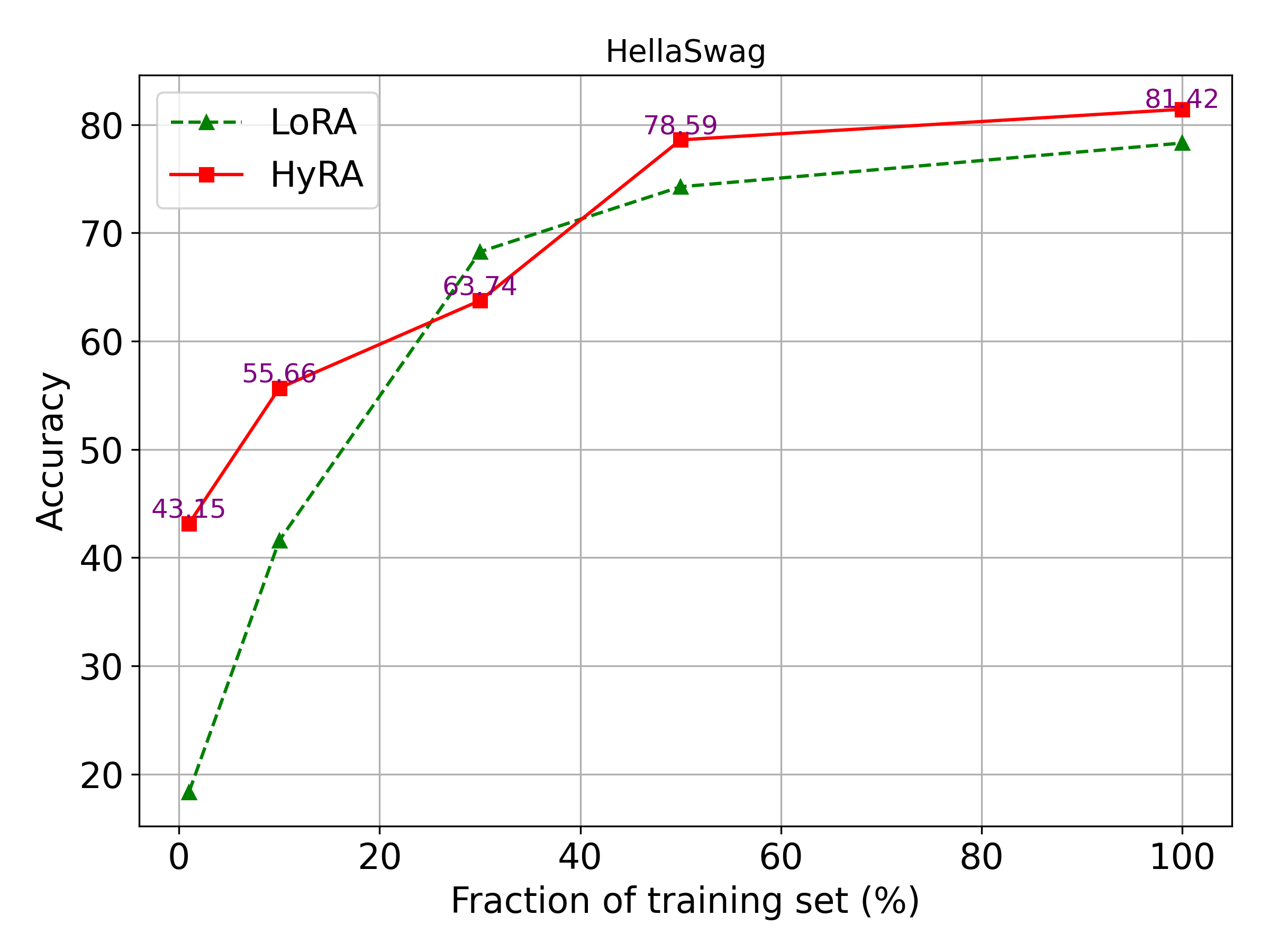}
    \end{subfigure}
    \begin{subfigure}{0.32\textwidth}
        \centering
        \includegraphics[width=\linewidth]{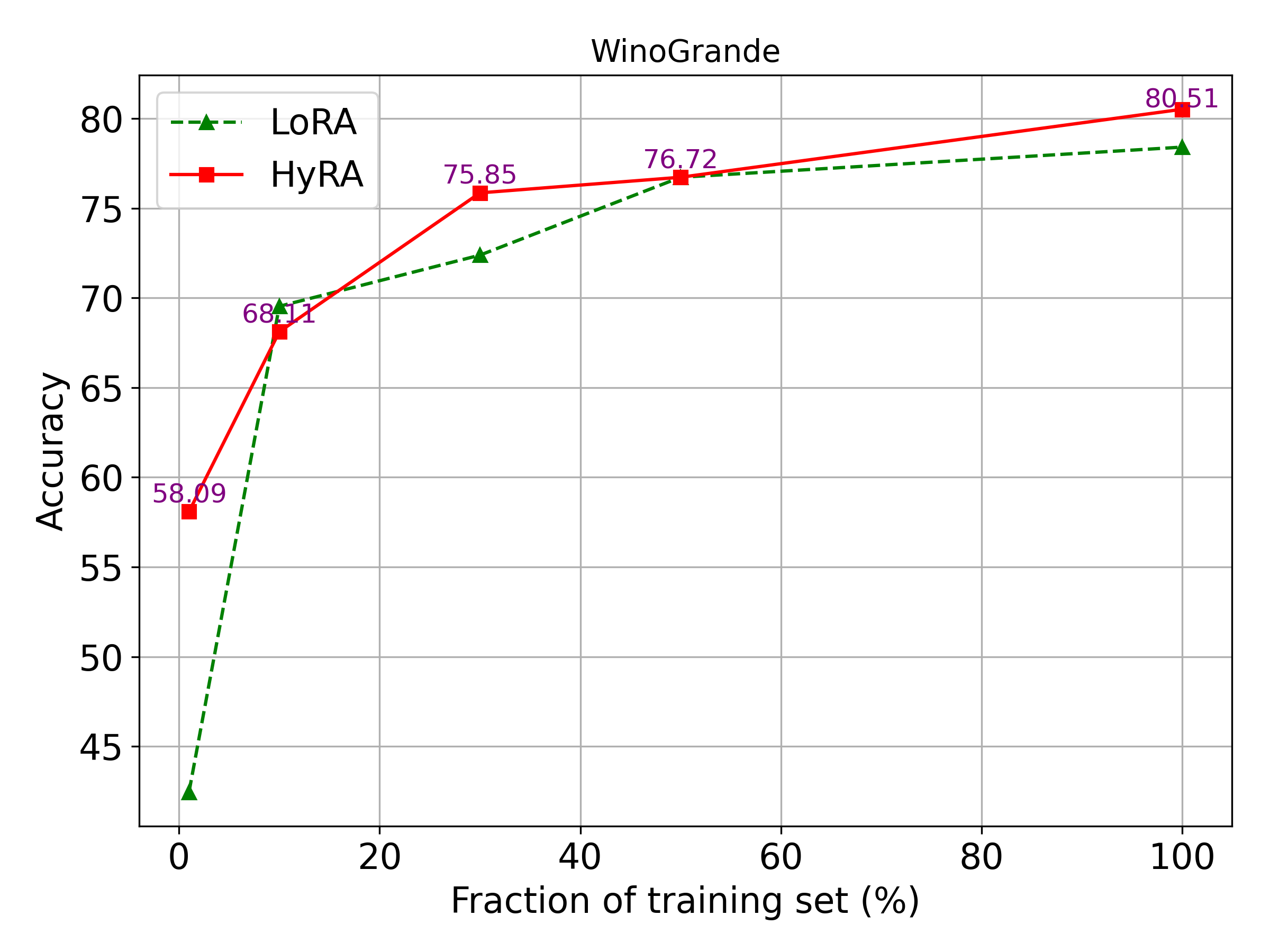}
    \end{subfigure}

    \begin{subfigure}{0.32\textwidth}
        \centering
        \includegraphics[width=\linewidth]{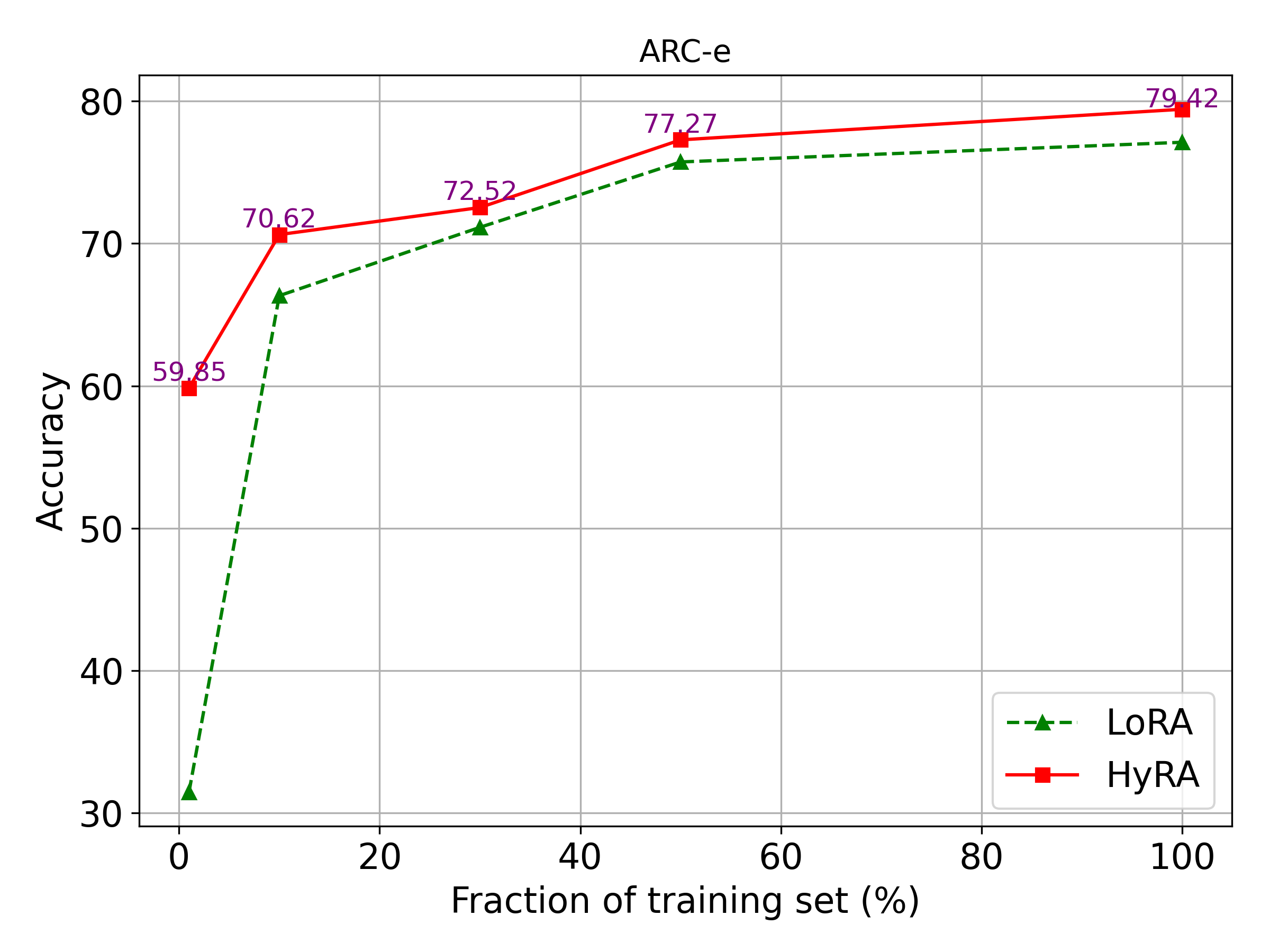}
    \end{subfigure}
    \begin{subfigure}{0.32\textwidth}
        \centering
        \includegraphics[width=\linewidth]{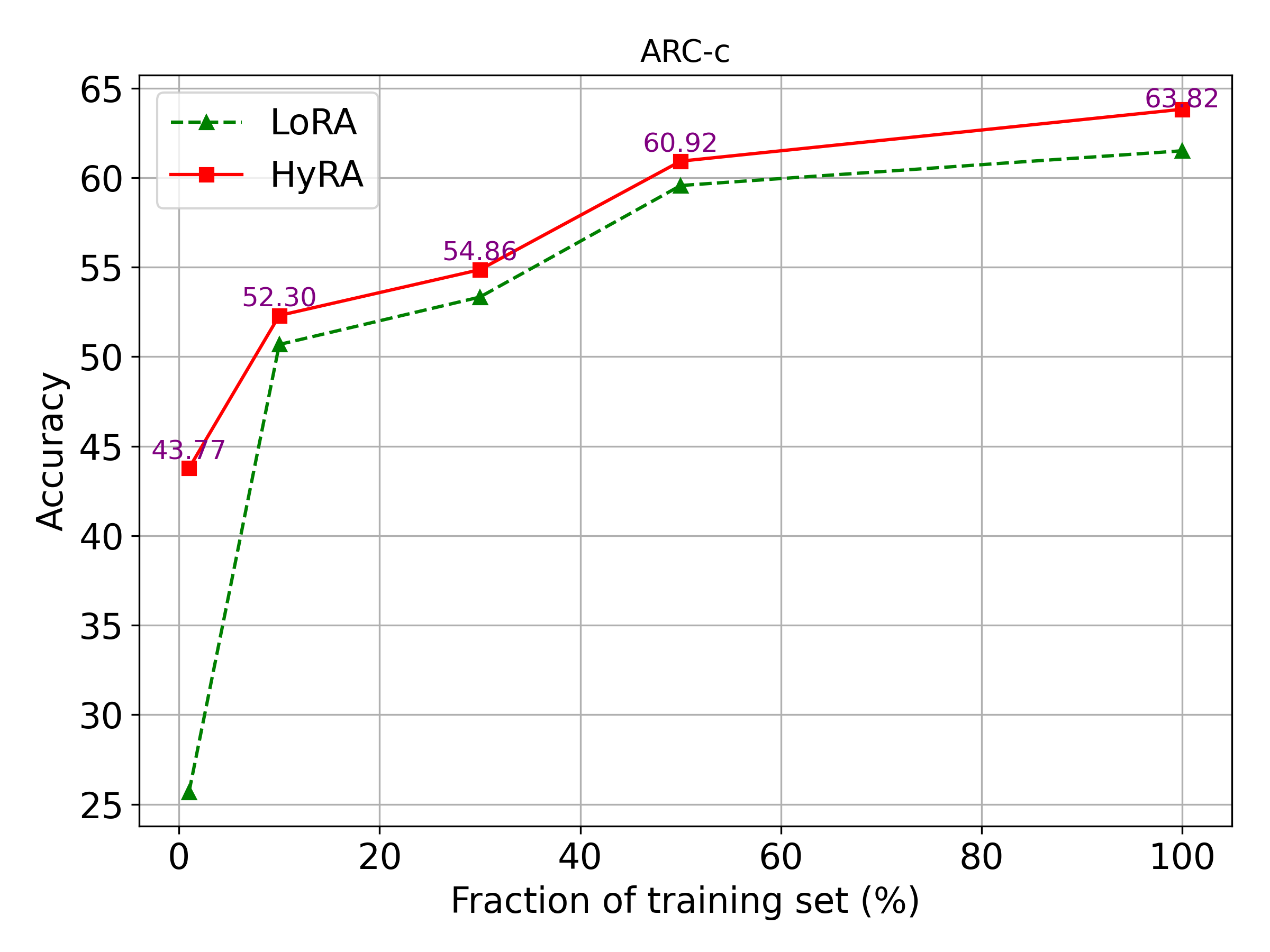}
    \end{subfigure}
    \begin{subfigure}{0.32\textwidth}
        \centering
        \includegraphics[width=\linewidth]{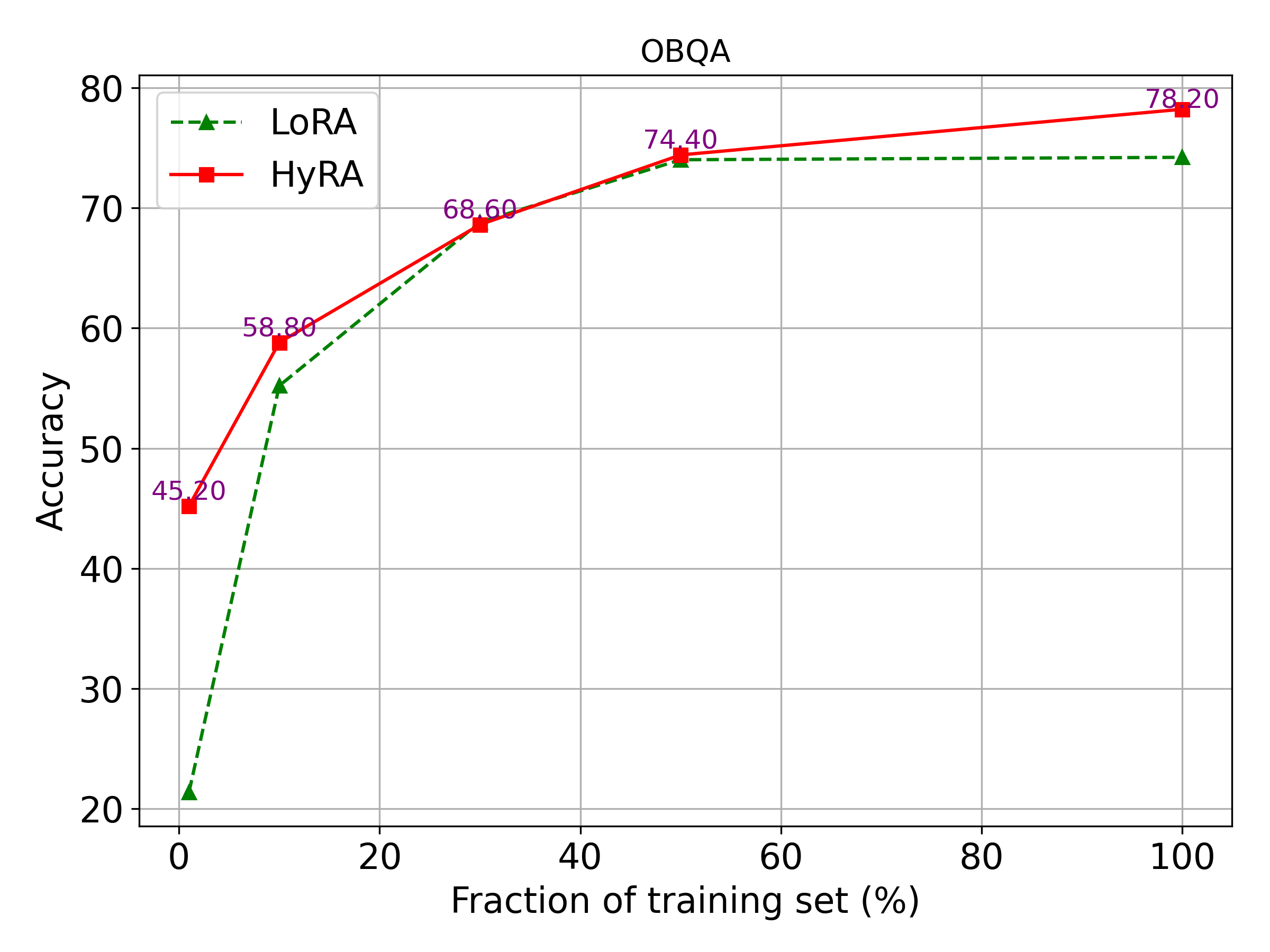}
    \end{subfigure}
    
    \caption{The detail of sample efficiency on each commonsense reasoning dataset with LLaMA-7B settings.}
    \label{fig:detail_sample_efficiency}
\end{figure}

\begin{figure}[t] 
    \centering
    
    \begin{subfigure}{0.32\textwidth}
        \centering
        \includegraphics[width=\linewidth]{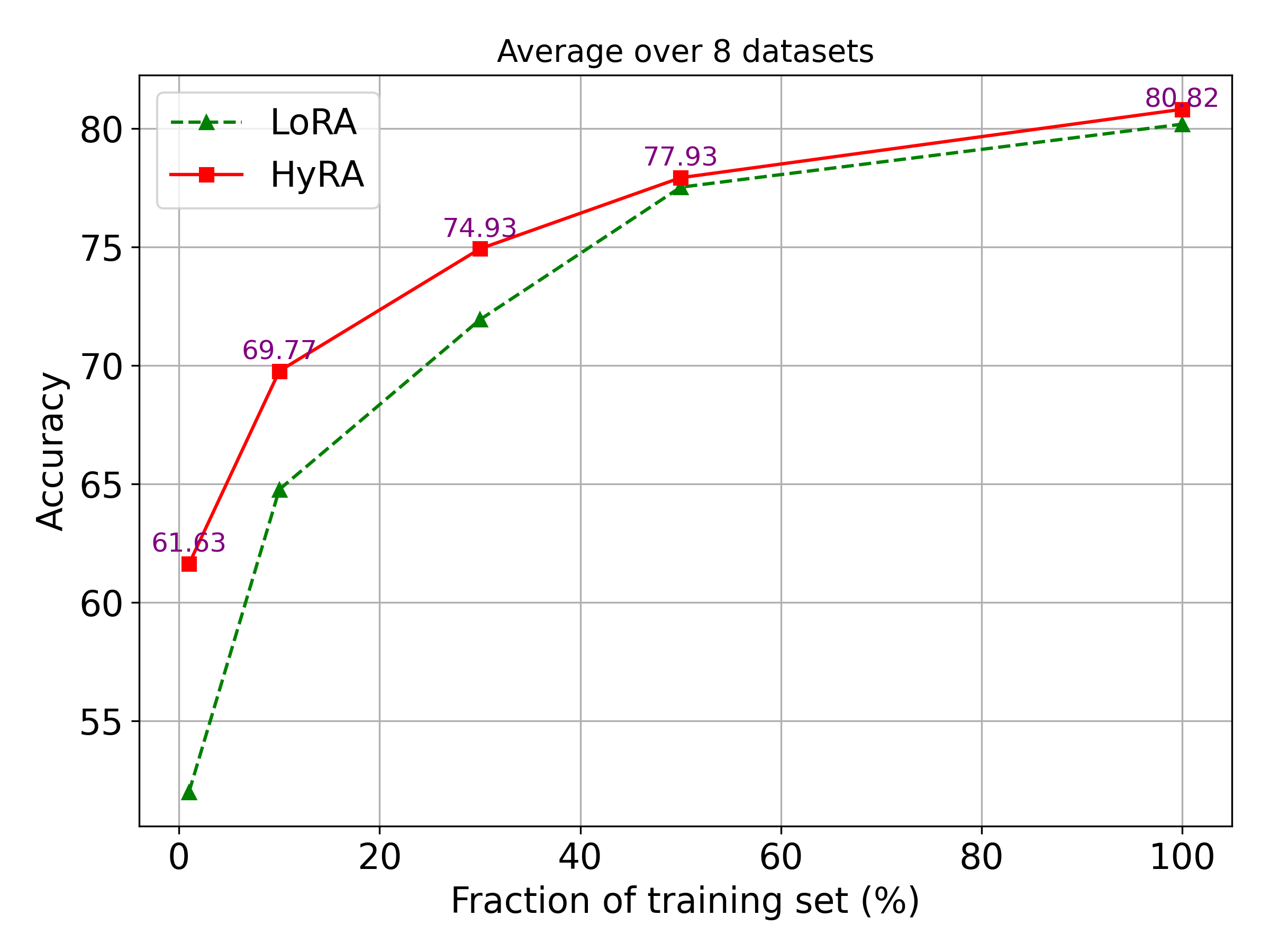}
    \end{subfigure}
    \begin{subfigure}{0.32\textwidth}
        \centering
        \includegraphics[width=\linewidth]{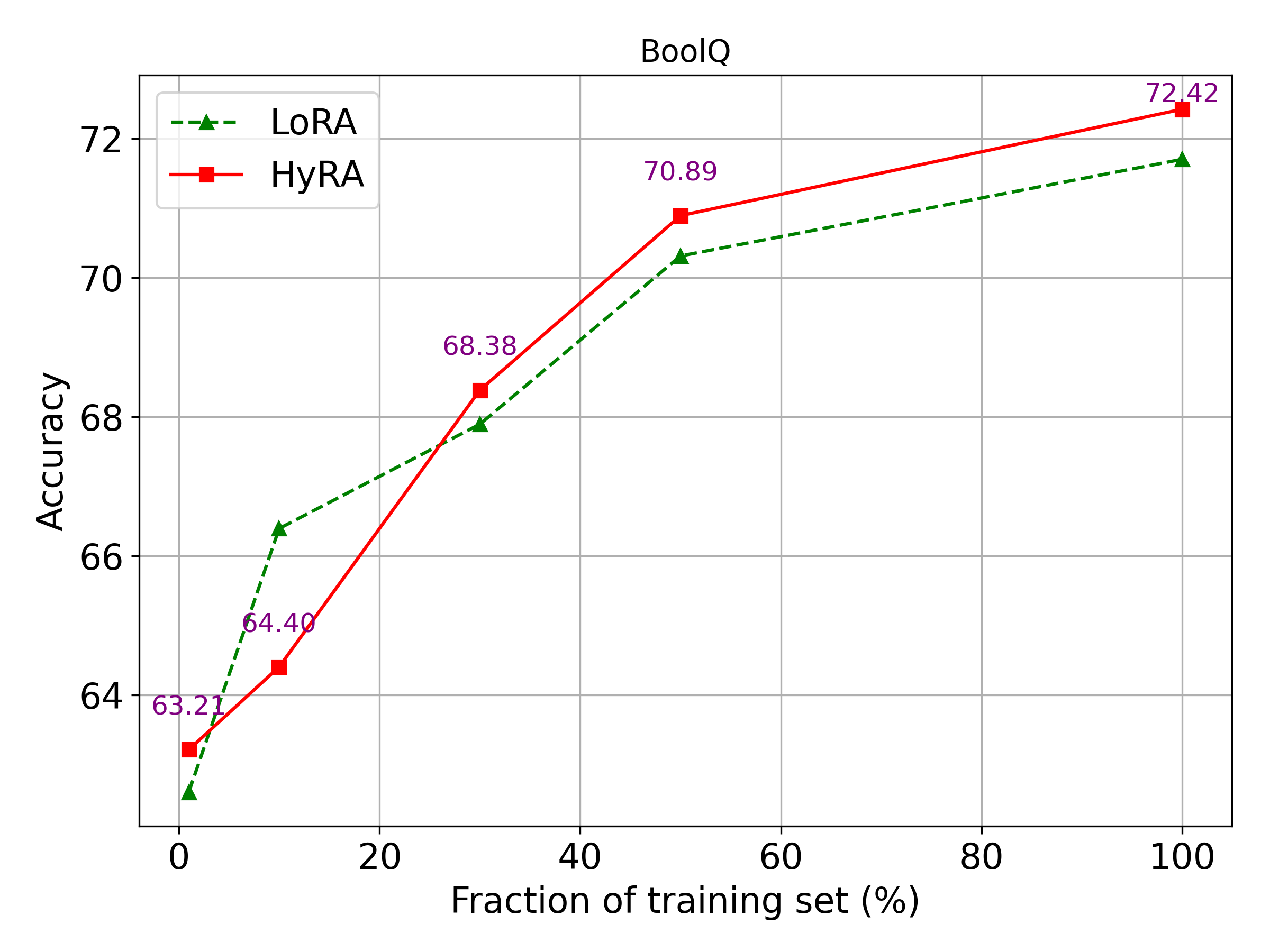}
    \end{subfigure}
    \begin{subfigure}{0.32\textwidth}
        \centering
        \includegraphics[width=\linewidth]{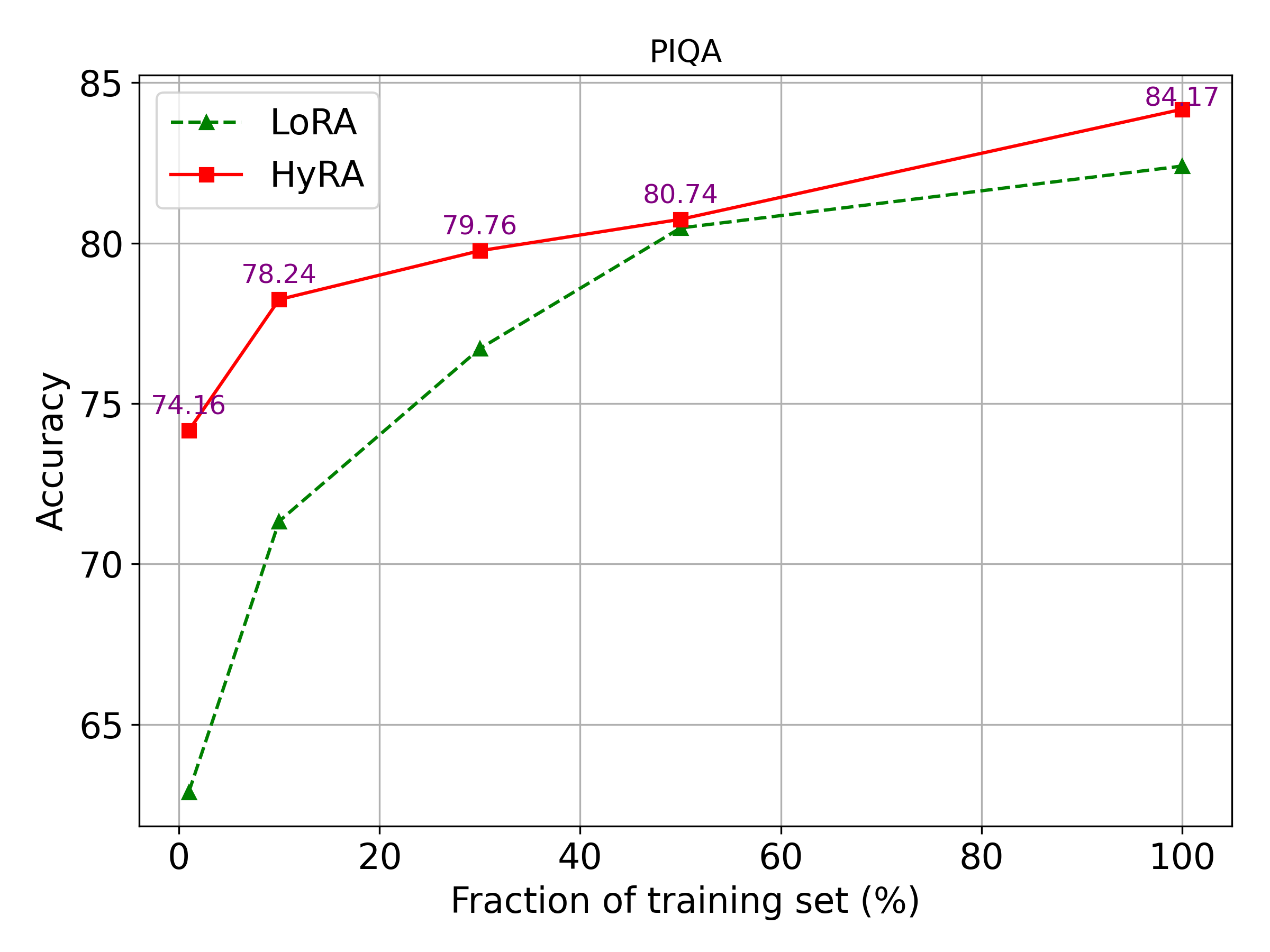}
    \end{subfigure}

    \begin{subfigure}{0.32\textwidth}
        \centering
        \includegraphics[width=\linewidth]{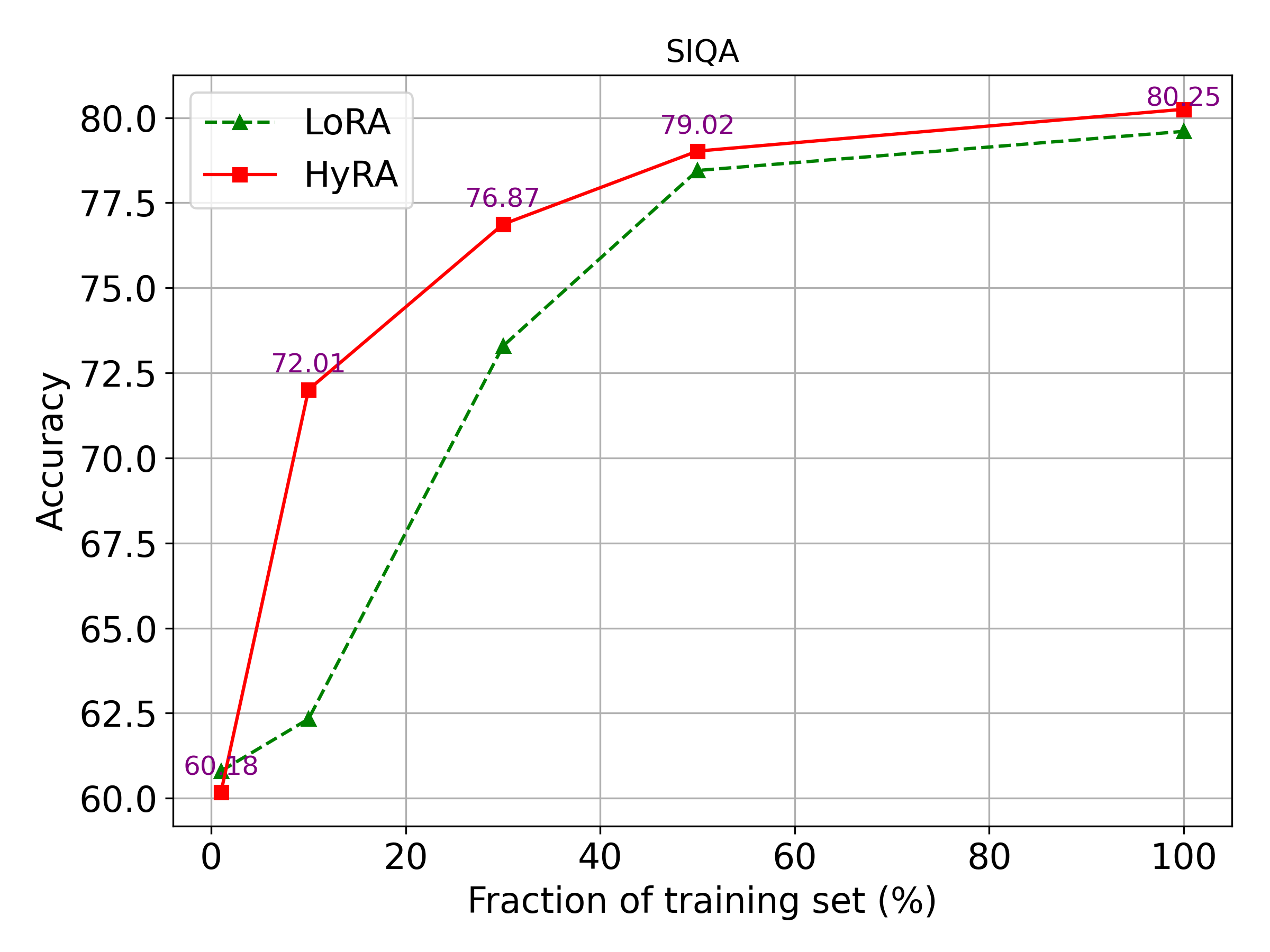}
    \end{subfigure}
    \begin{subfigure}{0.32\textwidth}
        \centering
        \includegraphics[width=\linewidth]{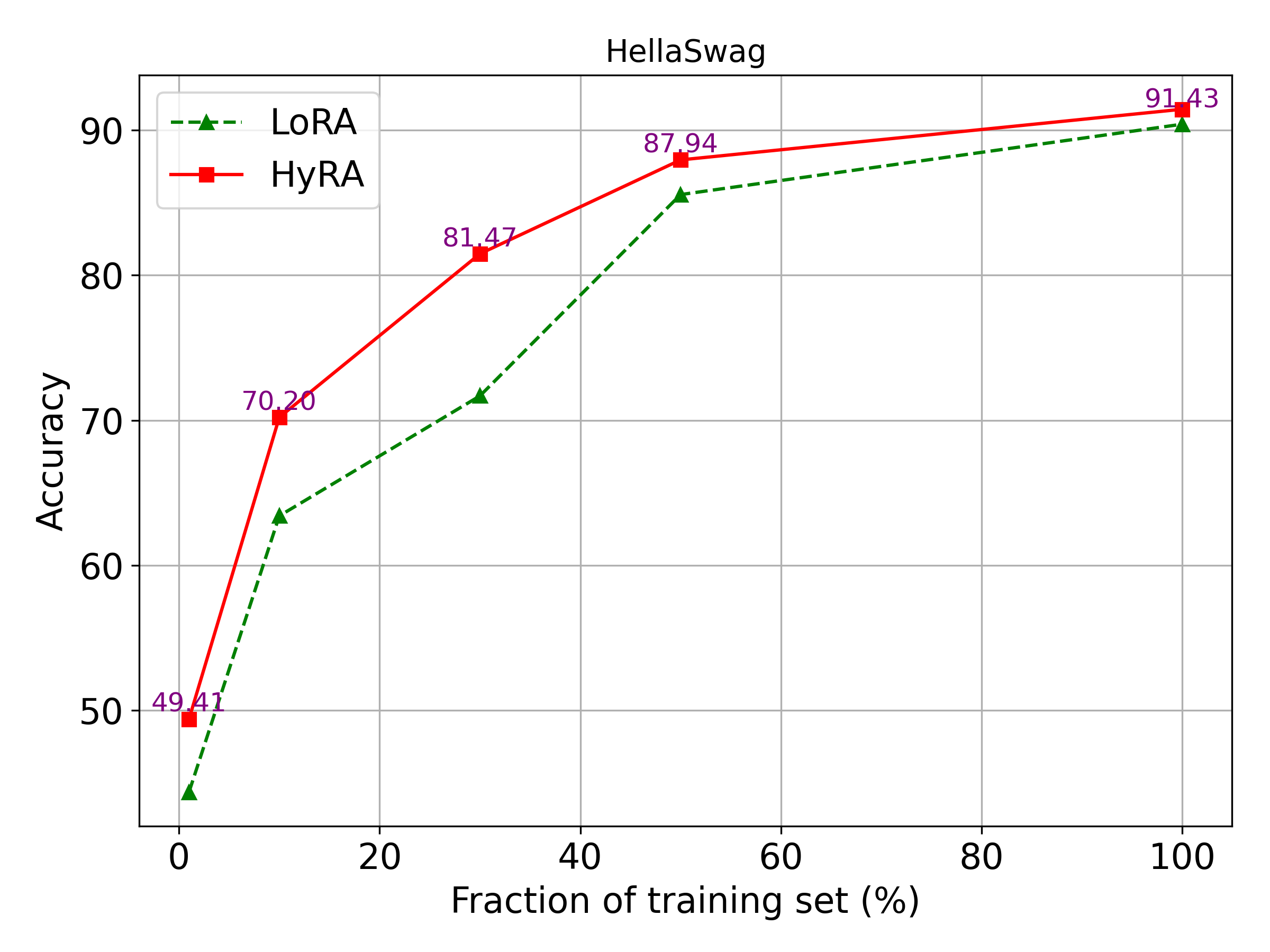}
    \end{subfigure}
    \begin{subfigure}{0.32\textwidth}
        \centering
        \includegraphics[width=\linewidth]{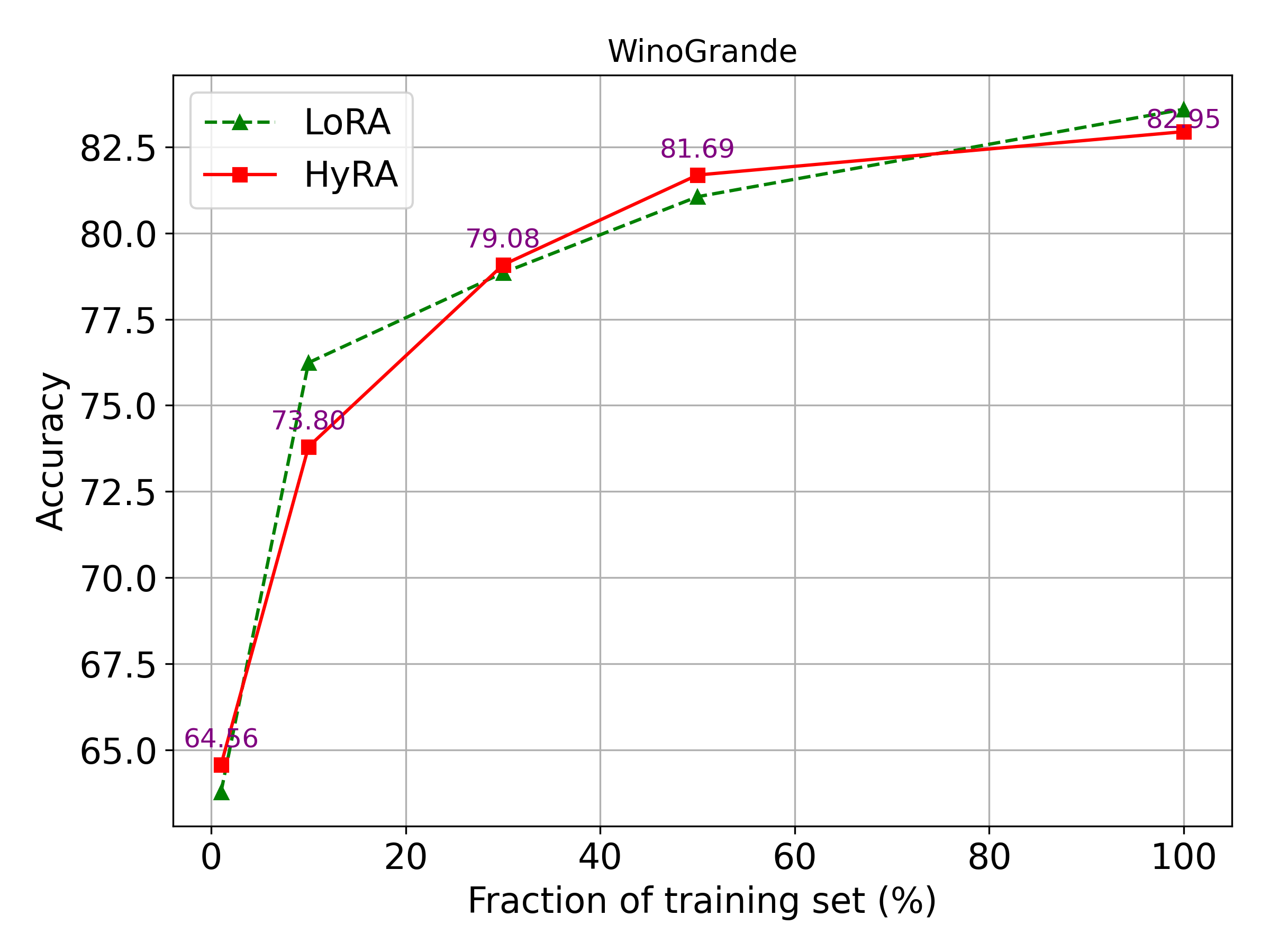}
    \end{subfigure}

    \begin{subfigure}{0.32\textwidth}
        \centering
        \includegraphics[width=\linewidth]{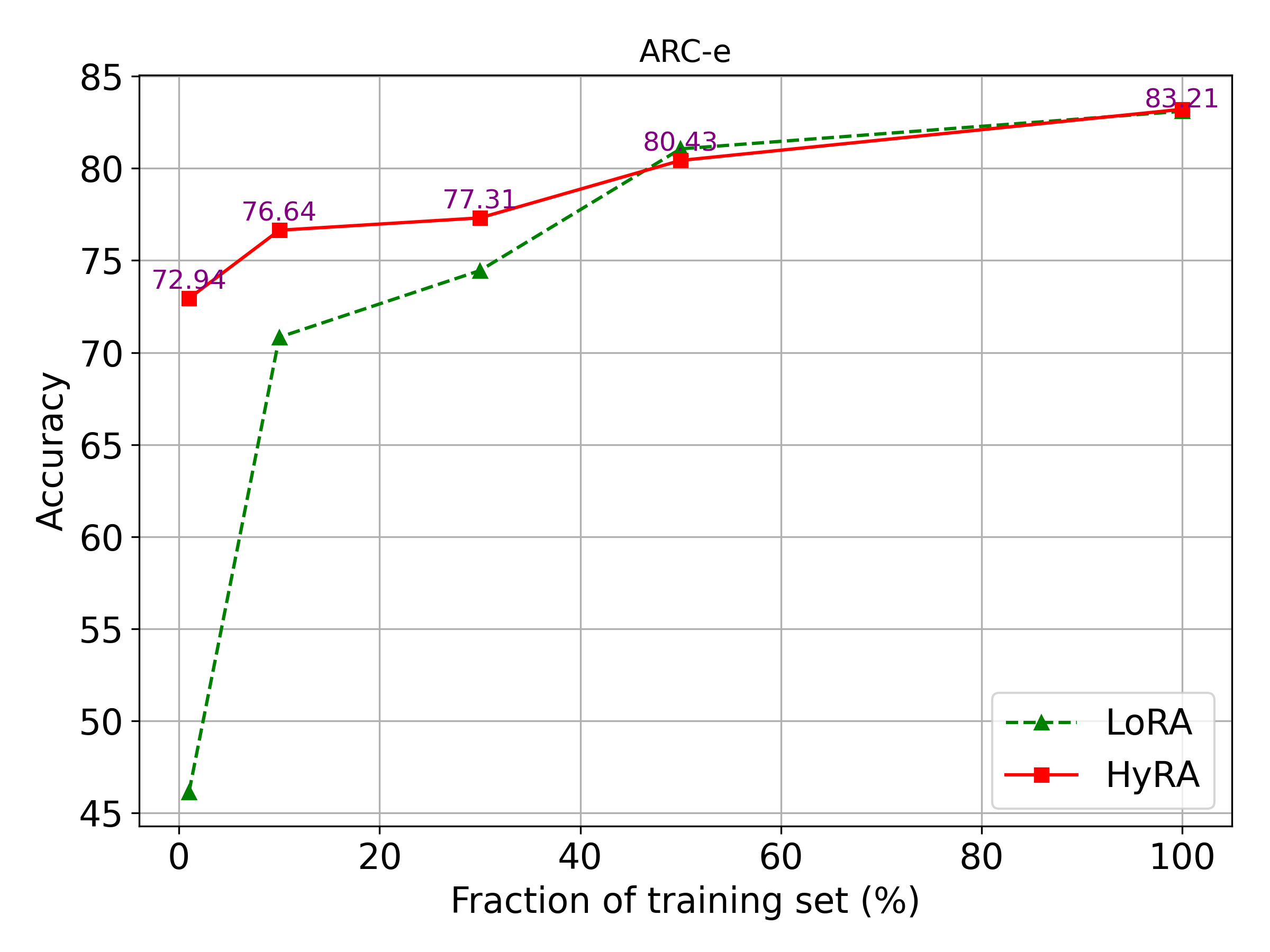}
    \end{subfigure}
    \begin{subfigure}{0.32\textwidth}
        \centering
        \includegraphics[width=\linewidth]{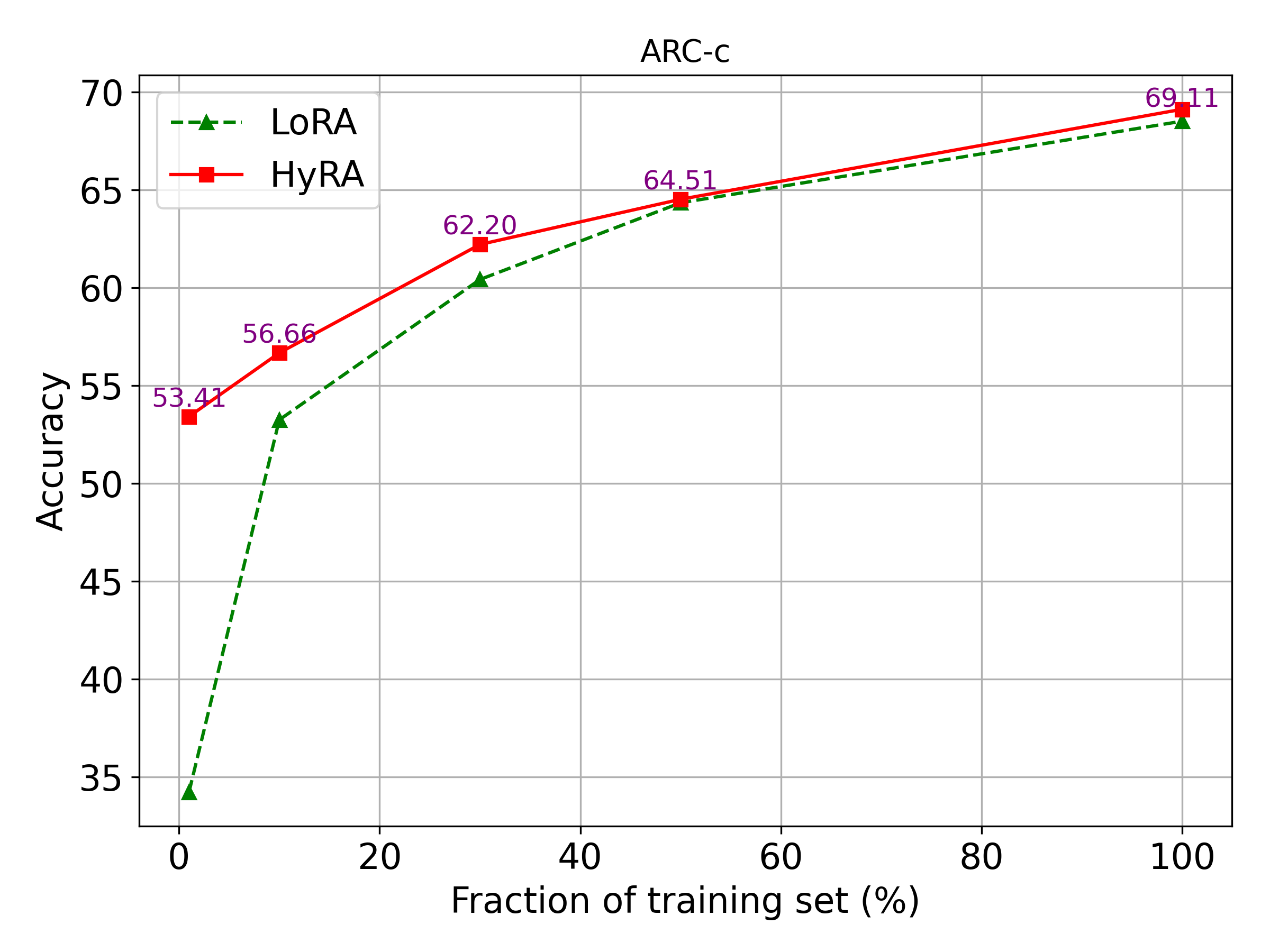}
    \end{subfigure}
    \begin{subfigure}{0.32\textwidth}
        \centering
        \includegraphics[width=\linewidth]{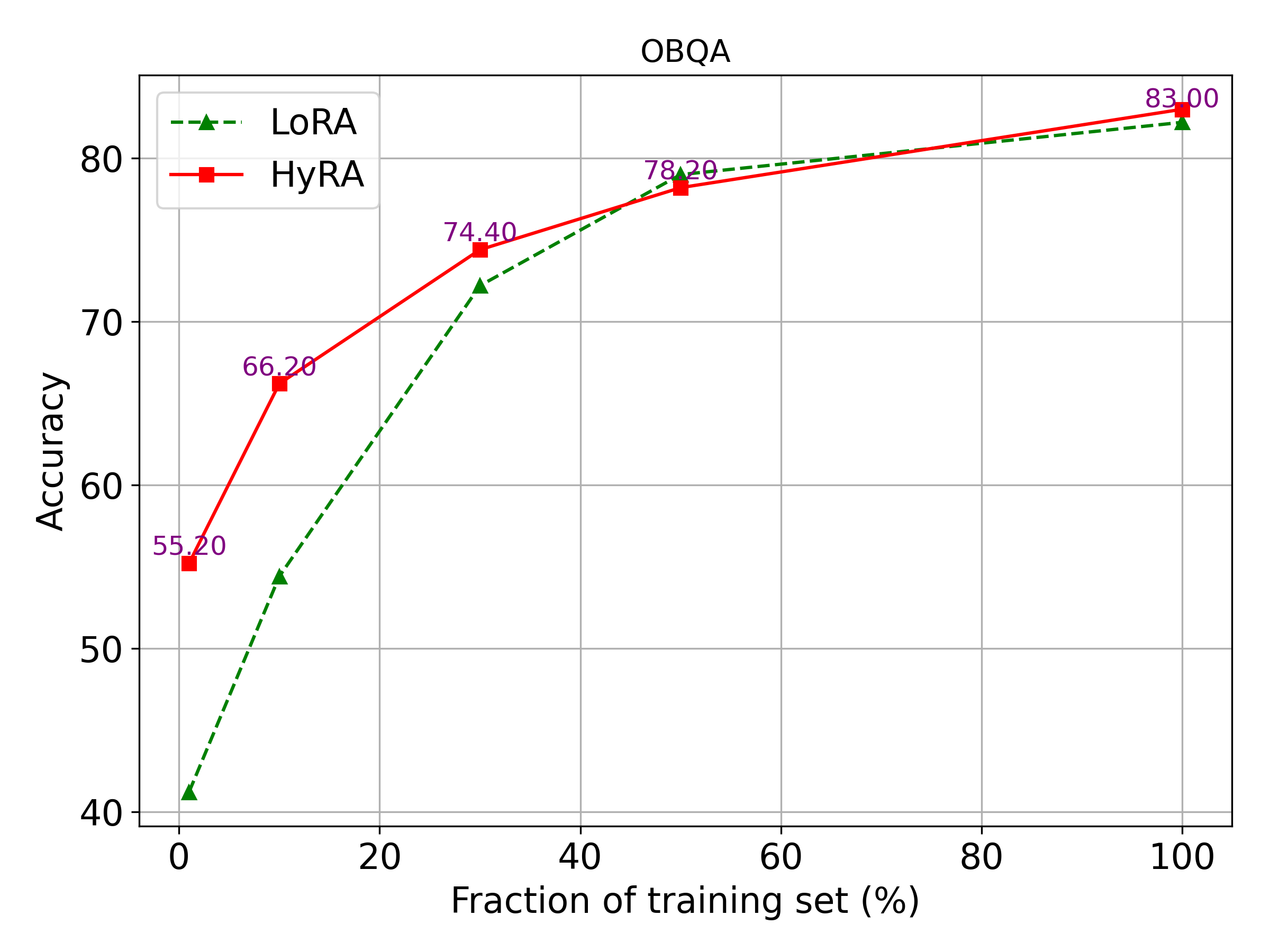}
    \end{subfigure}
    
    \caption{The detail of sample efficiency on each commonsense reasoning dataset with LLaMA-13B settings.}
    \label{fig:detail_sample_efficiency_13B}
\end{figure}

\subsection{Detail of results on VTAB-1K datasets}
\label{sec:detail_vtab}
In Table \ref{tab:imageclassification-vtab}, we provide the results of \our{} in detail for each dataset in the VTAB-1K domain. Compared to LoRA, \our{} consistently outperforms by 1-6\% percents on almost all datasets except for sNORB-ele with only a modest increase in the number of parameters, therefore suggesting the effectiveness of having shared information among the attention heads.

\begin{table}[H]
\caption{Classification accuracy on the VTAB-1K dataset}
\label{tab:imageclassification-vtab}
\resizebox{\textwidth}{!}{
\begin{tabular}{c|ccccccc|cccc|cccccccc|c}
\hline
        & \multicolumn{7}{c|}{\textbf{Natural}}          & \multicolumn{4}{c|}{\textbf{Specialized}} & \multicolumn{8}{c|}{\textbf{Structured}}                       &  \\ \hline
\rowcolor[HTML]{FFFFFF} 
\textbf{Method} &
  \rot{CIFAR100} &
  \rot{Caltech101} &
  \rot{DTD} &
  \rot{Flower102} &
  \rot{Pets} &
  \rot{SVHN} &
  \rot{Sun397} &
  \rot{Camelyon} &
  \rot{EuroSAT} &
  \rot{Resisc45} &
  \rot{Retinopathy} &
  \rot{Clevr-Count} &
  \rot{Clevr-Dist} &
  \rot{DMLab} &
  \rot{KITTI} &
  \rot{dSpr-Loc} &
  \rot{dSpr-ori} &
  \rot{sNORB-Azim} &
  \rot{sNORB-Ele} &
  AVG \\ \hline
\rowcolor[HTML]{FFFFFF} 
FFT     & 68.9 & 87.7 & 64.3 & 97.2 & 86.9 & 87.4 & 38.8 & 79.7     & 95.7     & 84.2     & 73.9     & 56.3 & 58.6 & 41.7 & 65.5 & 57.5          & 46.7 & 25.7 & 29.1 & 65.6 \\
\rowcolor[HTML]{FFFFFF} 
LoRA    & 67.1 & 91.4 & 69.4 & 98.2 & 90.4 & 85.3 & 54   & 84.9     & 95.3     & 84.4     & 73.6     & 82.9 & 69.2 & 49.8 & 78.5 & 75.7          & 47.1 & 31   & 44   & 72.2 \\
\rowcolor[HTML]{FFFFFF} 
DoRA    & 67.9 & 90.4 & 70.6 & 99   & 90.2 & 89.6 & 54.6 & 83.9     & 95.5     & 85.3     & 75.9     & 80.8 & 69.8 & 50.5 & 80.9 & \textbf{79.1} & 47.7 & 32.5 & 39.6 & 72.8 \\
\rowcolor[HTML]{FFFFFF} 
VeRA        & 61.1 & 89.1 & 70.1 & 99.1 & 89.1 & 88.7 & 53.9 & 81.7     & 96.2     & 84.9     & 75.5     & 71.7 & 57.4 & 46.6 & 74.4 & 66.9          & 47.3 & 23.6 & 30.6 & 68.8 \\
\rowcolor[HTML]{FFFFFF} 
Adapter & 69.2 & 90.1 & 68   & 98.8 & 89.9 & 82.8 & 54.3 & 84       & 94.9     & 81.9     & 75.5     & 80.9 & 65.3 & 48.6 & 78.3 & 74.8          & 48.5 & 29.9 & 41.6 & 71.4 \\
\rowcolor[HTML]{FFFFFF} 
Prefix  & 75.5 & 90.7 & 65.4 & 96.6 & 86   & 78.5 & 46.7 & 79.5     & 95.1     & 80.6     & 74       & 69.9 & 58.2 & 40.9 & 69.5 & 72.4          & 46.8 & 23.9 & 34.4 & 67.6  \\ \hline
\rowcolor[HTML]{FFFFFF} 
\textbf{\our{}} &
  70.7 &
  \textbf{92.9} &
  \textbf{72.2} &
  \textbf{99.2} &
  \textbf{91.8} &
  \textbf{89.8} &
  \textbf{55.1} &
  \textbf{86.4} &
  \textbf{96.2} &
  \textbf{87.7} &
  \textbf{76.4} &
  \textbf{83.5} &
  \textbf{70.5} &
  \textbf{55} &
  \textbf{82.6} &
  78.2 &
  \textbf{48.5} &
  \textbf{35} &
  \textbf{41.9} & \textbf{74.4}
   \\ \hline
\end{tabular}
}
\end{table}

\subsection{Ablation on Low-Rank Matrices in Query, Value, Up, and Down Projections}
\label{sec:extend_ver}
In addition to applying low-rank matrices to the query and value matrices in each layer, we further investigate whether our design on LoRA can generalize to scenarios where low-rank matrices are also incorporated into additional modules. Specifically, we extend our method to the proj\_up and proj\_down matrices, where the query and value matrices still follow our proposed design, while the proj\_up and proj\_down use the original version of low-rank matrices. As shown in Table \ref{tab:ab_13}, \our{} consistently achieves the highest performance compared to LoRA and DoRA in the LLaMA-13B setting, improving over LoRA and DoRA by an average of 1.5\% and 0.4\%, respectively. This demonstrates that our proposed method, when applied to the query and value matrices in multi-head attention layers, remains effective even when low-rank matrices are additionally applied to other modules in the model.

\begin{table}[!ht]
    \centering
    \scriptsize
    \caption{Ablation Study on Low-Rank Matrices in Query, Value, Up, and Down Weights.}
    \label{tab:ab_13}
    \resizebox{\textwidth}{!}{
    \begin{tabular}{c|c|ccccccccc|c}
    \toprule
        Model & Method & \#Params (\%) & BoolQ & PIQA & SIQA & HellaSwag & WinoGrande & ARC-e & ARC-c & OBQA & Average \\ \midrule
        ~ & LoRA &0.57& 72.11 & 83.73 & 80.5 & 90.5 & 83.74 & 82.11 & 68.09 & 82.4 & 80.4 \\ 
        LLaMA-13B & DoRA &0.58& \textbf{72.42} & 84.98 & \textbf{81.17} & 91.81 & \textbf{84.61} & 84.22 & 69.88 & 82.8 & 81.49 \\ \cmidrule{2-12}
        ~ & \our{} &0.57& 72.23 & \textbf{85.8} & 80.25 & \textbf{92.47} & 84.37 & \textbf{84.47} & \textbf{70.99} & \textbf{84.4} & \textbf{81.87} \\ \bottomrule
    \end{tabular}}
\end{table}

\subsection{Convergence curves, ratio of learnable parameters, runtime, and FLOPs comparisons}
\label{appendix:computationalcost}
In the main experiments, we have reported the ratio of learnable parameters compare the computational cost across PEFT methods. For completeness, we further provide a comparison of FLOPs and runtimes for low-rank-based approaches, including LoRA, DoRA, and \our{}, in Table \ref{tab:compute}. During training, although \our{} incorporates additional hypernetwork modules that increase FLOPs, its runtimes and parameter ratios remain largely similar to those of LoRA. Meanwhile, \our{} achieves notable accuracy gains (up to +2.5\% on the LLaMA-7B commonsense reasoning task) compared to LoRA. Moreover, at inference, since the low-rank matrices are merged into the pre-trained model, LoRA and \our{} have identical FLOPs and runtimes, making \our{} a practical and efficient approach for PEFT.

    \begin{table}[H]
        \centering
        \caption{Comparison among LoRA, DoRA, and \our{} on LLaMA-7B and LLaMA-13B settings}
        \label{tab:compute}
        \resizebox{0.8\textwidth}{!}{%
        \begin{tabular}{c|lcccc}
        \toprule
        \textbf{Model} & \textbf{Method} & \textbf{\#Params. (\%)} & \textbf{GFLOPs} & \textbf{Runtimes} & \textbf{Performance} \\ \midrule
        \multirow{3}{*}{LLaMA-7B}  & LoRA & 0.25 & 6.63  & 4.17 & 74.09 \\
                                            & DoRA & 0.25 & 25.42 & 5.22 & 74.94 \\
                                            & \textbf{\our{}} & 0.28 & 41.32 & 4.21 & 76.64 \\ \midrule
        \multirow{3}{*}{LLaMA-13B} & LoRA & 0.2  & 12.88 & 7.25 & 80.18 \\
                                            & DoRA & 0.2  & 49.58 & 7.37 & 80.05 \\
                                            & \textbf{\our{}} & 0.21 & 80.51 & 7.33 & 80.82 \\ \bottomrule
        \end{tabular}%
        }
    \end{table}%

We also compare the validation loss curves of LoRA and \our{} on the LLaMA-13B setting, as shown in Figure \ref{fig:loss-curve}. Across most of the training period, \our{} exhibits consistently lower validation loss than LoRA, indicating improved convergence behavior.

    \begin{figure}[H]
        \centering
        \includegraphics[width=0.5\textwidth]{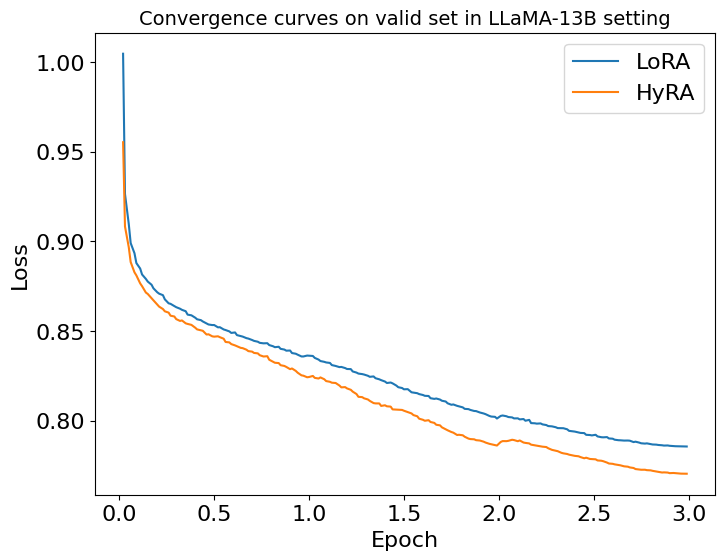}
        \caption{Validation loss curves comparing LoRA and \our{} on LLaMA-13B.}
        \label{fig:loss-curve}
    \end{figure}

\section{Use of Large Language Models} \label{appendix:llm}
In this paper, we use large language models (LLMs) solely for editorial support, including grammar refinement and spelling enhancements. We do not use LLMs for content generation, data analysis, or experimental design.

\bibliography{references}
\bibliographystyle{abbrv}
\end{document}